\newcommand{\given}{\,|\,}
\renewcommand{\vec}[1]{\ensuremath{\mathbf{#1}}}
\newcommand{\mat}[1]{\ensuremath{\mathbf{#1}}}
\DeclareMathOperator*{\argmax}{arg\,max}
\newcommand{\newSkip}{-0.1in}
\newcommand{\newSpace}{-0.1in}
\begin{document}

\title{Active Learning Methods based on Statistical Leverage Scores
}


\author{Cem Orhan         \and
        Oznur Tastan 
}


\institute{C. Orhan \at
              Department of Computer Engineering, Bilkent University, 06800 Ankara \\
              Tel.: +90-538-2038077\\
              \email{cem.orhan@yahoo.com}           
           \and
           	  O. Tastan \at
              Faculty of Natural Sciences and Engineering, Sabanci University, 34956 Istanbul \\
              Tel.: +90-216-483883\\
              \email{otastan@sabanciuniv.edu}           
}

\date{Received: date / Accepted: date}

\maketitle

\begin{abstract}
In many real-world machine learning applications, unlabeled data are abundant whereas class labels are expensive and scarce. An active learner aims to obtain a model of high accuracy with as few labeled instances as possible by effectively selecting useful examples for labeling. We propose a new selection criterion that is based on statistical leverage scores and  present two novel active learning methods based on this criterion: ALEVS for querying single example at each iteration and DBALEVS for querying a batch of examples. To assess the representativeness of the examples in the pool, ALEVS and DBALEVS use the statistical leverage scores of the kernel matrices computed on the examples of each class. Additionally, DBALEVS selects a diverse a set of examples that are highly representative but are dissimilar to already labeled examples through maximizing a submodular set function defined with the statistical leverage scores and the kernel matrix computed on the pool of the examples. The submodularity property of the set scoring function let us identify batches with a constant factor approximate to the optimal batch in an efficient manner. Our experiments on diverse datasets show that querying based on leverage scores is a powerful strategy for active learning. 
\keywords{Active learning \and Classification \and  Statistical Leverage Scores \and Submodularity} 

\end{abstract}
\section{Introduction}
\label{intro}
\vspace{-2mm}
Learning a supervised model with good predictive performance requires a sufficiently large amount of labeled data. In many real-world applications, while the unlabeled data is abundant and easily obtained,  acquiring class labels is time-consuming, costly or requires expert knowledge,  i.e. medical image analysis requires pathology experts. Additionally, labeling all data is often redundant, as some examples do not add further information to the already labeled ones. An active learner aims to learn a model using as few training examples as possible to achieve good model performance \cite{Settles2009}. Active learning has been shown to be effective in a variety of domains, including text categorization \cite{Tong2001}, computer vision \cite{kapoor2007active} and medical image analysis \cite{hoi2006batch}.

One common setting for active learning is the pool-based active learning, where a large number of unlabeled examples together with a small number of labeled examples are initially available \cite{Settles2009}. The learner interacts with an oracle (i.e., human expert) that provides labels when queried. At each step, the active learner chooses  example(s) intelligently from the unlabeled pool and request labels of these queries from the oracle. Next, the training data is augmented with the newly labeled data, and the classifier is retrained. This iterative procedure is repeated until a stopping criterion (i.e., budget constraint, desired accuracy, etc.) is met.  In the sequential mode, the active learner solicits the label for a single instance \cite{Lewis94}. In cases where the training process is hard and/or there are multiple annotators available that could work in parallel, retraining the learner at each step would be inefficient. In the batch mode active learning, the learner requests the labels of a set of examples at once at each step. In both sequential and batch mode active learning, the critical step is the proper selection of label(s) to probe. 

In this work, we focus on binary classification in a supervised learning setting for the pool-based learning scenario. We propose a new criterion to assess the representativeness of an example in the pool that is based on statistical leverage scores. We  develop two novel active learning algorithms which we shall abbreviate as ALEVS and DBALEVS.  ALEVS (\underline{A}ctive Learning by Statistical \underline{Lev}erage \underline{S}cores) is a sequential active learning algorithm that queries one example at a time. \underline{D}iverse \underline{B}atch-mode \underline{A}ctive Learning by Statistical \underline{Lev}erage \underline{S}ampling (\texttt{DBALEVS}) is a batch mode alogrithm which selects not only a batch of examples that are influential in the class but also a set that is diverse with respect to the already labeled examples and to the other examples in the batch. We achieve this by encoding these properties in a set function that is submodular and monotonically non-decreasing; therefore, we can utilize a greedy submodular maximization algorithm that is provably near-optimal. 
 
 \vspace{-5mm}
\section{ Problem set up and Notations}
\label{sec:oroblemsetup}
\vspace{-2mm}
To explore the use of leverage scores for querying examples, we focus on supervised learning binary classification in the pool-based active learning scenario, where a small number of labeled examples are provided along with a large pool of unlabeled examples. The objective is to learn an accurate classifier, $h: \mathcal{X} \rightarrow \mathcal{Y}$, where $\mathcal{X}$ denotes the instance space and $\mathcal{Y}$ is the set of class labels. We denote the training data with $ \mathcal{D}= \{ (\vec{x}_i, y_i)\}_{i=1}^{n}$wherein $i$-the example's feature vector is denoted by $\vec{x}_i \in \mathbb{R}^{d}$ and the class label with $y_i \in \{-1, 1\}$. We propose solutions for the sequential learning and the batch mode setting based on leverage scores.

In the sequential learning scenario, the active learner iteratively selects one example from the unlabeled pool and queries its label. We denote the queried example with $q$ and its feature vector with $\vec{x}_\mathrm{q}$, The labeling oracle, $\mathcal{O}$, upon receiving the labeling request for $\vec{x}_\mathrm{q}$ responds with the true label $y_\mathrm{q}$. We assume uniform cost for labeling across examples. We denote the labeled set of training examples at iteration $t$ with $\mathcal{D}_\mathrm{l}^t$ and the set of unlabeled examples with $\mathcal{D}_\mathrm{u}^t$ while $\mathcal{D}_\mathrm{l}^t$ comprises labeled $(\mat{x}_i, y_i)$ pairs, $\mathcal{D}_\mathrm{u}^t$ include only $\mat{x}_i$. 

The objective is to attain a good accuracy classifier $h^*$ by minimizing the number of examples queried  thereby reducing the labeling cost. In the batch mode active learning, instead of selecting a single example at each iteration, batches of size $b$ are sequentially picked where $b$ is specified \textit{a priori} by the user. We will refer the set of examples queried at iteration $t$ with the set $S_q^t$. Notations used throughout the paper is provided in Appendix \ref{app:notation} Table ~\ref{tbl:notations}.

The rest of this article is organized as follows.  In the following section, we briefly review the related work. In Section \ref{sec:lev}, we introduce statistical leverage scores, discuss their use in the literature and present the idea of querying based on statistical leverage scores. In Section \ref{sec:alevs}, \texttt{ALEVS} presents active learning method for querying  a single example at each active learning while Section \ref{sec:dbalevs} presents the batch mode active learning algorithm \texttt{DBALEVS}. Experimental results are reported in Sections \ref{sec:alevsresults} and in \ref{sec:dbalevsresults}. Lastly, Section \ref{sec:conclusion} concludes.

\vspace{-4mm}
\section{Related Work}
\label{sec:rw}
\vspace{-2mm}
%

For selecting a query, there are two main approaches proposed in the literature \cite{Settles2009}. The first one is to query examples based on their informativeness. Uncertainty sampling, in which the learner queries the example with the most uncertain class label, is one of the most used such methods \cite{Lewis94}. The uncertainty can be assessed by the distance to the decision boundary \cite{Tong2001}, through label entropy \cite{Lewis94} or by the disagreement of the ensemble of classifiers trained with the current label set \cite{seung1992query,Freund1997}. One common drawback for these algorithms, particularly at early iterations, is that the classifier is uncertain about many points and the decision boundary formed with the classifier is not reliable as it relies on a limited set of examples available for training \cite{roy2001}. Furthermore, these approaches introduce a sampling bias, and the methods fail to exploit the unlabeled data distribution \cite{dasgupta2011two}. Others choose the most informative example
that minimizes the model variance  \cite{mackay1992information}. To assess the informativeness of an example, \cite{yu:icml06expdesign} extends the classical experimental design to active learning and aims at finding examples that will lead to best predictions.The second set of approaches select instances that are representative of the data distribution \cite{xu2003representative,nguyen2004active,xu2007incorporating,dasgupta2008hierarchical}. These algorithms’ success heavily depends on the clustering algorithm employed.  There are also hybrid approaches that combine the representative strategies in a single framework. \cite{settles2008analysis} uses a weighting strategy that incorporates the similarity of the example to the other points based on its informativeness. A similar method, using density and entropy, is applied to a text classification problem  \cite{zhu:2008}.  QUIRE \cite{quire} optimizes an objective function wherein both the informativeness and representativeness of the examples are considered simultaneously.

Adapting the single query selection to batch mode setting by simply choosing the top $b$ examples, where $b$ is the number of elements in batch, does not account for the fact that there can be redundant information among the selected set of examples. Several batch mode active learning strategies have been proposed \cite{xu2007incorporating,zhu:2008,guo2008discriminative,hoi2009batch,guo2010active,chattopadhyay2013batch,gu2014batch,yang2015multi,wang2015querying}.
Among these, there are methods that directly optimize an objective function that represents a good quality batch. The method introduced in \cite{zhu:2008} selects top $b$ examples that satisfies an objective function combining density and entropy. Guo and Schuurmans \cite{guo2008discriminative}  select a batch of examples which achieves the best discriminative classification performance. For assembling a good batch, Guo \cite{guo2010active} selects instances that maximize the mutual information between labeled and unlabeled examples. In the work   
 \cite{yang2015multi}, the most uncertain and representative queries are selected by minimizing the empirical risk. In the batch mode setting, selecting a diverse set of examples is critical. Brinker et al. \cite{brinker2003incorporating} selects a diverse batch  using SVMs, where the diversity is measured as the angle between the hyperplane induced by the currently selected point and the hyperplanes  induced  by the previously selected points. \cite{hoi2006batch} proposes a framework that minimizes the Fisher information and solves this optimization problem using the submodular properties of the set selection function. Chen and Krause \cite{chen2013near} similarly employ submodular optimization and their approach asks for the batch with the maximum marginal gain.

\vspace{-4mm}
\section{Statistical Leverage Scores and Our Motivation}
\label{sec:lev}
\vspace{-2mm}
Statistical leverage scores of a matrix $\mat{A}$ are the squared row-norms of the matrix containing its (top) left singular vectors.    For a symmetric positive semi-definite (SPSD) matrix, the statistical leverage scores relative to the best rank-$k$ approximation to the input matrix are defined as follows \normalfont{\cite{gittens2013}}:

\begin{definition}[Leverage scores for an SPSD matrix] \label{def:lev}
Let $\mat{A}$, an arbitrary $m\times m$ SPSD matrix with the eigenvalue decomposition $\mat{A}=\mat{U}\mat{\Sigma} \mat{U}^\mathrm{T}$. $\mat{U}$ can be partitioned as   $\mat{U} =  (\mat{U}_1 \quad \mat{U}_2 )$
where $\mat{U}_1$ comprises $k$ orthonormal columns spanning the top $k$-dimensional eigenspace of $\mat{A}$. Let $\lambda_1(\mat{A}) \geq \lambda_2(\mat{A}) \geq \dots \geq \lambda_m(\mat{A})$ be the eigenvalues of $\mat{A}$ ranked in descending order. Given $\mat{A}$ and a rank parameter $k$, the statistical leverage scores of \mat{A} relative to the best rank-$k$ approximation to $\mat{A}$ is equal to the squared Euclidean norms of the rows of the $m \times k$ matrix $\mat{U}_1$:
\begin{eqnarray}
	\label{eq:levscore}
	&\ell_i := \|({\mat{U}}_1)_{(i)}\|_2^2
\end{eqnarray}
\end{definition}
for $ i \in \{1, \ldots, m\}$, where $\ell_i \in [0,1]$, and $\sum_{i=1}^{m}\ell_i = k$.

 Intuitively, leverage scores determine which columns (or rows) are most representative with respect to a rank-$k$ subspace of $\mat{A}$. They are most recently used in low-rank matrix approximation algorithms to identify influential columns of the input matrix \cite{dkm2006,gittens2013,papailiopoulos2014provable,mahoney2009cur,yang2015,wang2015provably,wang2015empirical}. Mahoney et al. \cite{dkm2006,yang2015} show that in a low-rank matrix approximation task, the column subset selection is improved if the columns of the matrices are sampled based on a probability distribution weighted by the leverage scores of the columns.  Along with these randomized algorithms, Papailiopoulos et al. \cite{papailiopoulos2014provable} demonstrate that deterministically selecting a subset of the matrix columns with the largest leverage scores results in a good low-rank matrix approximation. In another work, CUR decomposition is improved with the use statistical leverage scores \cite{mahoney2009cur}. Gittens and Mahoney \cite{gittens2013} analyze different Nystr\"om sampling strategies for symmetric positive semi-definite (SPSD) matrices and show that sampling based on leverage scores is quite effective.

\vspace{-1mm}
\begin{figure*}[]
	\centering
	\vskip \newSkip
	\begin{subfigure}[b]{0.2\textwidth}
		\caption{$\mat{A}$}
		\vspace{\newSpace}
		\includegraphics[width=0.6\textwidth]{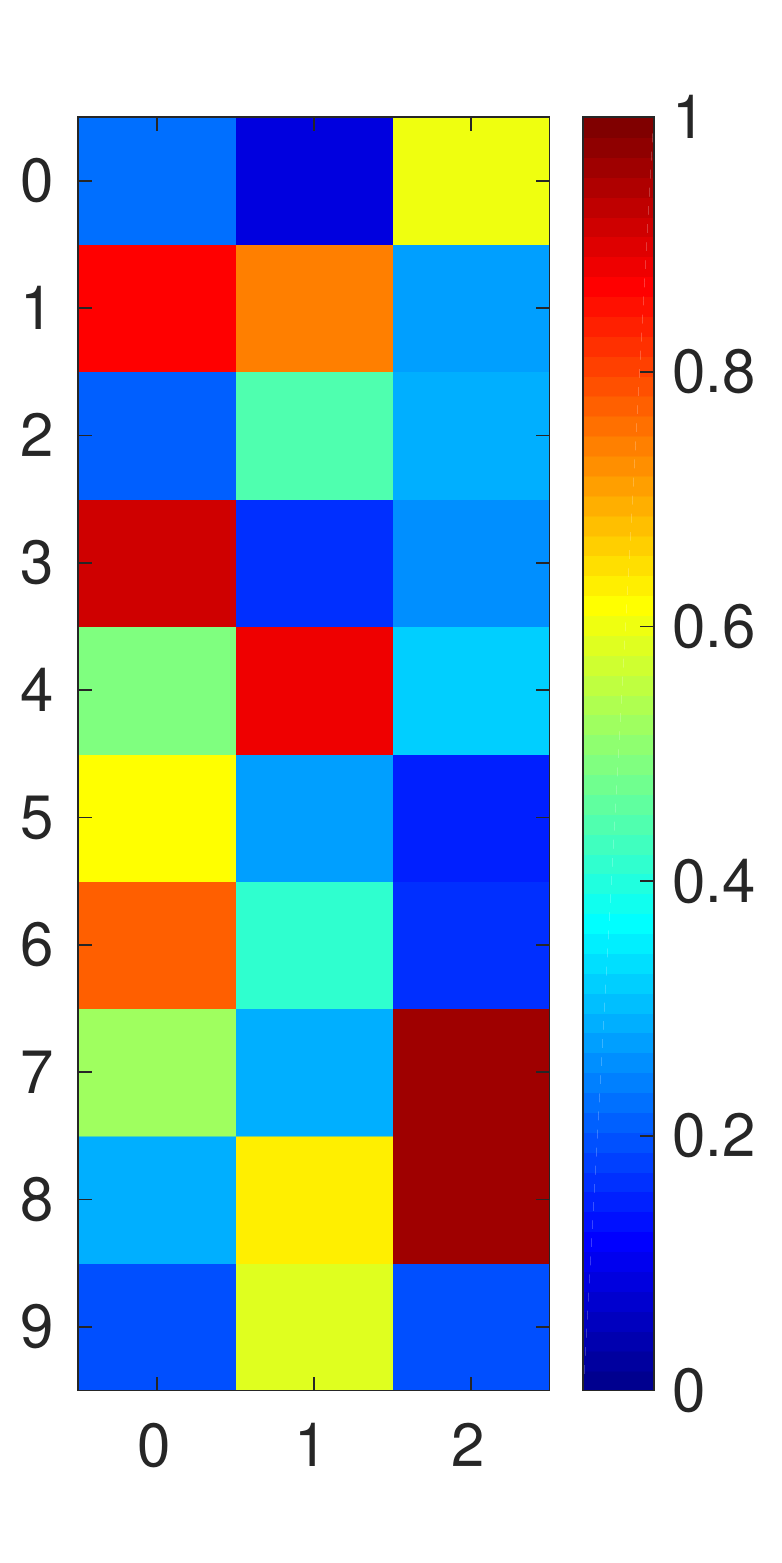}
		\label{A}
	\end{subfigure}
	~
	\begin{subfigure}[b]{0.4\textwidth}
		\caption{$\mat{K}_{\mat{A}}$, linear}
		\vspace{\newSpace}
		\includegraphics[width=0.6\textwidth]{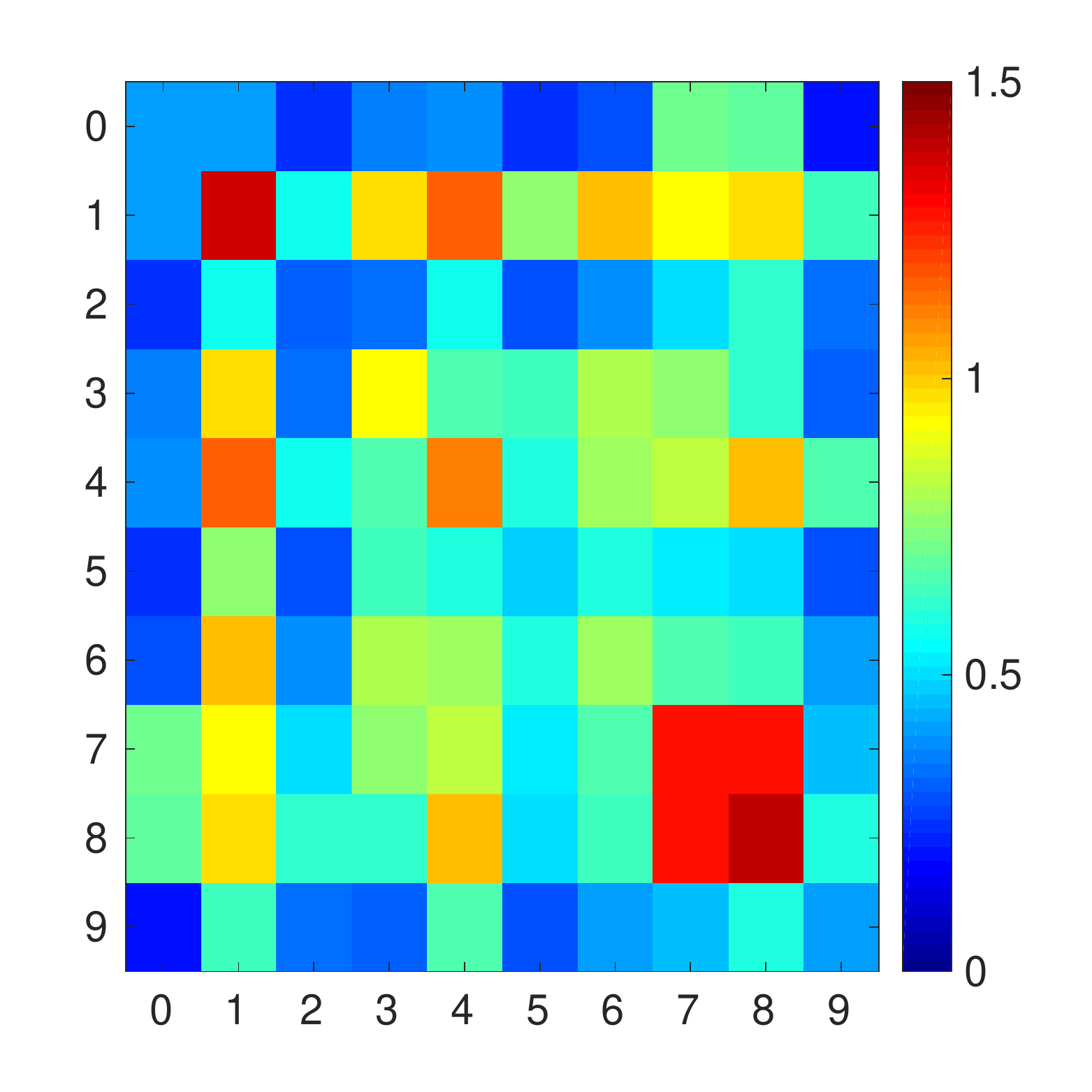}
		\label{ker_A}
	\end{subfigure}
	~
	\begin{subfigure}[b]{0.11\textwidth}
		\caption{$\ell_i$ of $\mat{A}$}
		\vspace{\newSpace}
		\includegraphics[width=0.6\textwidth]{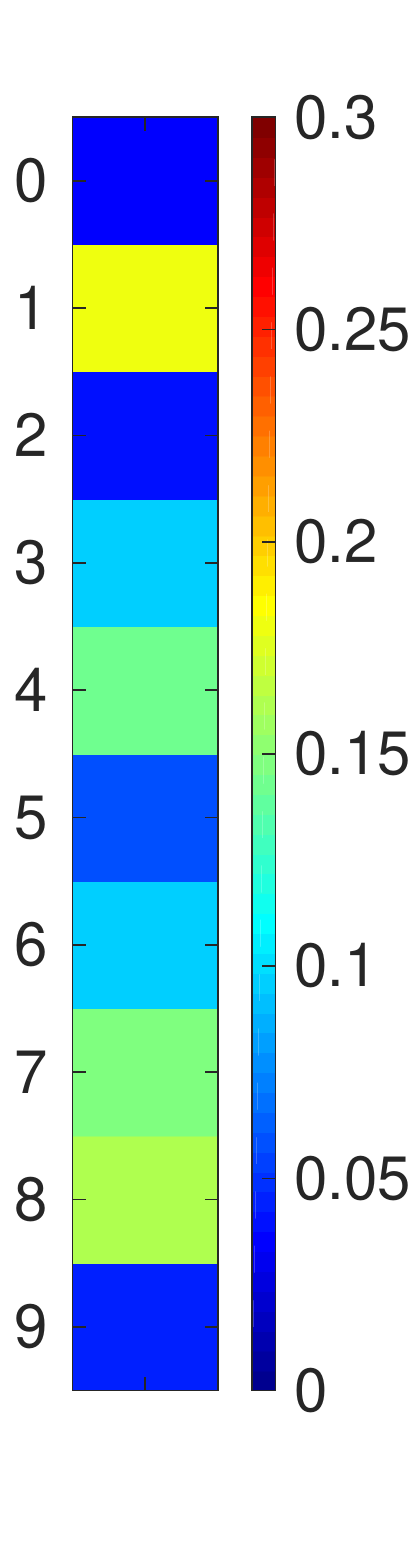}
		\label{lev_A}
	\end{subfigure}
	\vskip \newSkip
	\begin{subfigure}[b]{0.2\textwidth}
		\caption{$\mat{B}$}
		\vspace{\newSpace}
		\includegraphics[width=0.6\textwidth]{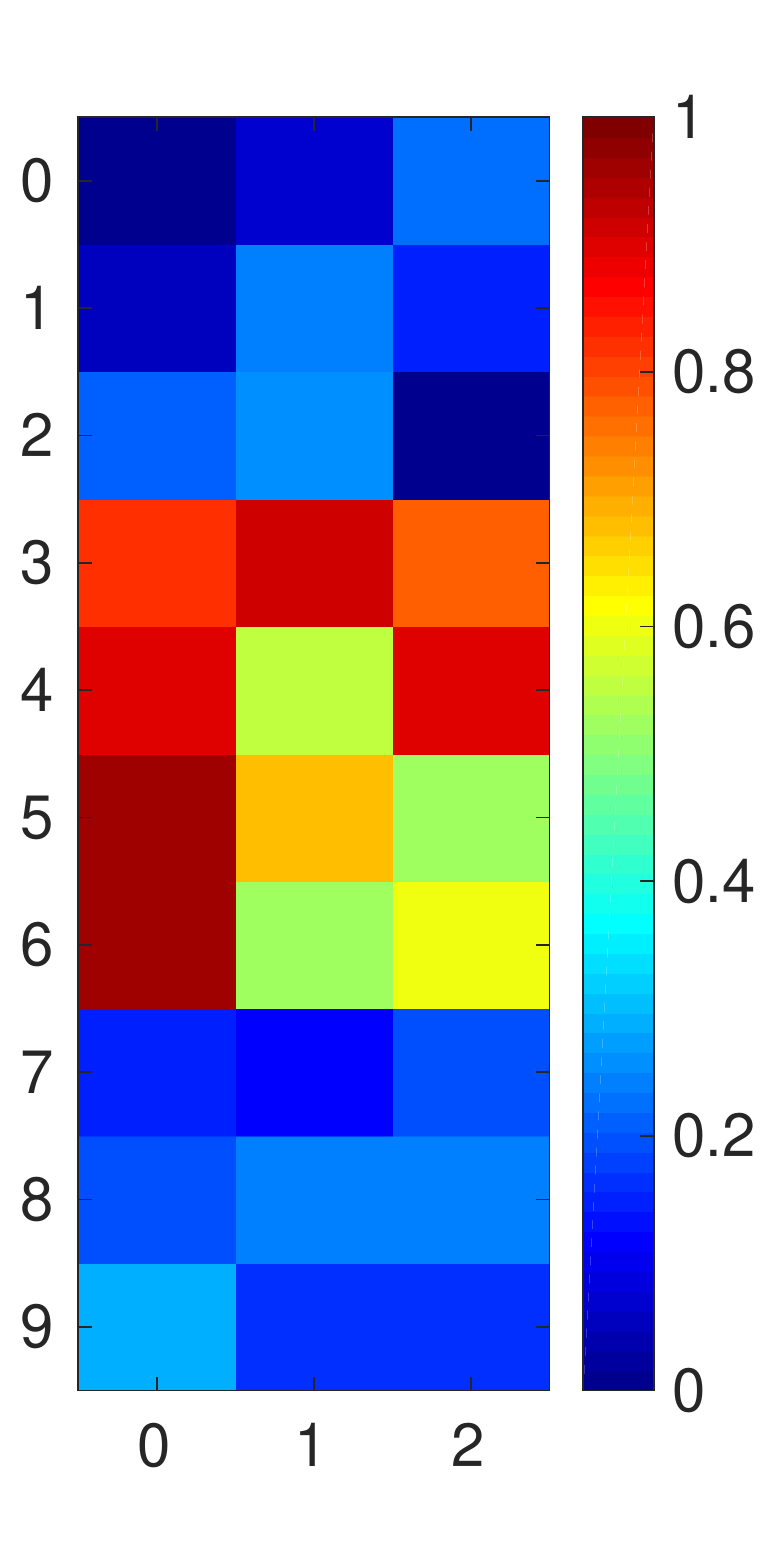}
		\label{B}
	\end{subfigure}
	~
	\begin{subfigure}[b]{0.4\textwidth}
		\caption{$\mat{K}_{\mat{B}}$, linear}
		\vspace{\newSpace}
		\includegraphics[width=0.6\textwidth]{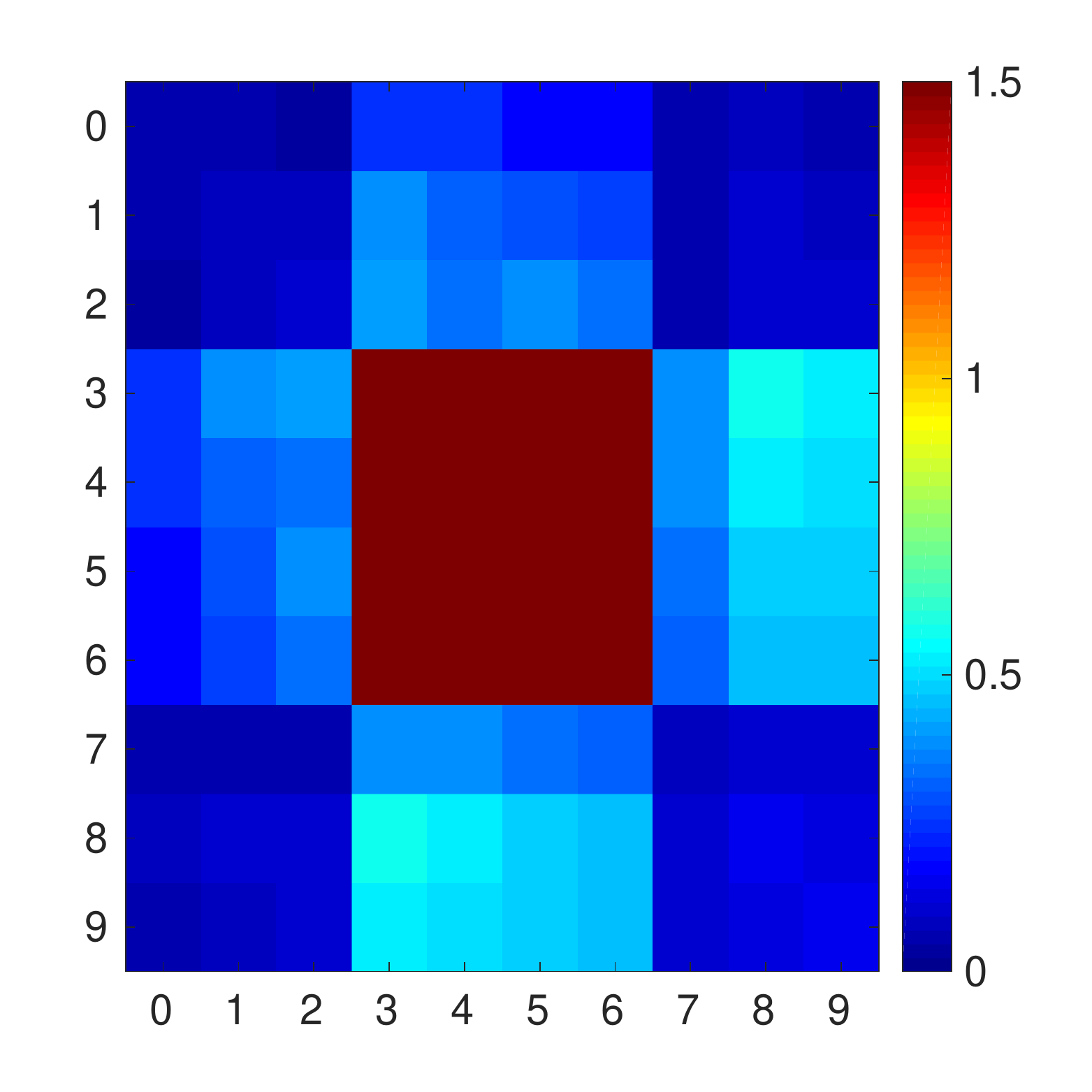}
		\label{ker_B}
	\end{subfigure}
	~
	\begin{subfigure}[b]{0.108\textwidth}
		\caption{$\ell_i$ of $\mat{B}$}
		\vspace{\newSpace}
		\includegraphics[width=0.6\textwidth]{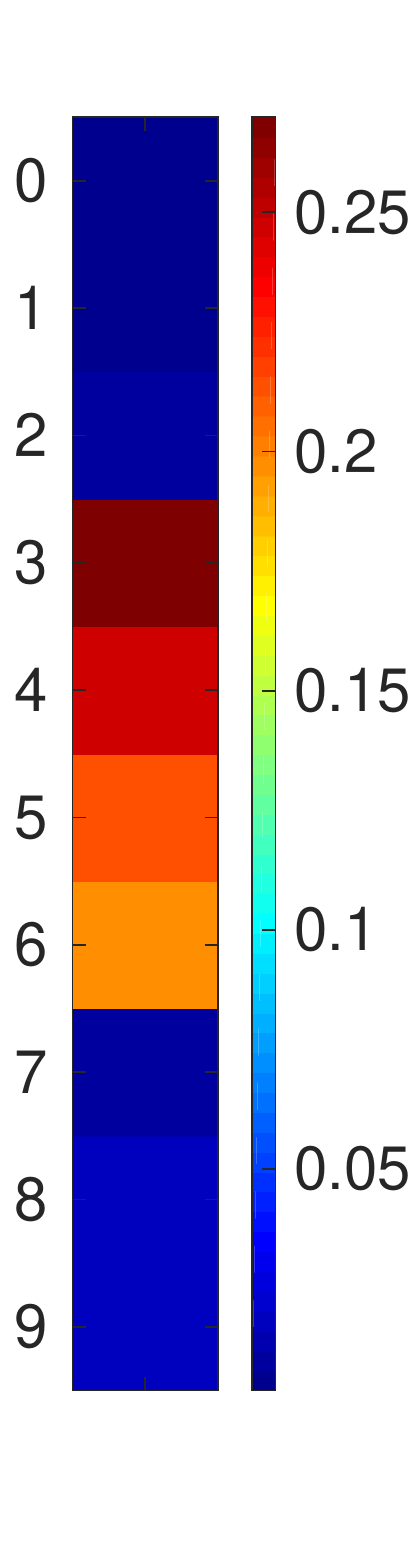}
		\label{lev_B}
	\end{subfigure}
	\caption{Leverage scores demonstrated on two toy matrices.}
	\label{figs:lev}
\end{figure*}

Motivated from this line of work which shows that statistical leverage scores are effective in finding columns (or rows) that exhibit high influence on the best low-rank fit of the data matrix, we propose to measure the representativeness of an example in a class with its leverage score in the kernel matrix computed on the examples.  A kernel function, $K: \mathcal{X} \times \mathcal{X} \rightarrow \mathbb{R}$ returns the dot product of the input vectors in a typically higher dimensional transformed feature space, $\Phi: \mathcal{X} \rightarrow \mathcal{H} $ \cite{scholkopf2001learning}. Let $K(\vec{x}_i, \vec{x}_j) = \langle \Phi(\vec{x}_i) \cdot \Phi(\vec{x}_j) \rangle_\mathcal{H}$. For a given $m$ number of examples, the kernel matrix is defined as $\mat{K} = \left[ K(\vec{x}_i,\vec{x}_j) \right]_{m\times m}$. 

Statistical leverage scores reflect the influence of the examples in a kernel matrix by capturing the most dominant part of the matrix.   Fig~\ref{figs:lev} demonstrates this idea on two toy matrices. The first matrix, $\mat{A}$, contains entries that are drawn from a uniform distribution on $[0,1]$ (Fig~\ref{figs:lev} a), whereas, $\mat{B}$, includes a submatrix that includes entries sampled uniformly at random from the $[0.6-1]$ range and the remaining entries are sampled from $[0-0.4]$ (Fig~\ref{figs:lev}d). Hence, every example is equally representative in $\mat{A}$ while few examples in $\mat{B}$ are representative.
Consider the linear kernel computed on these examples and let $\mat{K}_\mat{A}$ and $\mat{K}_\mat{B}$ denote them respectively (Fig~\ref{figs:lev}b and 1e).  The leverage scores computed on $\mat{K}_\mat{A}$ and $\mat{K}_\mat{B}$ depict the structural difference between the two matrices and  successfully identify the important rows (compare Fig~\ref{figs:lev}c and Fig~\ref{figs:lev}f ). The rows with high leverage scores of $\mat{B}$, rows 4-7 (Fig~\ref{figs:lev}f), encode most of the information in the matrix while the rows with all zero-entries have leverage scores of 0.  We use the idea that leverage scores can identify and rank the rows (examples) with most information in constructing the original kernel matrix, thus they can be used to assess the influence of the examples in the data distribution. 

\vspace{-4mm}
\section{Proposed Sequential Active Learning Method: ALEVS} 
\label{sec:alevs}
\vspace{-2mm}

For the sequential learning scenario, where at each round one example is queried for labeling, we propose \texttt{ALEVS}. The following steps are taken in deciding the example to query at each iteration $t$.

First, the training examples are divided into two subsets based on class memberships and two separate feature matrices are formed on these subsets. Let $h^t$ be the classifier at iteration $t$ that is trained with the labeled training examples $\mathcal{D}_\mathrm{l}^t$ with a supervised method, the class membership of the unlabeled examples are predicted with $h^t$. $\mat{X}_{+}^t$ is a $m \times d$ feature matrix, where the rows are the feature vectors of examples with positive class membership at iteration $t$. These examples are those whose true labels are known to be positive along with the examples for which the true labels are not known but are predicted to be in the positive class based on the prediction of $h^t$. $\mat{X}_-^t$ is similarly constructed from the negative examples.

In the second step, \texttt{ALEVS} computes kernel matrices on $\mat{X}_+^t$ and $\mat{X}_-^t$ separately.  For a given $m$ number of examples, the kernel matrix is defined as $\mat{K} = \left[ K(\vec{x}_i,\vec{x}_j) \right]_{m\times m}$.  \texttt{ALEVS} computes one kernel matrix on the positive class examples, $\mat{X}_+^t$, which we will denote with $\mat{K}_{+}^t$. Similarly, for the negatively labeled feature matrix $\mat{X}_-^t$, a kernel matrix $\mat{K}_{-}^t$ is defined. These two matrices encode the similarity of examples to other examples that are in the same class. 

We would like to find examples that carry the most information in the matrix to reconstruct the kernel matrix. \texttt{ALEVS} finds the example that imparts the strongest influence on the kernel matrices $\mat{K}_{+}^t$ and $\mat{K}_{-}^t$ through statistical leverage scores (Definition \ref{def:lev}). To be able to compare leverage scores of examples computed on matrices with different $m$ and $k$ values, we use the scaled leverage scores, which ensures that the average leverage score is 1:

\vspace{-4mm}
\begin{eqnarray}
&\ell_i = \frac{m}{k} \|({\mat{U}}_1)_{(i)}\|_2^2\enspace,
\end{eqnarray} 
 \vspace{-4mm}
 
At iteration $t$, \texttt{ALEVS} computes leverage scores for $\mat{K}_{+}^t$ and $\mat{K}_{-}^t$, and the unlabeled example that corresponds to the highest leverage score row in these matrices is selected for query:

\vspace{-4mm}
\begin{eqnarray}
&q= \argmax_{i \in \mathcal{D}_\mathrm{u}^{t} } \ell_i\enspace.
\end{eqnarray}
\vspace{-4mm}

These steps are repeated at each round of the active learning iterations. An important parameter in \texttt{ALEVS} is the target rank parameter $k$.  Let $\tau$ be the proportion of variance explained by the top first $k$ eigenvalues. We select the minimum possible low-rank parameter $k$, where  the sum of the top eigenvalues is at least as large as $\tau$. The overall procedure of \texttt{ALEVS} is summarized in  Algorithms~\ref{alg:ALEVSKernel}, ~\ref{alg:comp_lev}, and ~\ref{alg:k_est}.

\vspace{-5mm}

\begin{algorithm}[H]
	\caption{\texttt{ALEVS}: Active Learning with Leverage Score Sampling}
	\label{alg:ALEVSKernel}
	\begin{algorithmic}
		\STATE {\bf Input:} $\mathcal D$: a training dataset of $n$ instances; $\mathcal{O}$: labeling oracle; $\tau$: eigenvalue threshold; $p$: kernel parameters.
		\STATE {\bf Output:} $h^*$: final classifier.
		\STATE {\bf Initialize:}\\
		\STATE $\mathcal{D}_\mathrm l^{0}$\quad\quad\quad\quad \quad\quad\quad// initial set of labeled instances
		\STATE$\mathcal{D}_\mathrm u^{0} \gets \mathcal{D} \setminus \mathcal{D}_\mathrm l^{0}$ \quad\quad // the pool of unlabeled instances
		\STATE $t \gets 0$
		\REPEAT
		\STATE ------------------ {\bf Classification} ---------------------------
		\STATE $h^t \gets$ train($D_\mathrm l^t$)
		\STATE $\vec{\hat{y}_\mathrm{u}^t} \gets$ predict($h^t, \mathcal{D}_\mathrm u^t$)
		\STATE ------------------ {\bf Sampling} ---------------------------------
		\STATE Based on $\vec{\hat{y}}_\mathrm u^t$ and $\vec{y}_\mathrm l^t$, construct $\mat{X}_{+}^t$ and $\mat{X}_{-}^t$ 
		\STATE $\mat{K}_{+}^t \gets$ \texttt{ComputeKernel}($\mat{X}_+^t,p$) 
		\STATE $\mat{K}_{-}^t \gets$ \texttt{ComputeKernel}($\mat{X}_-^t,p$)
		\STATE $\ell_+^t \gets$ \texttt{ComputeLeverage}($\mat{K}_+^t$,$\tau$)
		\STATE $\ell_-^t \gets$ \texttt{ComputeLeverage}($\mat{K}_-^t$,$\tau$)
		\STATE $\ell^t \gets \ell_+^t \cup \ell_-^t$
		\STATE $\vec{x}_\mathrm q^t = \arg\max_{\vec{x}_j \in \mathcal{D}_\mathrm u^{t}} \ell_j^t$
		\STATE $y_\mathrm q^t \gets$ query($\mathcal{O}$,$\vec{x}_\mathrm q^t$)
		\STATE ------------------ {\bf Update} ----------------------------------- 
		\STATE $\mathcal{D}_\mathrm l^{t+1} \gets \mathcal{D}_\mathrm l^t \cup (\vec{x}_\mathrm q^t, y_\mathrm q^t)$ 
		\STATE $\mathcal{D}_\mathrm u^{t+1} \gets \mathcal{D}_\mathrm u^t \setminus \vec{x}_\mathrm q^t$ 
		\STATE $t \gets t+1$
		\UNTIL stopping criterion
		\STATE $h^* \gets h^t$
	\end{algorithmic}
\end{algorithm}
\vspace{-5mm}

\begin{algorithm}[H]
	\caption{\texttt{ComputeLeverage}}
	\label{alg:comp_lev}
	\begin{algorithmic}
		\STATE {\bf Input:} $\mat K$: $m\times m$ kernel matrix; $\tau$: eigenvalue threshold.
		\STATE {\bf Output:} $\ell$: leverage scores.
		\STATE $\mat{K}=\mat{U}\mat{\Sigma} \mat{U}^T$
		\STATE $\mat \lambda \gets diag(\Sigma)$
		\STATE $k \gets$ \texttt{RankSelector}($\mat \lambda$,$\tau$)
		\STATE $\mat{U} = \bigg (\mat{U}_1 \quad \mat{U}_2 \bigg)$, where $\mat{U}_1$ spans top $k$-eigenspace of $\mat{K}$ and is $m \times k$
		\FOR{$i=1$ \TO $m$ } \STATE{$\ell_i = \frac{m}{k} \|({\mat{U}}_1)_{(i)}\|_2^2$} \ENDFOR
	\end{algorithmic}
\end{algorithm}
\vspace{-2mm}
\begin{algorithm}[H]
	\caption{\texttt{RankSelector}}
	\label{alg:k_est}
	\begin{algorithmic}
		\STATE {\bf Input:} $\vec \lambda$: $m\times 1$ vector containing eigenvalues; $\tau$: eigenvalue threshold.
		\STATE {\bf Output:} $k$: target rank.
		\STATE $\vec \lambda \leftarrow$ sort($\vec \lambda$,`descend')
		\STATE $k \leftarrow 1$
		\WHILE {$\frac{\sum_{i=1}^{k}\vec \lambda_i}{\sum_{ i=1}^{m}\vec \lambda_i} < \tau$}
		\STATE $k \leftarrow k+1$
		\ENDWHILE
	\end{algorithmic}
\end{algorithm}

\vspace{-2mm}
\section{Proposed Batch-Mode Active Learning Method: DBALEVS}
\label{sec:dbalevs}
\vspace{-2mm}
A high-quality batch should contain highly influential examples in the data distribution. On the other hand, as some of the examples can be highly influential on an individual basis, they might contain redundant information and can form poor batches if they are queried together.  \texttt{DBALEVS} aims to select a batch not only diverse within the current batch but also with respect to the already labeled examples. We encode these properties in a set scoring function and use it to select batches at each iteration.  The sum of leverage scores of the examples in the batch assesses the total usefulness of a set of examples. To select a diverse set, we incorporate a term that penalizes the selection of examples that are similar to each other. For evaluating the similarity of examples, we use the kernel function.  We define the following set scoring function:



\begin{definition}[Set scoring function]
	\label{set-score}
	Given a set $S$ that is a subset of the ground set $V$, $S\subseteq V$ the scoring function, $F:2^V\rightarrow \mathbb{R}$, is defined as follows:
	\begin{eqnarray}
	\label{eq:setscore}
	& F(S) = \sum\limits_{i\in S}(\ell_i+1) - \frac{\alpha}{M} \sum\limits_{\substack{i,j \in S\\i \neq j}} K(i,j)
	\end{eqnarray}
	Here, $M \geq |S|$ is a cardinality constraint on the selectable set size of a batch. $\ell_i$ denotes the leverage score of point $i$, and $K(i,j)$ denotes the kernel function evaluation of points $i$ and $j$ with the assumption that $0 \leq K(i,j) \leq 1$. $\alpha \in [0,1]$ is a parameter.
\end{definition}
\vspace{-1mm}

The first part of this function evaluates the individual representatives of  the examples in the set while the second part of the function penalizes the selection of highly similar instances. The influence of the diversity term can be adjusted by the trade-off parameter $\alpha$.  We would like to select a batch that maximizes the set function, $F$:
 \vspace{-1mm}
\begin{eqnarray}
S^* = \arg\max F(S)\\
s.t.\quad |S| = b \nonumber
\end{eqnarray}
\vspace{-2mm}

This is a subset selection problem and except for small sets and small values of $b$, the exhaustive search for the optimal batch will be intractable. To tackle this computational challenge, we exploit the fact that the suggested set function is submodular. Although submodular maximization is also NP-hard in general \cite{Krause05near-optimalnonmyopic}, Nemhauser et al. \cite{nemhauser1978analysis} showed that the greedy algorithm for selecting a subset of size $b$  is guaranteed to return a solution close to the optimal value within a constant bound (Theorem~\ref{thm:nemhauser}). 

\begin{theorem}
	\label{thm:nemhauser} For a monotone, non-negative, submodular function $f:2^{V}\rightarrow \mathbb{R}$, and a cardinality constraint $b$, the greedy approximation yields to:
	\begin{eqnarray}
	f(S_b) \geq (1-\frac{1}{e}) \max_{|S|\leq b}f(S)
	\end{eqnarray}
	where $S_b$ denotes the greedily selected set with cardinality $b$ \normalfont{\cite{nemhauser1978analysis}}.
\end{theorem}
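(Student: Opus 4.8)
The plan is to reproduce the classical Nemhauser--Wolsey--Fisher argument, which proceeds by bounding how much the greedy solution falls short of the optimum after each greedy step. First I would fix notation: let $S^{*} \in \arg\max_{|S| \le b} f(S)$ be an optimal set, and let $S_{0} = \emptyset \subseteq S_{1} \subseteq \cdots \subseteq S_{b}$ be the chain of sets built by the greedy algorithm, where $S_{i} = S_{i-1} \cup \{v_{i}\}$ and $v_{i} \in \arg\max_{v} \bigl(f(S_{i-1} \cup \{v\}) - f(S_{i-1})\bigr)$. Write $\delta_{i} = f(S_{i}) - f(S_{i-1})$ for the $i$-th greedy gain. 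The goal is to show $f(S^{*}) - f(S_{i}) \le (1 - 1/b)^{i} \bigl(f(S^{*}) - f(\emptyset)\bigr)$, and then specialize to $i = b$.

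The core inequality to establish at each step is
\begin{eqnarray}
f(S^{*}) - f(S_{i-1}) \le \sum_{v \in S^{*} \setminus S_{i-1}} \bigl(f(S_{i-1} \cup \{v\}) - f(S_{i-1})\bigr).
\end{eqnarray}
This follows by telescoping $f(S_{i-1} \cup S^{*}) - f(S_{i-1})$ over the elements of $S^{*} \setminus S_{i-1}$ added one at a time, bounding each marginal term from above by its marginal over the smaller set $S_{i-1}$ using submodularity, and then using monotonicity to replace $f(S_{i-1} \cup S^{*})$ by something $\ge f(S^{*})$ on the left. Since $|S^{*} \setminus S_{i-1}| \le |S^{*}| \le b$ and $v_{i}$ is the greedy maximizer, the right-hand side is at most $b\,\delta_{i}$, giving $f(S^{*}) - f(S_{i-1}) \le b\,\delta_{i} = b\bigl(f(S_{i}) - f(S_{i-1})\bigr)$. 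Rearranging yields $f(S^{*}) - f(S_{i}) \le (1 - 1/b)\bigl(f(S^{*}) - f(S_{i-1})\bigr)$, and iterating from $i = b$ down to $i = 0$ gives $f(S^{*}) - f(S_{b}) \le (1 - 1/b)^{b}\bigl(f(S^{*}) - f(\emptyset)\bigr)$.

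To finish, I would use the standard estimate $(1 - 1/b)^{b} \le 1/e$ for all $b \ge 1$, together with non-negativity $f(\emptyset) \ge 0$, to conclude
\begin{eqnarray}
f(S_{b}) \ge \bigl(1 - (1 - 1/b)^{b}\bigr) f(S^{*}) + (1 - 1/b)^{b} f(\emptyset) \ge \Bigl(1 - \frac{1}{e}\Bigr) f(S^{*}) = \Bigl(1 - \frac{1}{e}\Bigr) \max_{|S| \le b} f(S).
\end{eqnarray}
The main obstacle is getting the per-step inequality exactly right: one must be careful that monotonicity is invoked in the correct direction (to pass from $f(S_{i-1} \cup S^{*})$ to $f(S^{*})$, which requires $S^{*} \subseteq S_{i-1} \cup S^{*}$ and hence $f(S_{i-1} \cup S^{*}) \ge f(S^{*})$), and that the submodular diminishing-returns inequality is applied to each intermediate set in the telescoping expansion rather than just once. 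Everything else is bookkeeping. I should also note that the bound is vacuous unless $f(\emptyset) \ge 0$ and $f$ is genuinely monotone on the relevant chain, so I would state those hypotheses explicitly as they are exactly the hypotheses of the theorem; for the application to $F$ in Definition~\ref{set-score} one separately verifies monotonicity, non-negativity, and submodularity, which is where the constraint $M \ge |S|$ and the assumption $0 \le K(i,j) \le 1$, $\alpha \in [0,1]$ enter.
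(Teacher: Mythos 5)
Your argument is correct and is precisely the classical Nemhauser--Wolsey--Fisher proof: the per-step inequality $f(S^{*})-f(S_{i-1})\le b\,\delta_{i}$ via monotonicity plus telescoped submodularity, the geometric decay $f(S^{*})-f(S_{i})\le(1-1/b)^{i}\bigl(f(S^{*})-f(\emptyset)\bigr)$, and the bound $(1-1/b)^{b}\le 1/e$ together with $f(\emptyset)\ge 0$. The paper does not prove this theorem itself but only cites \cite{nemhauser1978analysis}, and your reconstruction matches the standard proof in that reference.
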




The greedy algorithm adds elements to the solution that gives the maximum increase at each step. To be able to use this greedy algorithm with the aforementioned approximation bound, we need to show that $F$ is a submodular, monotonically non-decreasing and non-negative function.   Below we first define submodularity and then prove $F$ is submodular.
\vspace{-2mm}
\begin{definition}[Submodularity]
	\label{def:subm}
	Let $A \subseteq B \subseteq V$, where $V$ denotes the ground set and let $x\in V\setminus B$ be an element. A set function $f:2^V\rightarrow \mathbb{R}$ is called \textit{submodular} if the following holds:
	\begin{eqnarray}
	& f(A\cup \{x\}) - f(A) \geq f(B\cup \{x\}) - f(B)
	\end{eqnarray}
\end{definition}

\vspace{-2mm}

\begin{proposition}[Submodularity]
	\label{subm}
	$F$ is submodular.
\end{proposition}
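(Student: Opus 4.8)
The plan is to verify the submodularity inequality from Definition~\ref{def:subm} directly, splitting $F$ into its two constituent pieces and handling each separately. Write $F = F_1 - \frac{\alpha}{M} F_2$, where $F_1(S) = \sum_{i \in S}(\ell_i + 1)$ is a modular (additive) set function and $F_2(S) = \sum_{i,j \in S, i \neq j} K(i,j)$ collects the pairwise kernel evaluations. Since submodularity is preserved under nonnegative linear combinations, and since $\alpha/M \geq 0$, it suffices to show that $F_1$ is submodular and that $-F_2$ is submodular, i.e. that $F_2$ is \emph{supermodular}.

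First I would dispatch $F_1$: for any $A \subseteq V$ and any $x \in V \setminus A$, we have $F_1(A \cup \{x\}) - F_1(A) = \ell_x + 1$, which does not depend on $A$ at all. Hence the marginal gain is the same for $A$ and for $B \supseteq A$, so the defining inequality holds with equality; a modular function is in particular submodular. Next, for $F_2$, I would compute the marginal gain of adding $x$ to a set $S$ with $x \notin S$: because the double sum ranges over ordered pairs $i \neq j$, each new pair involving $x$ is counted twice, giving $F_2(S \cup \{x\}) - F_2(S) = 2\sum_{j \in S} K(x,j)$. Then for $A \subseteq B$ with $x \in V \setminus B$, the difference of marginal gains is $2\sum_{j \in B} K(x,j) - 2\sum_{j \in A} K(x,j) = 2\sum_{j \in B \setminus A} K(x,j) \geq 0$, where nonnegativity uses the standing assumption $0 \leq K(i,j) \leq 1$. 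This says exactly that the marginal gains of $F_2$ are nondecreasing in the conditioning set, i.e. $F_2$ is supermodular, so $-F_2$ is submodular.

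Combining, $F(A \cup \{x\}) - F(A) - \big(F(B \cup \{x\}) - F(B)\big) = \big(F_1(A\cup\{x\}) - F_1(A) - F_1(B\cup\{x\}) + F_1(B)\big) - \frac{\alpha}{M}\big(F_2(A\cup\{x\}) - F_2(A) - F_2(B\cup\{x\}) + F_2(B)\big) = 0 - \frac{\alpha}{M}\big({-}2\sum_{j \in B\setminus A} K(x,j)\big) = \frac{2\alpha}{M}\sum_{j \in B \setminus A} K(x,j) \geq 0$, which is the submodularity inequality for $F$.

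I do not anticipate a genuine obstacle here — the result is essentially a bookkeeping exercise — but the one place to be careful is the factor of $2$ arising from the ordered-pair convention in the double sum defining the diversity term, and the sign tracking when passing from ``$F_2$ supermodular'' to ``$-\frac{\alpha}{M}F_2$ submodular.'' I would also note in passing that the argument never uses $K$ being a valid (PSD) kernel, only $K(i,j) \geq 0$ and symmetry; and that submodularity here does not require $\alpha \leq 1$, though that and the remaining hypotheses (monotonicity, non-negativity) needed to invoke Theorem~\ref{thm:nemhauser} presumably get established in the propositions that follow.
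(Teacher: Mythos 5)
Your proof is correct and takes essentially the same route as the paper's: both verify Definition~\ref{def:subm} by computing the marginal gain of adding $x$, noting that the leverage-score part contributes the set-independent quantity $\ell_x+1$ while the kernel penalty contributes $-\frac{\alpha}{M}\sum_{i\in A}K(x,i)$, so the difference of marginal gains reduces to $\frac{\alpha}{M}\sum_{i\in B\setminus A}K(x,i)\geq 0$ (the paper counts each new pair once rather than twice, so your factor of $2$ is absent there, but this is only the ordered- versus unordered-pair convention you already flag and does not affect the sign). Your packaging as ``modular plus supermodular, and submodularity is closed under nonnegative combinations'' is just a tidier bookkeeping of the identical computation.
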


\begin{proof} For $F$ to be submodular, the following should hold:
	\begin{eqnarray}
	& F(A\cup \{x\})-F(A) \geq F(B\cup \{x\}) - F(B) \enspace.
	\label{eq:submod}
	\end{eqnarray}
	Using Definition~\ref{set-score} for $F$:
	\small
	\begin{align*}
	& F(A \cup \{x\})-F(A) \\
	& =\bigg(\sum_{i\in A\cup \{x\}} (\ell_i +1) - \frac{\alpha}{M} \smashoperator[r]{ \sum_{\substack{i,j \in A\cup \{x\} \\ i \neq j}}}K(i,j)\bigg) - \bigg(\sum_{i\in A}(\ell_i+1) - \frac{\alpha}{M} \sum_{\substack{i,j \in A\\i \neq j}} K(i,j)\bigg)
	\end{align*}
	\begin{align*}
	& = \bigg(\sum_{i\in A} \ell_i + \ell_x + \left\vert{A}\right\vert + 1 - \frac{\alpha}{M}\Big(\sum_{\substack{i,j \in A \\ i \neq j}}K(i,j) + \sum_{i\in A}K(x,i)\Big)\bigg) - 
	\\ & \quad\, \bigg(\sum_{i\in A} \ell_i + \left\vert{A}\right\vert - \frac{\alpha}{M}\sum_{ \substack{i,j \in A \\ i \neq j}}K(i,j)\bigg)
	\end{align*}
	\normalsize
	Rearranging the terms we end up with the following expression:
	\small
	\begin{equation}
	F(A\cup \{x\})-F(A) = \Big(\ell_x + 1 - \frac{\alpha}{M}\sum_{i\in A} K(x,i)\Big)\nonumber
	\end{equation}
	\normalsize
	If we do the same simplification for the right hand side of the submodularity definition, $F(B\cup \{x\}) - F(B)$ , we arrive to a similar expression for set $B$. Therefore,
	
	\small
	\begin{align*}
	&F(A\cup \{x\})-F(A) - (F(B\cup \{x\} )-F(B)) \\
	&= \Big(\ell_x + 1 - \frac{\alpha}{M} \sum_{i\in A} K(x,i) \Big) - \Big(\ell_x + 1 - \frac{\alpha}{M}\sum_{i\in B} K(x,i) \Big)\\ 
	&= \frac{\alpha}{M}\sum_{i\in B} K (x,i) - \frac{\alpha}{M} \sum_{i\in A}K(x,i) \\
	&= \bigg( \frac{\alpha}{M}\sum_{i\in B \setminus A} K(x,i) + \frac{\alpha}{M} \sum_{i\in A} K(x,i)\bigg) - \frac{\alpha}{M} \sum_{i\in A} K(x,i) \\
	&= \frac{\alpha}{M}\sum_{i\in B \setminus A} K(x,i)
	\end{align*}
	\normalsize
	Since $ K(i,j)\geq 0$ and $\alpha \in [0,1]$ and $M>0$, $ \frac{\alpha}{M}\sum_{i\in B \setminus A} K(x,i) \geq 0$. Therefore, $F(A\cup \{x\})-F(A) \geq (F(B\cup \{x\} )-F(B)) $. Hence, $F$ is submodular. \qed.
\end{proof}

To be able to apply the greedy algorithm with an approximation guarantee, we also need to show that $F$ is a monotonically non-decreasing and non-negative function under reasonable conditions. The proofs that $F$ satisfies these conditions when the selected batch size is less than or equal to $M$ are provided in Appendix \ref{app:proof1} and \ref{app:proof2}.

\begin{algorithm}[H]
	\caption{\texttt{DBALEVS}: Diverse Batch Mode Active Learning with Leverage Score Sampling}
	\label{alg:DBALEVS}
	\begin{algorithmic}
		\STATE {\bf Input:} $\mathcal D$: a training dataset of $N$ instances; $\mathcal{O}$: labeling oracle; $\tau$: eigenvalue threshold; $p$: kernel parameters; $F$: set scoring function in Definition~\ref{set-score}; $b$: batch size; $\alpha$: diversity trade-off parameter for $F$.
		\STATE {\bf Output:} $h^*$: final classifier.
		\STATE {\bf Initialize:} \\
		\STATE $\mathcal{D}_\mathrm l^{0}$ \quad \quad \quad \quad \quad \quad // initial set of labeled instances
		\STATE$\mathcal{D}_\mathrm u^{0} \gets \mathcal{D} \setminus \mathcal{D}_\mathrm l^{0}$ \quad\quad // the pool of unlabeled instances
		\STATE $t \gets 0$
		\REPEAT
		\STATE ------------------ {\bf Classification} ---------------------------
		\STATE $h^t \gets$ train($D_\mathrm l^t$)
		\STATE $\vec{\hat{y}}_\mathrm u^t \gets$ predict($h^t, \mathcal{D}_\mathrm u^t$)
		\STATE ------------------ {\bf Sampling} ---------------------------------
		\STATE Based on $\vec{\hat{y}}_\mathrm u^t$ and $\vec{y}_\mathrm l^t$, construct $\mat{X}_{+}^t$ and $\mat{X}_{-}^t$
		\STATE Based on $\vec{y}_\mathrm l^t$, construct $\mat{L}_{+}^t$ and $\mat{L}_{-}^t$ //labeled class matrices
		\STATE $\mat{K}_{+}^t \gets$ \texttt{ComputeKernel}($\mat{X}_+^t,p$) 
		\STATE $\mat{K}_{-}^t \gets$ \texttt{ComputeKernel}($\mat{X}_-^t,p$) 
		\STATE $\ell_+^t \gets$ \texttt{ComputeLeverage}($\mat{K}_+^t$,$\tau$)
		\STATE $\ell_-^t \gets$ \texttt{ComputeLeverage}($\mat{K}_-^t$,$\tau$)
		\STATE $S_\mathrm q^+ \gets$ \texttt{B-GreedyAlgorithm}($F,b/2,\mat{L}_{+}^t,\ell_+^t,\mat{K}_{+}^t,\alpha$)
		\STATE $S_\mathrm q^- \gets$ \texttt{B-GreedyAlgorithm}($F,b/2,\mat{L}_{-}^t,\ell_-^t,\mat{K}_{-}^t,\alpha$)
		\STATE $S_\mathrm q^t \gets S_\mathrm q^+ \cup S_\mathrm q^-$
		\STATE $\vec y_\mathrm q^t \gets$ query($\mathcal{O}$, $S_\mathrm q^t$)
		\STATE ------------------ {\bf Update} ----------------------------------- 
		\STATE $\mathcal{D}_\mathrm l^{t+1} \gets \mathcal{D}_\mathrm l^t \cup (S_\mathrm q^t, \vec y_\mathrm q^t)$ 
		\STATE $\mathcal{D}_\mathrm u^{t+1} \gets \mathcal{D}_\mathrm u^t \setminus S_\mathrm q^t$ 
		\STATE $t \gets t+1$
		\UNTIL stopping criterion
		\STATE $h^* \gets h^t$
	\end{algorithmic}
\end{algorithm}
\vspace{-3mm}
\begin{algorithm}[!ht]
	\caption{\texttt{B-GreedyAlgorithm}}
	\label{alg:greedy2}
	\begin{algorithmic}
		\STATE {\bf Input:} $F$: set scoring function in Definition~\ref{set-score}; $b$: batch size; $A$: initial set; $V$: ground set; $\ell$: leverage scores; $\mat{K}$: kernel matrix; $\alpha$: diversity parameter for $F$.
		\STATE {\bf Output:} $S$: selected set.
		\STATE $S_0 \gets A$
		\STATE $M \gets |A| + b$
		\STATE $i \gets 1$
		\WHILE {$i \leq b$}
		\STATE $S_i \gets S_{i-1} \cup \arg\max_{x\in V\setminus S_{i-1}} F(\ell,\mat{K},\alpha, M)$
		\STATE $i \gets i+1$
		\ENDWHILE
		\STATE $S \gets S_i \setminus A$
	\end{algorithmic}
\end{algorithm}
%

The  procedure for querying a batch is summarized in Algorithm~\ref{alg:DBALEVS}.  First, the labeled and unlabeled pool is divided based on class labels. As in \texttt{ALEVS}, the iteration $t$, the classifier, $h^t$ is exclusively trained with the labeled training examples $\mathcal{D}_\mathrm{l}^t$ with a supervised method, and the class membership of the unlabeled examples are predicted with $h^t$. The examples whose true labels are known along with the instances for which the true labels are not known but are predicted to be in the positive class based on the prediction of $h^t$ form a positive class group, $\mat{X}_+^t$. $\mat{X}_-^t$ is similarly constructed from negatively predicted and labeled examples. Having divided the pool based on class memberships, the kernel matrices for each class are computed. $\mat{K}_+^t$ is formed using $\mat{X}_+^t$, and $\mat{K}_-^t$ is formed using $\mat{X}_-^t$.  Then leverage scores of the examples are computed using the kernel matrices based on Definition~\ref{def:lev} for each class. Not we used  the leverage scores without scaling with $\frac{m}{k}$. This is necessary to ensure the submodularity of $F$. \texttt{DBALEVS} selects half of the batch from the positive examples, and half of the points from the negative examples. For this selection, the method uses the set scoring function (Definition~\ref{set-score}). For greedy maximization, the method uses the available labeled data for positive ($\mat{L}_{+}^t$) and negative ($\mat{L}_{+}^t$) class as the initial set. This allows selecting a set that is also diverse with respect to the already labeled examples. This modified greedy maximization is  given in Algorithm~\ref{alg:greedy2}.

\vspace{-4mm}
\section{Results for ALEVS}
\label{sec:alevsresults}
\vspace{-2mm}



We compare \texttt{ALEVS} with the following five approaches:
\vspace{-1mm}
\begin{itemize}
\item  {\bf Random sampling:} Selects an  unlabelled example uniformly at random.  
\item {\bf Uncertainty sampling:} Queries the example that the current classifier is most uncertain about \cite{Lewis94}, that is the one with maximal $(1 - p( y^* \given{\vec{x}} ))$ value; here $y^*$ is the predicted class label for that example. The posterior probability is estimated with Platt's algorithm \cite{platt1999probabilistic} based on SVM's output. 
\item {\bf Leverage sampling on all data (\texttt{LevOnAll})}: Computes the leverage score on the  pool of examples at the beginning of the iteration without paying attention to class membership, then at each iteration queries the unlabeled example with the largest leverage score. 
\item {\bf Transductive experimental design: } Method selects observations to maximize the quality of parameter estimates in linear regression model \cite{yu:icml06expdesign}. The model is also applicaple to classification problems. 
 {\item \bf QUIRE:} Selects an instance that is both informative and representative through optimizing a function that encodes these properties \cite{quire}. 
\end{itemize}


\begin{table}
	\caption{Datasets for \texttt{ALEVS} experiments, number of samples, features and the positive to negative class  ratio (+/-) r are listed.}
\vspace{-2mm}
	\centering
\scalebox{0.85}{
	\begin{tabular}{llll}
		\hline\noalign{\smallskip}
		{\sc Dataset} & {\sc Size} & {\sc\quad Dim.} & {\sc \quad +/-}\\ \hline
		\noalign{\smallskip}
		\textit{digit1} & 1500 & \quad 241 & \quad 1.00 \\ 
		\textit{g241c} & 1500 &\quad 241 &\quad 1.00 \\ 
		\textit{UvsV} & 1577 & \quad 16 & \quad 1.10 \\ 
		\textit{USPS} & 1500 & \quad 241 & \quad 0.25 \\ 
		\textit{twonorm} & 2000 & \quad 20 &\quad 1.00 \\ 
		\textit{ringnorm} & 2000 &\quad 20 &\quad 1.00 \\ 
		\textit{spambase} & 2000 &\quad 57 &\quad 0.66 \\ 
		\textit{3vs5} & 2000 &\quad 784 &\quad 1.20 \\ \hline
	\end{tabular}
}
\vspace{-4mm}
	\label{tbl:datasets}
\end{table}

%

We compare the methods on eight different datasets. Table~\ref{tbl:datasets} summarizes the characteristics of the datasets (for more information about the data, see Appendix \ref{app:datasets}. Each dataset is divided into training and held out test sets. We start with four randomly selected labeled examples, two from each class. At each iteration, the classifier is updated for all the methods with the training data, and the accuracy values are calculated on the same held-out test data. In all experiments, an SVM classifier with RBF kernel is trained. The experiments are repeated $50$ times with random splitting of the training and the test data and random initial selection of labeled examples. 

%


Fig.~\ref{figs:seq} shows the average classification accuracy values of \texttt{ALEVS} and other approaches at each iteration of active sampling. Table~\ref{wtl:50} and Table~\ref{wtl:100} summarize the win, tie and lost counts of \texttt{ALEVS} versus each of the competing methods on the 1-sided paired sample $t$-test at the significance level of 0.05.

\begin{table}[h]
	\centering
	\caption{Win/Tie/Loss counts of \texttt{ALEVS} against the competitor algorithms for the first 50 iterations (Sequential-mode).}
	\scalebox{0.85}{
		\begin{tabular}{|l|c|c|c|c|c|c|c|c|c|}
			\hline 
			{\sc Dataset} & vs. \textsc{QUIRE} & vs. \textsc{LevOnAll} & vs. \textsc{Random} & vs. \textsc{Uncertainty} & vs. \textsc{ExpDesign} \\ \hline
			\textit{digit1} & 8/19/23 & 29/21/0 & 27/23/0 & 46/4/0 & 33/17/0  \\
			\textit{g241c} & 0/34/16 & 32/18/0 & 30/20/0 & 34/16/0 & 30/19/1  \\
			\textit{USPS} & 0/43/7 & 32/16/2 & 33/12/5 & 0/50/0 & 32/18/0  \\
			\textit{ringnorm} & 47/3/0 & 48/2/0 & 49/1/0 & 47/3/0 & 49/1/0  \\
			\textit{spambase} & 8/21/21 & 16/27/7 & 10/29/11 & 0/46/4 & 32/7/11  \\
			\textit{MNIST-3vs5} & 3/41/6 & 42/8/0 & 44/6/0 & 48/2/0 & 43/7/0  \\
			\textit{UvsV} & 0/2/48 & 48/2/0 & 25/25/0 & 8/12/30 & 49/1/0  \\
			\textit{twonorm} & 49/1/0 & 50/0/0 & 50/0/0 & 50/0/0 & 49/1/0  \\ \hline
		\end{tabular} 
	}
	\label{wtl:50}

	\bigskip
	
	\centering
	\caption{Win/Tie/Loss counts for \texttt{ALEVS} against the competitor algorithm iterations between 50 and 100 (Sequential-mode).}
	\scalebox{0.85}{
		\begin{tabular}{|l|c|c|c|c|c|c|c|c|c|}
			\hline 
			{\sc Dataset} & vs. \textsc{QUIRE} & vs. \textsc{LevOnAll} & vs. \textsc{Random} & vs. \textsc{Uncertainty} & vs. \textsc{ExpDesign} \\ \hline 
			\textit{digit1} & 0/0/50 & 0/16/34 & 0/16/34 & 0/15/35 & 0/50/0  \\
			\textit{g241c} & 9/41/0 & 50/0/0 & 50/0/0 & 50/0/0 & 50/0/0  \\
			\textit{USPS} & 0/12/38 & 50/0/0 & 50/0/0 & 0/32/18 & 50/0/0  \\
			\textit{ringnorm} & 50/0/0 & 22/28/0 & 50/0/0 & 13/37/0 & 50/0/0  \\
			\textit{spambase} & 0/6/44 & 2/48/0 & 24/26/0 & 0/9/41 & 50/0/0  \\
			\textit{MNIST-3vs5} & 0/20/30 & 39/11/0 & 50/0/0 & 6/17/27 & 50/0/0  \\
			\textit{UvsV} & 0/0/50 & 2/48/0 & 0/50/0 & 0/0/50 & 50/0/0  \\
			\textit{twonorm} & 50/0/0 & 50/0/0 & 50/0/0 & 50/0/0 & 50/0/0  \\ \hline
		\end{tabular}
	}
	\label{wtl:100}
\end{table}

\newcommand{\newW}{0.4}
\begin{figure*}[p]
	\centering
	\vskip \newSkip
	\begin{subfigure}[b]{\newW\textwidth}
		\caption{\textit{digit1}}
		\vspace{\newSpace}
		\includegraphics[width=\textwidth]{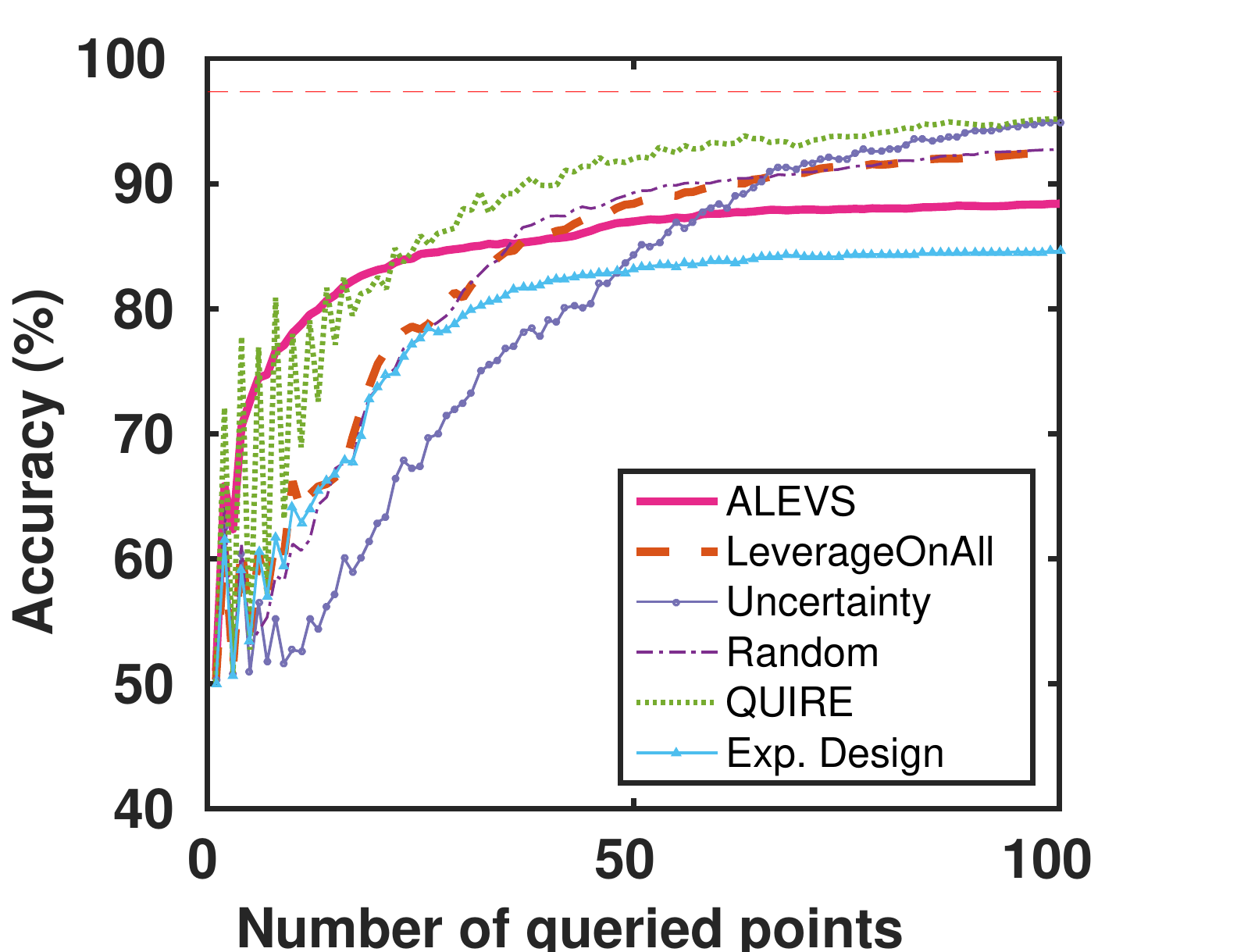}
		\label{digit1_acc:seq}
	\end{subfigure}
	~
	\begin{subfigure}[b]{\newW\textwidth}
		\caption{\textit{3vs5}}
		\vspace{\newSpace}
		\includegraphics[width=\textwidth]{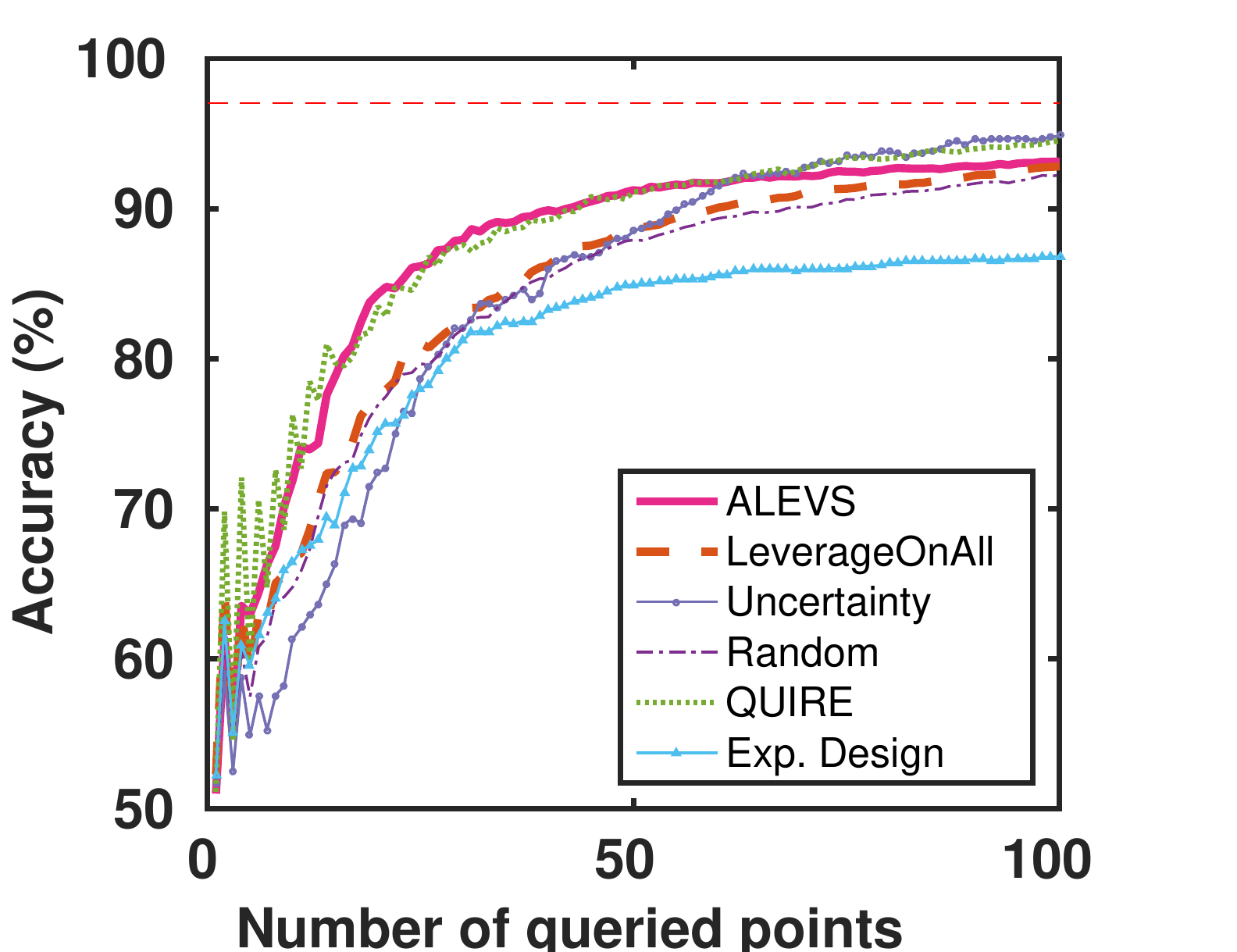}
		\label{MNIST_acc:seq}
	\end{subfigure}
	\vskip \newSkip
	\begin{subfigure}[b]{\newW\textwidth}
		\caption{\textit{g241c}}
		\vspace{\newSpace}
		\includegraphics[width=\textwidth]{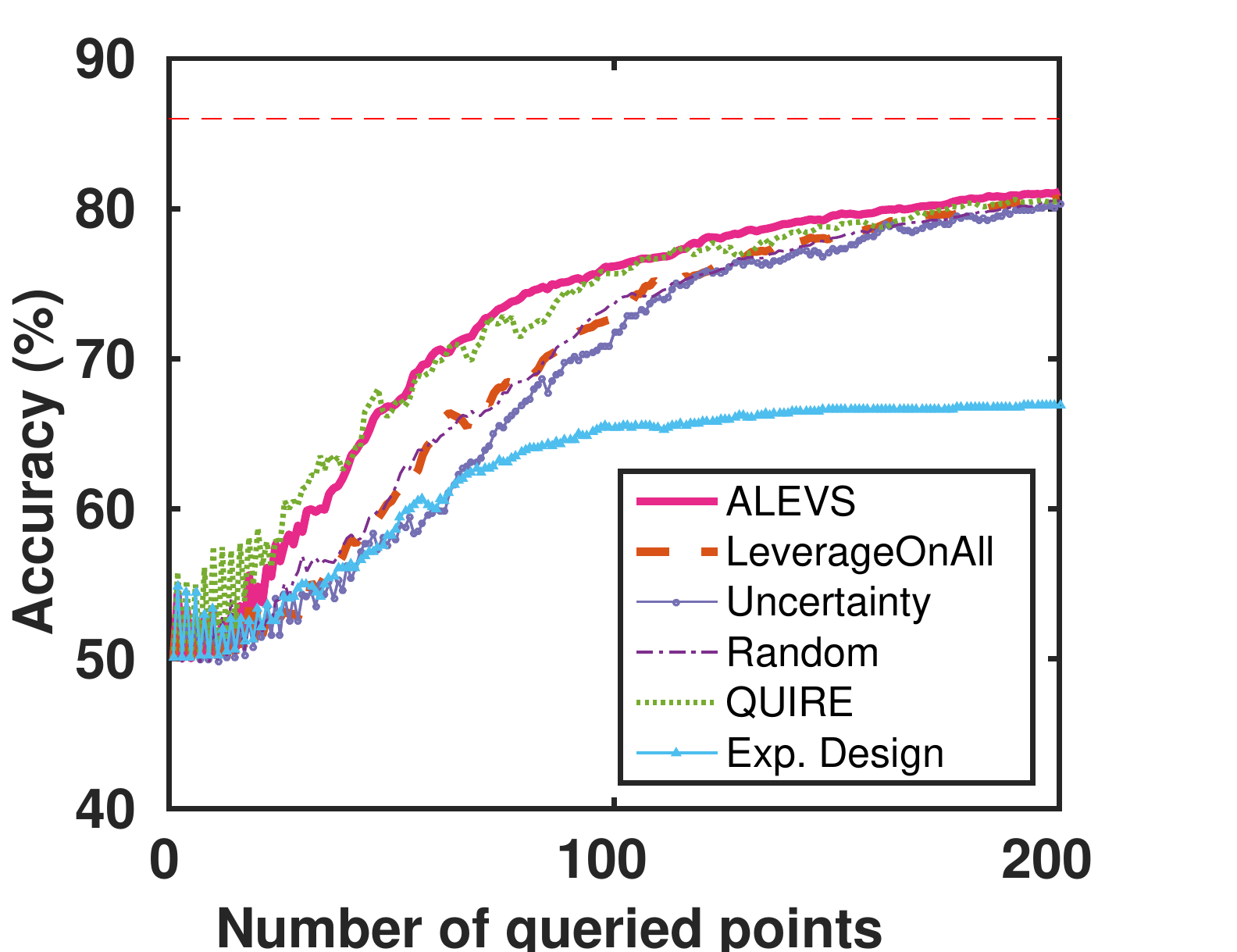}
		\label{g241c_acc:seq}
	\end{subfigure}
	~
	\begin{subfigure}[b]{\newW\textwidth}
		\caption{\textit{UvsV}}
		\vspace{\newSpace}
		\includegraphics[width=\textwidth]{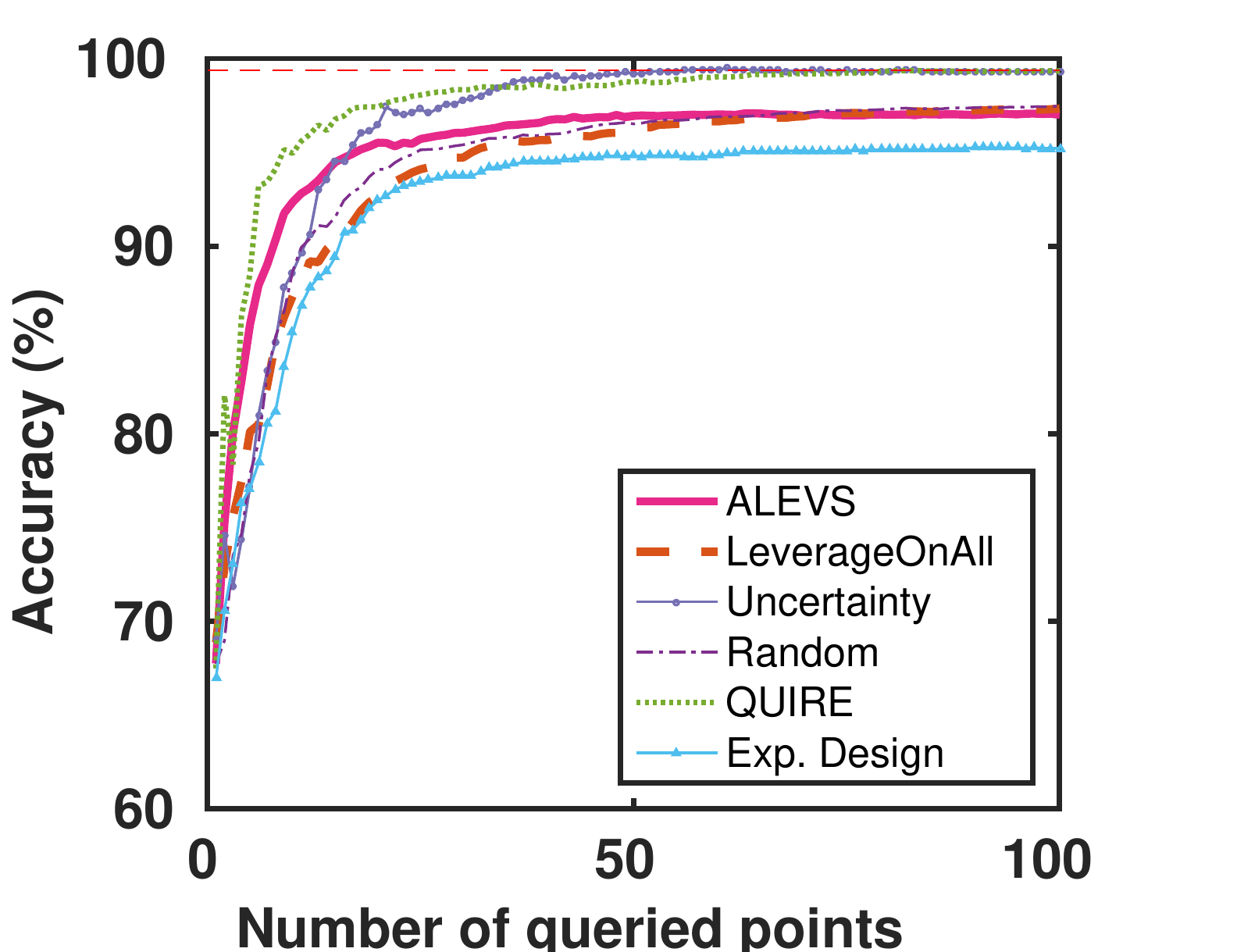}
		\label{letter2_acc:seq}
	\end{subfigure}
	\vskip \newSkip
	\begin{subfigure}[b]{\newW\textwidth}
		\caption{\textit{USPS}}
		\vspace{\newSpace}
		\includegraphics[width=\textwidth]{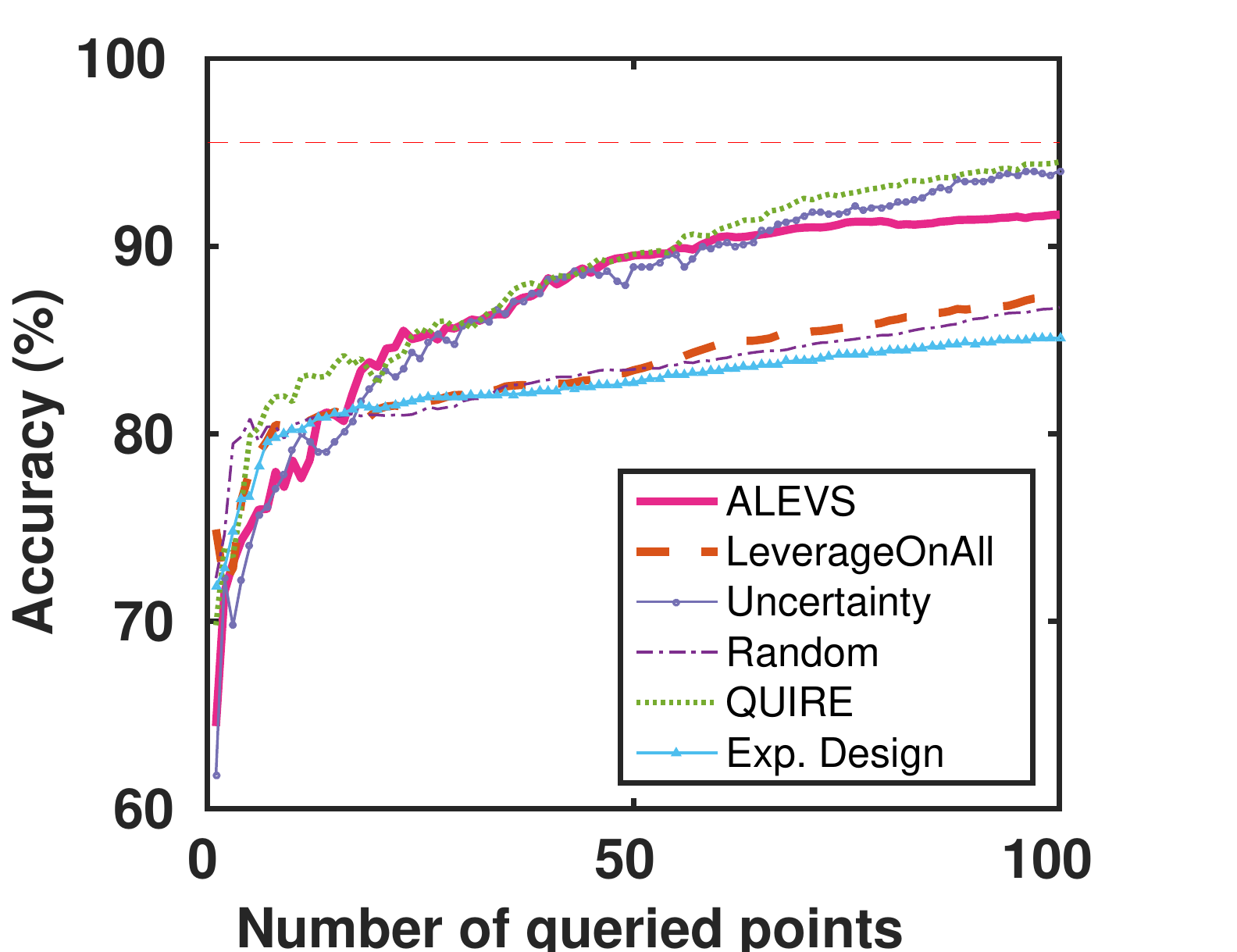}
		\label{USPS_acc:seq}
	\end{subfigure}
	~
	\begin{subfigure}[b]{\newW\textwidth}
		\caption{\textit{twonorm}}
		\vspace{\newSpace}
		\includegraphics[width=\textwidth]{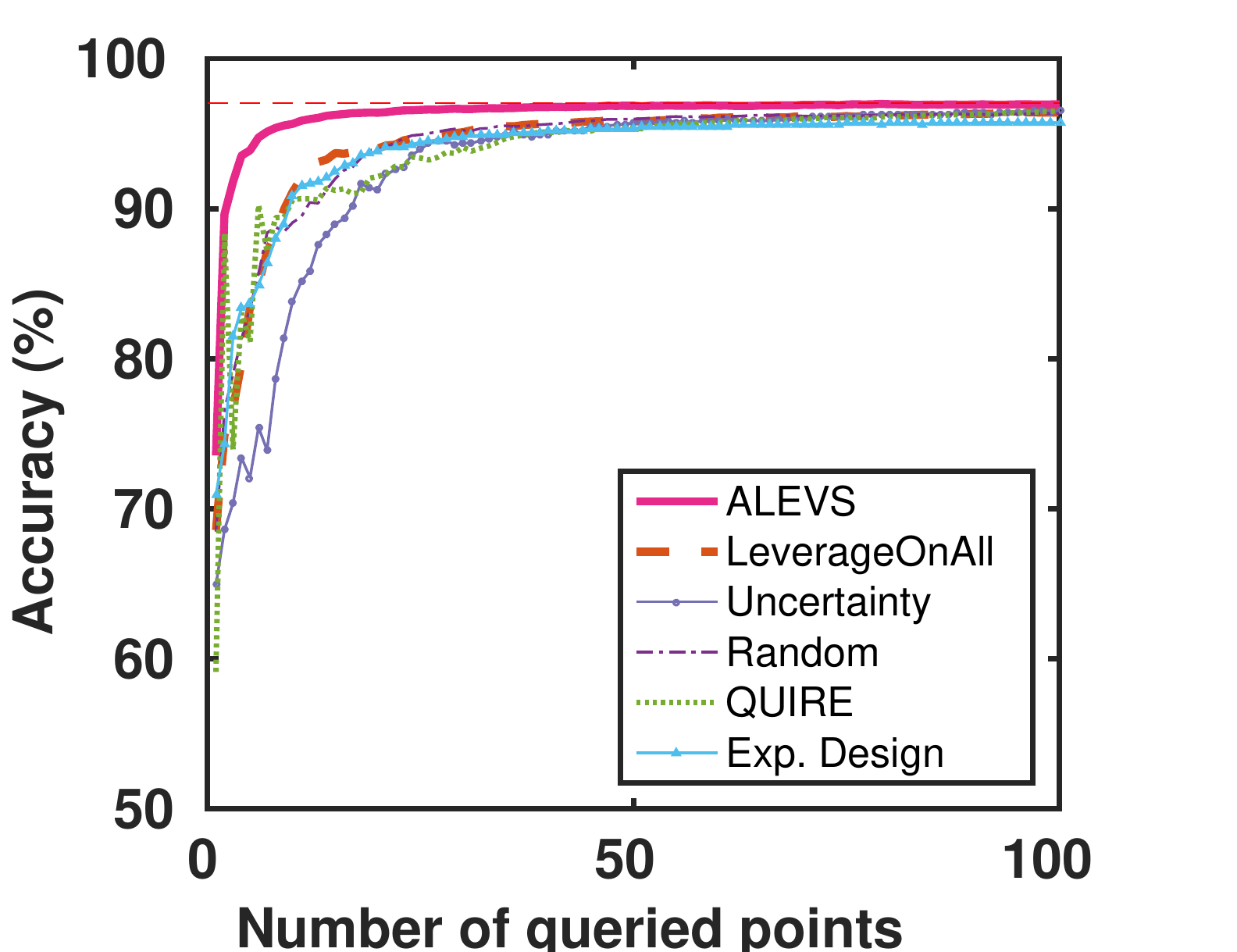}
		\label{twonorm_acc:seq}
	\end{subfigure}
	\vskip \newSkip
	\begin{subfigure}[b]{\newW\textwidth}
		\caption{\textit{ringnorm}}
		\vspace{\newSpace}
		\includegraphics[width=\textwidth]{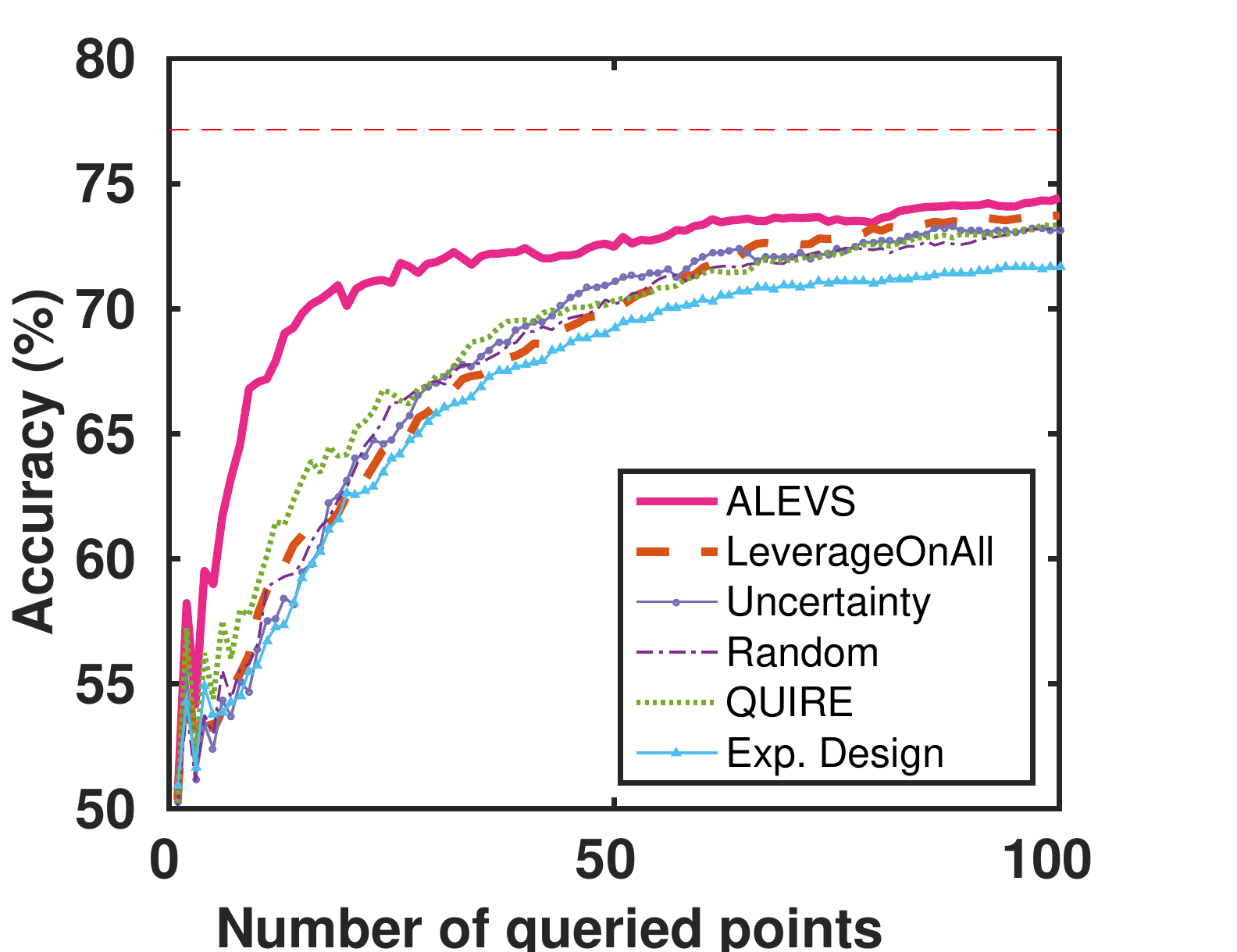}
		\label{ringnorm_acc:seq}
	\end{subfigure}
	~
	\begin{subfigure}[b]{\newW\textwidth}
		\caption{\textit{spambase}}
		\vspace{\newSpace}
		\includegraphics[width=\textwidth]{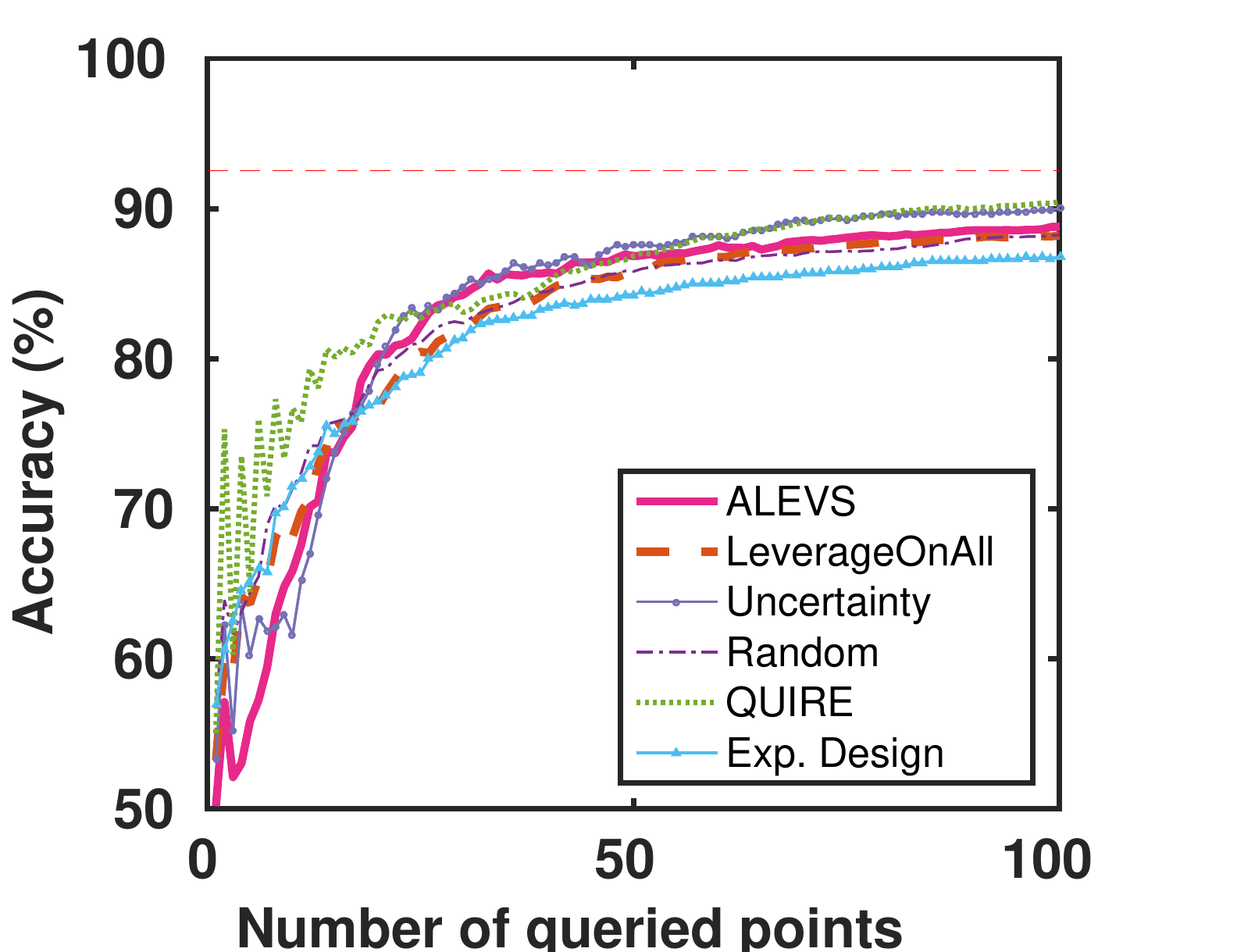}
		\label{spambase_acc:seq}
	\end{subfigure}
	\caption{Comparison of \texttt{ALEVS} with other methods on classification accuracy. The dashed line indicates the accuracy obtained when model is trained with all of the training data.}
	\label{figs:seq}
\end{figure*}

We observe that \texttt{ALEVS} outperforms random sampling and uncertainty sampling  in  almost all datasets (Fig.~\ref{figs:seq}). Exceptions to this are the \textit{USPS} and \textit{spambase} dataset, for which \texttt{ALEVS} 
performs as good as the uncertainty sampling but not better.  \texttt{ALEVS}' performance is consistently better than random sampling in the first 50 iterations of active sampling (Table~\ref{wtl:50}). For iterations between $50-100$, the two methods tie in \textit{spambase} and \textit{UvsV} datasets and random sampling performs better in  \textit{digit1} (Table~\ref{wtl:100}). In \textit{UvsV} dataset uncertainty sampling works well against all methods. When compared to transductive experimental design, \texttt{ALEVS} performs better in all datasets except  \textit{digit1} in the iterations 50-100, where the two methods tie (Table~\ref{wtl:50} and Table~\ref{wtl:100}).

When comparing the performance of \texttt{ALEVS} against \texttt{QUIRE}, there are three different groups of datasets. First group of datasets comprise \textit{ringnorm} and \textit{twonorm}, for which \texttt{ALEVS} decisively outperforms \texttt{QUIRE}. In the second group of datasets, \texttt{ALEVS} either outperforms \texttt{QUIRE} or ties with it at a subset of the iterations. In \textit{digit1},  \texttt{ALEVS} outperforms in the first 50 iterations. For the \textit{g241c} dataset, \texttt{ALEVS} either ties or performs worse than \texttt{QUIRE} in early iterations but the performance of \texttt{QUIRE} is not consistent in early iteration (Fig.~\ref{g241c_acc:seq}).  In this dataset, \texttt{ALEVS} holds up  with \texttt{QUIRE} and outperforms it at later iterations. In the \textit{3vs5} dataset \texttt{QUIRE} and \texttt{ALEVS} tie in most of the 50 iterations. There are also datasets, where \texttt{ALEVS} lags behind \texttt{QUIRE}. These include \textit{UvsV}, \textit{USPS} and \textit{spambase}. For the \textit{UvsV}, few labels are sufficient to obtain good accuracy and the performances of different methods do not differ dramatically (Fig.~\ref{USPS_acc:seq}).  For the \textit{spambase} dataset, \texttt{ALEVS} shows promising performance around iterations 30 and 40 (Fig.~\ref{spambase_acc:seq}). One observes that generally \texttt{ALEVS} manages to find effective examples for querying in the early iterations. Therefore, a strategy that combines \texttt{ALEVS} with a method that performs poorly in early iterations but do better in later iterations -- such as uncertainty sampling -- could lead to a strong active learner. Such a hybrid classifier will be explored in future work.

To understand whether computing class specific kernel matrices have any merit, we compare \texttt{ALEVS} against LevOnAll,  we observe that \texttt{ALEVS} consistently outperforms it. Thus calculating the leverage scores within each class is better at finding the influential data points than calculating them on the whole pool.

\vspace{-2mm}
\begin{figure*}[ht]
	\centering
	\vskip \newSkip
	\begin{subfigure}[b]{\newW\textwidth}
		\caption{\textit{5:1 class ratio}}
		\vspace{\newSpace}
		\includegraphics[width=1.1\textwidth]{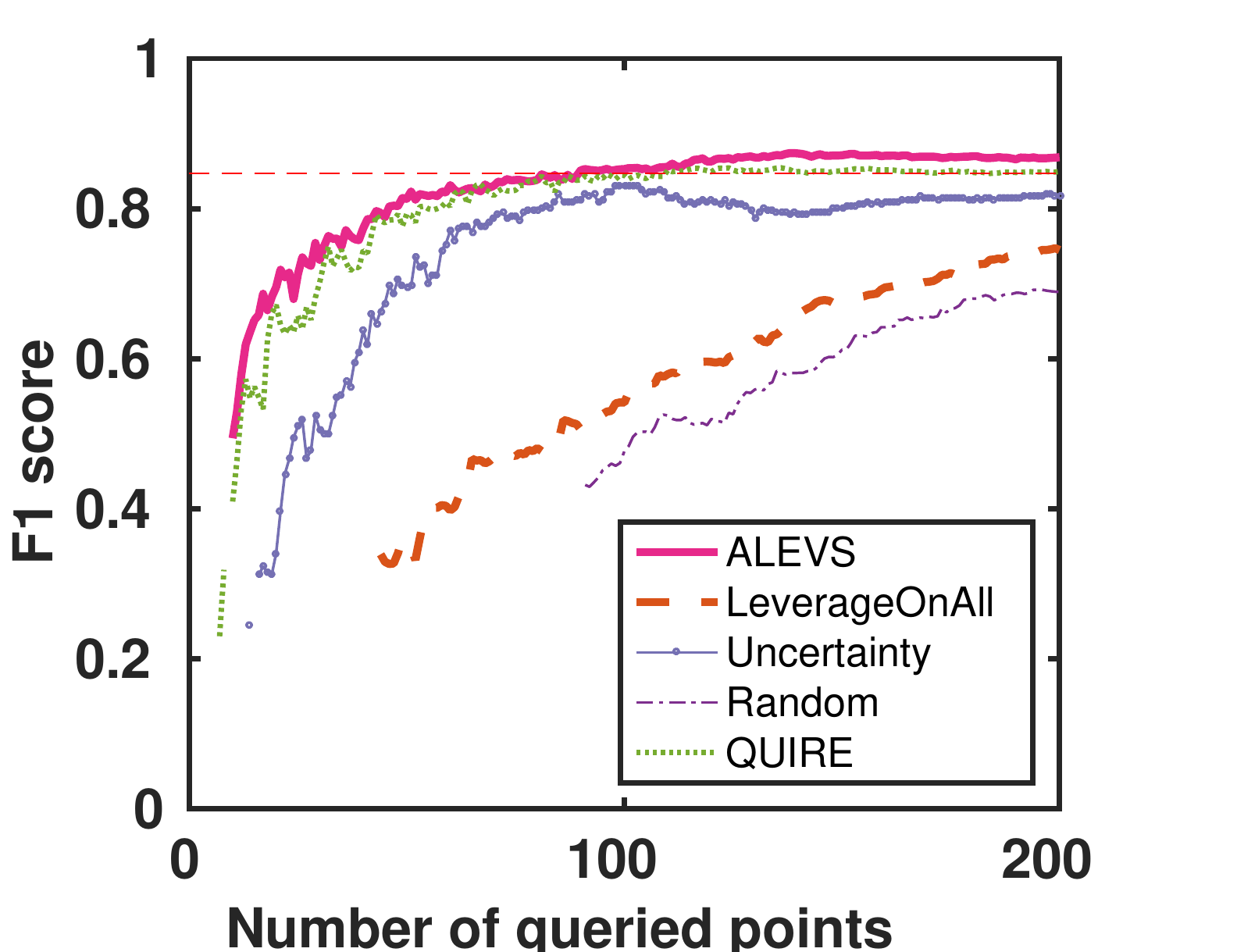}
		\label{digitimbalance5:seq}
	\end{subfigure}
	~
	\begin{subfigure}[b]{\newW\textwidth}
		\caption{\textit{10:1 class ratio}}
		\vspace{\newSpace}
		\includegraphics[width=1.1\textwidth]{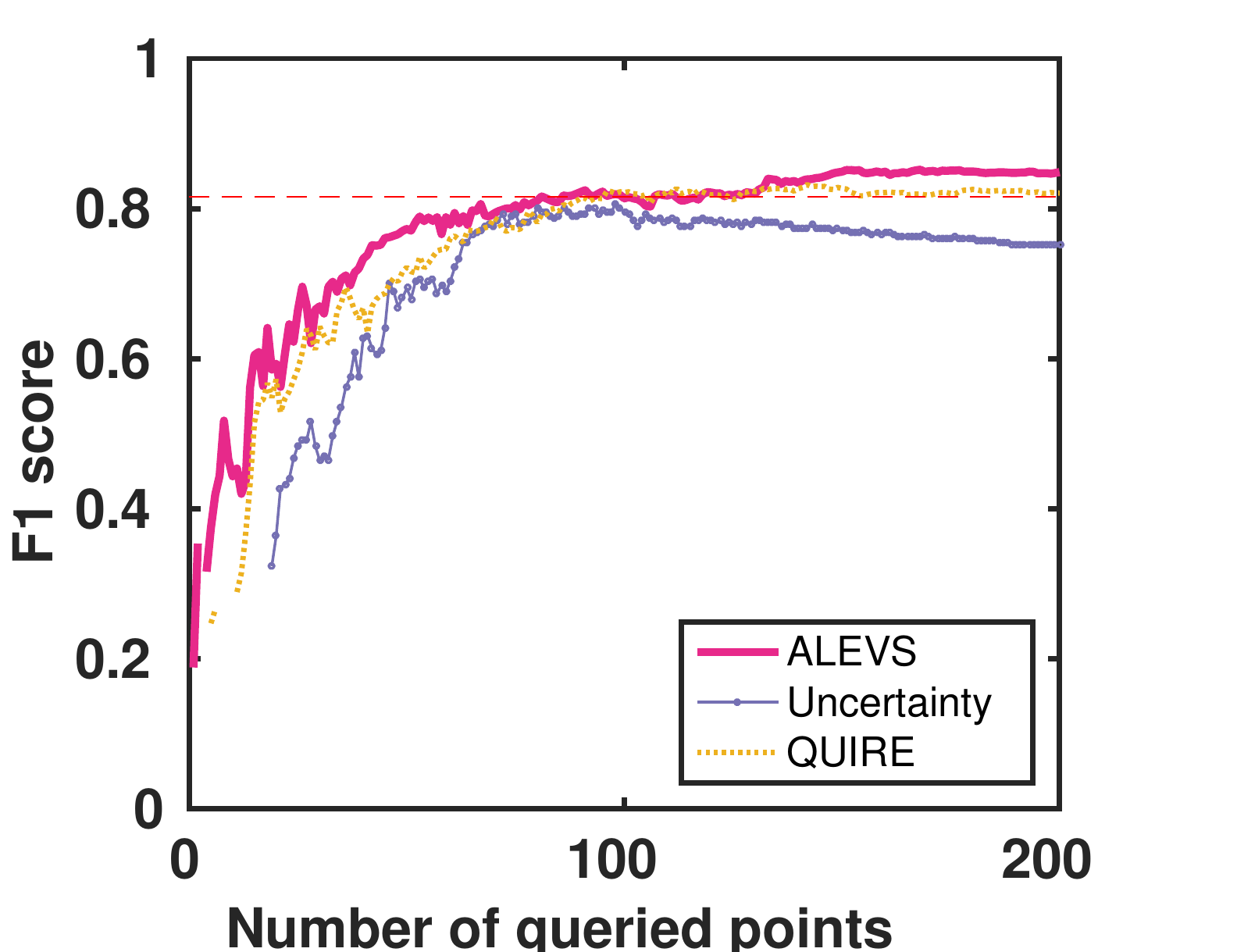}
		\label{digitimbalance10:seq}
	\end{subfigure}
	\caption{Comparison of \texttt{ALEVS} with other methods on imbalanced datasets. The dashed horizontal line indicates the F1 achieved when trained on the whole training data.}
\vspace{-2mm}
	\label{figs:imbalance}
\end{figure*}

\vskip \newSkip
\begin{table}[ht]
	\centering
	\caption{+/- class ratios of the queried sets by each method. }
\vspace{-2mm}
	\label{tbl:imbalance}
	\scalebox{0.85}{
		\begin{tabular}{ccccccc}
			\hline
			Class ratio& ALEVS & QUIRE & UNCERTAINTY & RANDOM & LEV-ON-ALL & EXP. DESIGN\\ \hline
			\hspace{0.5em}5:1              & 1.28  & 1.60  & 2.41       & \hspace{0.5em}5.32   & \hspace{0.5em}4.80   &inf*       \\ 
			10:1             & 1.31  & 1.64  & 2.30       & 10.55  & 10.19           &inf * \\ \hline
		\end{tabular}
	}
\end{table}

To understand how \texttt{ALEVS} reacts to class imbalance, we sample the \texttt{5vs3} dataset with two different class ratios, 5:1 and 10:1 and repeat the experiments on this dataset. The experimental set up is identical to that of in the previous section, the only difference being the adoption of F1 score in measuring performance.  As depicted in the Fig.~\ref{figs:imbalance}, \texttt{ALEVS} successfully copes with the class imbalance and outperforms other methods. The transductive experimental design method is unable to handle the class imbalance and returns F1 scores of 0; therefore, we exclude its results from each figure. Similarly, with 10:1 class ratio, Lev-On-All, and random sampling return very poor F1 scores and are excluded from the graph. To understand why different methods would handle the class imbalance differently, we analyze the class label distribution of the queried set. Table~\ref{tbl:imbalance} illustrates that those that are robust to class imbalance sample equally from both classes whereas those that fail sample close to the original class distribution. Since the classifier is provided with a balanced dataset, the overall training process is not hurt by the unequal class distribution. As the class label balance becomes detorioted, if one adopts cost-sensititve training methods, this could solve the problem but adds an extra complexity  layer to the problem. 

We also compare methods in term of their running times. The querying step of \texttt{ALEVS} involves the calculation of eigenvalue decomposition of the kernel matrices; however, in practice, this does not cause a computational bottleneck. We summarize the average CPU times for selecting one example in a single iteration from the unlabeled data pool in Appendix Fig.~\ref{figs:time-seq}.  As the figures show, \texttt{ALEVS} is as fast as uncertainty sampling.

\vspace{-5mm}
\section{Results for DBALEVS}
\label{sec:dbalevsresults}

We compare \texttt{DBALEVS} with the following approaches: 
\vspace{-1mm}
\begin{itemize}
	\item \textbf{Random sampling:} Randomly selects $b$ examples uniformly at random from the unlabelled pool.
	\item \textbf{Uncertainty sampling:} Selects  $b$ examples with maximal uncertainty. 
	\item \textbf{Top leverage sampling \texttt{(Top-Lev)}}: Computes the leverage score on the whole pool at the beginning without paying attention to class membership and selects the top $b$ examples based on their leverage scores.
	\item \textbf{Near-optimal batch mode active learning (\texttt{NearOpt}) :}  \texttt{NearOpt} \cite{chen2013near} selects a batch of instances using adaptive submodular optimization. 
\end{itemize}
\vspace{-2mm}
%
To evaluate the performance of \texttt{DBALEVS} we run experiments on six different datasets (Table~\ref{tbl:datasets-batch}). The details of these datasets can be found in Appendix Section X. Each dataset is divided into training and held out test sets. We start with four randomly selected labeled examples, two from each class. Batch size, $b$, is set to 10 and diversity tradeoff parameter $\alpha$ is set to 0.5 for each dataset except \textit{ringnorm}, wherein that dataset it is set to 0.1. At each iteration, the classifier is updated for all the methods with the training data, and the accuracy values are calculated on the same held-out test data. In all experiments, an SVM classifier with RBF kernel is used. For each dataset, the experiment is repeated $50$ times with random splitting of the training and the test data and random initial selection of labeled examples. For the set function defined in Definition~\ref{set-score} to be submodular, the kernel function should satisfy: $0\leq K(i,j)\leq 1$. We use RBF kernel in calculating the set scoring function, which is in this range. However, the method is compatible with other kernel functions as long as they are normalized within the range $[0, 1]$. For optimizing the set scoring function in Definition~\ref{set-score}, we use submodular function optimization toolbox \cite{krause2010sfo}.


\begin{table}
	\caption{Datasets for \texttt{DBALEVS} experiments, number of samples, features and the positive to negative class ratio (+/-)  are listed.}
	\centering
	\begin{tabular}{llllll}
		\hline\noalign{\smallskip}
		{\sc Dataset\quad} & {\sc Size} & {\sc \quad +/-} \\ \hline
		\noalign{\smallskip}
		\textit{autos} & 1986 x 11009 &\quad 1.00 \\
		\textit{hardware} & 1945 x 9877 &\quad 1.00 \\
		\textit{sport} & 1993 x 11148 &\quad 1.00 \\
		\textit{ringnorm} & 7400 x 20 &\quad 1.00 \\
		\textit{3vs5} & 13454 x 784 &\quad 1.20 \\
		\textit{4vs9} & 13782 x 784 &\quad 1.20 \\ \hline
	\end{tabular}
	\label{tbl:datasets-batch}
\end{table}




Fig.~\ref{figs-batch} shows the average classification accuracy values of \texttt{DBALEVS} and other approaches at each iteration of active sampling. Table~\ref{wtl:batch} summarizes the win, tie and lost counts of \texttt{DBALEVS} versus each of the competing methods on the 1-sided paired sample $t$-test at the significance level of 0.05.

\vspace{-3mm}
\begin{table}[!ht]
	\centering
	\caption{Win/Tie/Loss counts for \texttt{DBALEVS} against the competitor algorithm.}	
\vspace{-2mm}
\scalebox{0.85}{
	\begin{tabular}{|c|c|c|c|c|c|c|}
		\hline
		{\sc Dataset} & vs.\textsc{NearOpt} & vs.\textsc{Top-Lev} & vs.\textsc{Random} & vs.\textsc{Uncertainty} \\ \hline 
		\textit{ringnorm} & 20/0/0 & 0/1/19 & 20/0/0 & 20/0/0 \\ 
		\textit{autos} & 18/11/1 & 49/8/3 & 49/11/0 & 51/6/3 \\ 
		\textit{hardware} & 25/5/0 & 57/3/0 & 54/6/0 & 57/3/0 \\ 
		\textit{sport} & 25/5/0 & 51/9/0 & 46/14/0 & 54/6/0 \\ 
		\textit{4vs9} & 18/2/0 & 20/0/0 & 18/2/0 & 19/1/0 \\ 
		\textit{3vs5} & 17/3/0 & 18/2/0 & 18/2/0 & 19/1/0 \\ \hline
	\end{tabular}
\vspace{-2mm}
	\label{wtl:batch}
}
\end{table}
\vspace{-2mm}

\renewcommand{\newW}{0.4}
\begin{figure*}[!ht]
	\centering
	\vskip \newSkip
	\begin{subfigure}[b]{\newW\textwidth}
		\caption{\textit{autos}}
		\vspace{\newSpace}
		\includegraphics[width=\textwidth]{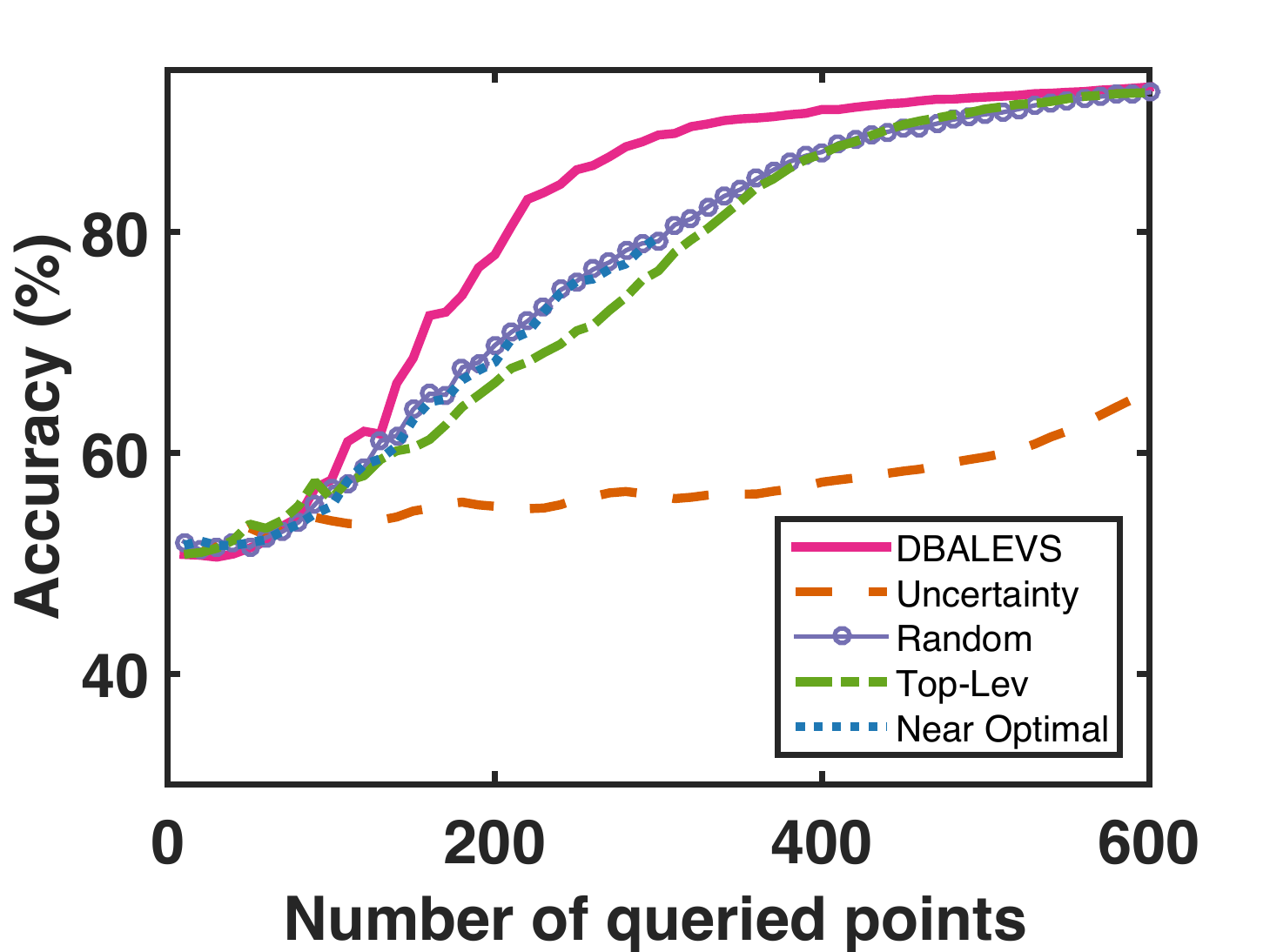}
		\label{autos}
	\end{subfigure}
	~
	\begin{subfigure}[b]{\newW\textwidth}
		\caption{\textit{hardware}}
		\vspace{\newSpace}
		\includegraphics[width=\textwidth]{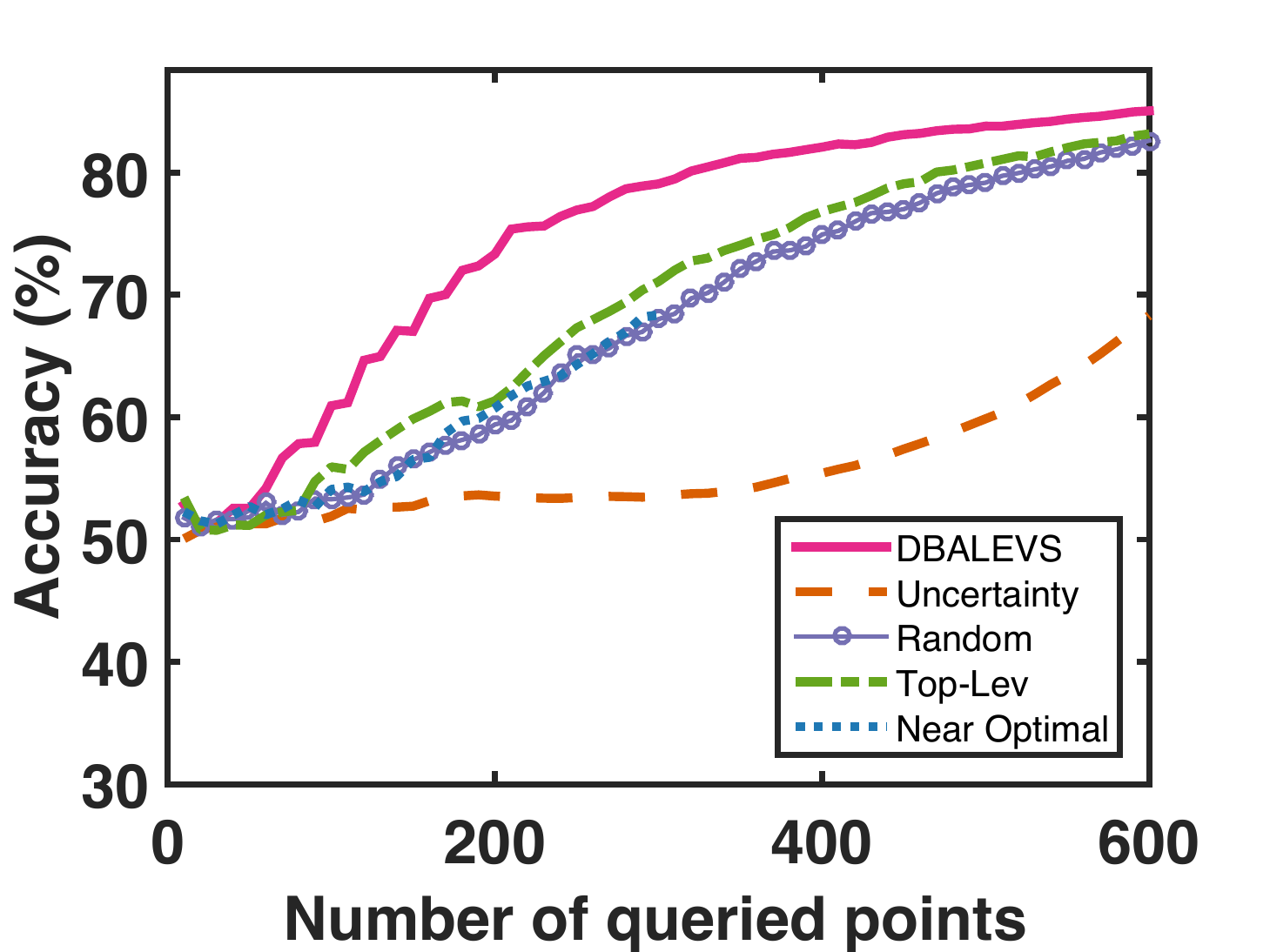}
		\label{hardware}
	\end{subfigure}
	\vskip \newSkip
	\begin{subfigure}[b]{\newW\textwidth}
		\caption{\textit{sport}}
		\vspace{\newSpace}
		\includegraphics[width=\textwidth]{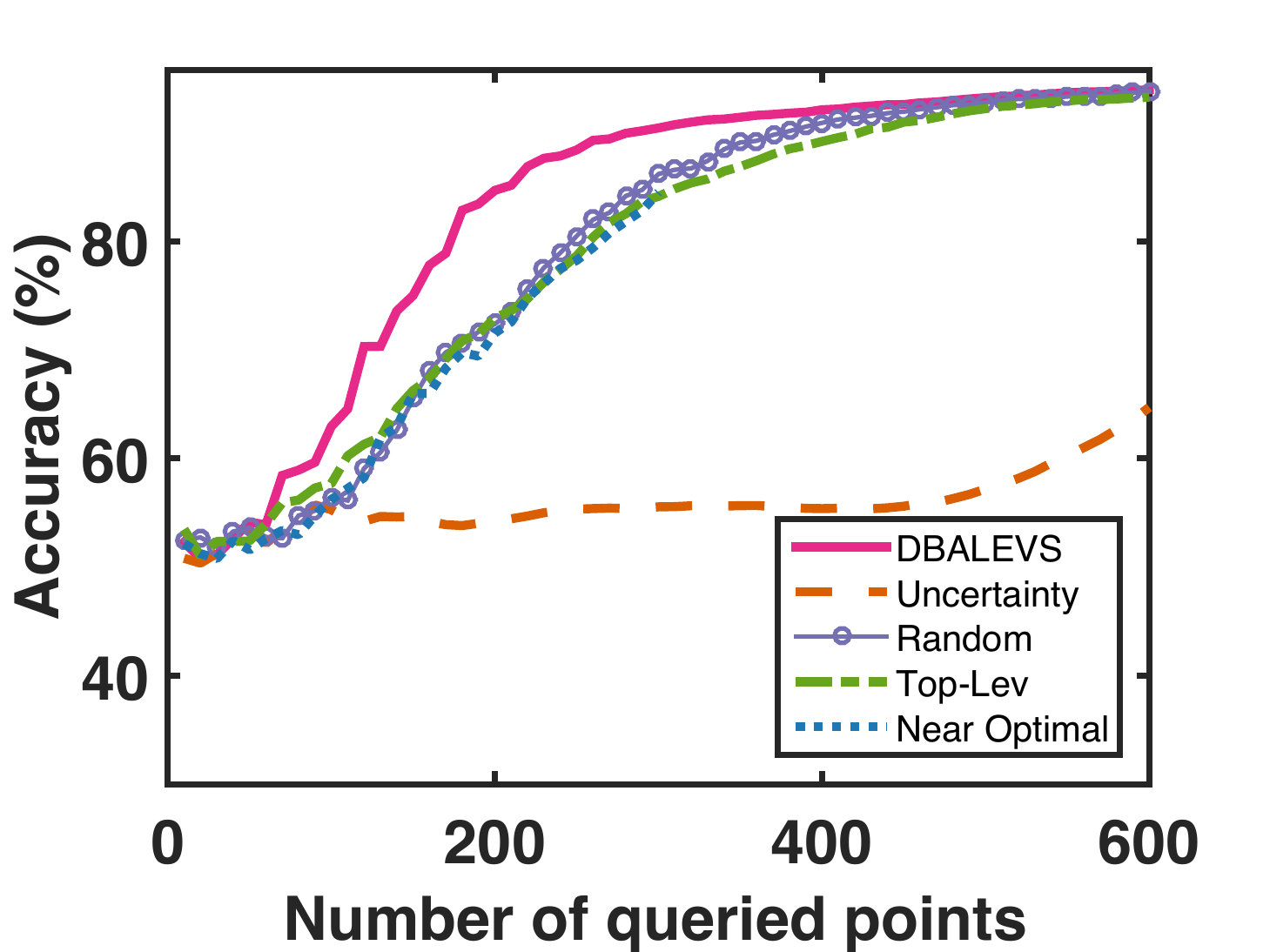}
		\label{sport}
	\end{subfigure}
	~
	\begin{subfigure}[b]{\newW\textwidth}
		\caption{\textit{ringnorm}}
		\vspace{\newSpace}
		\includegraphics[width=\textwidth]{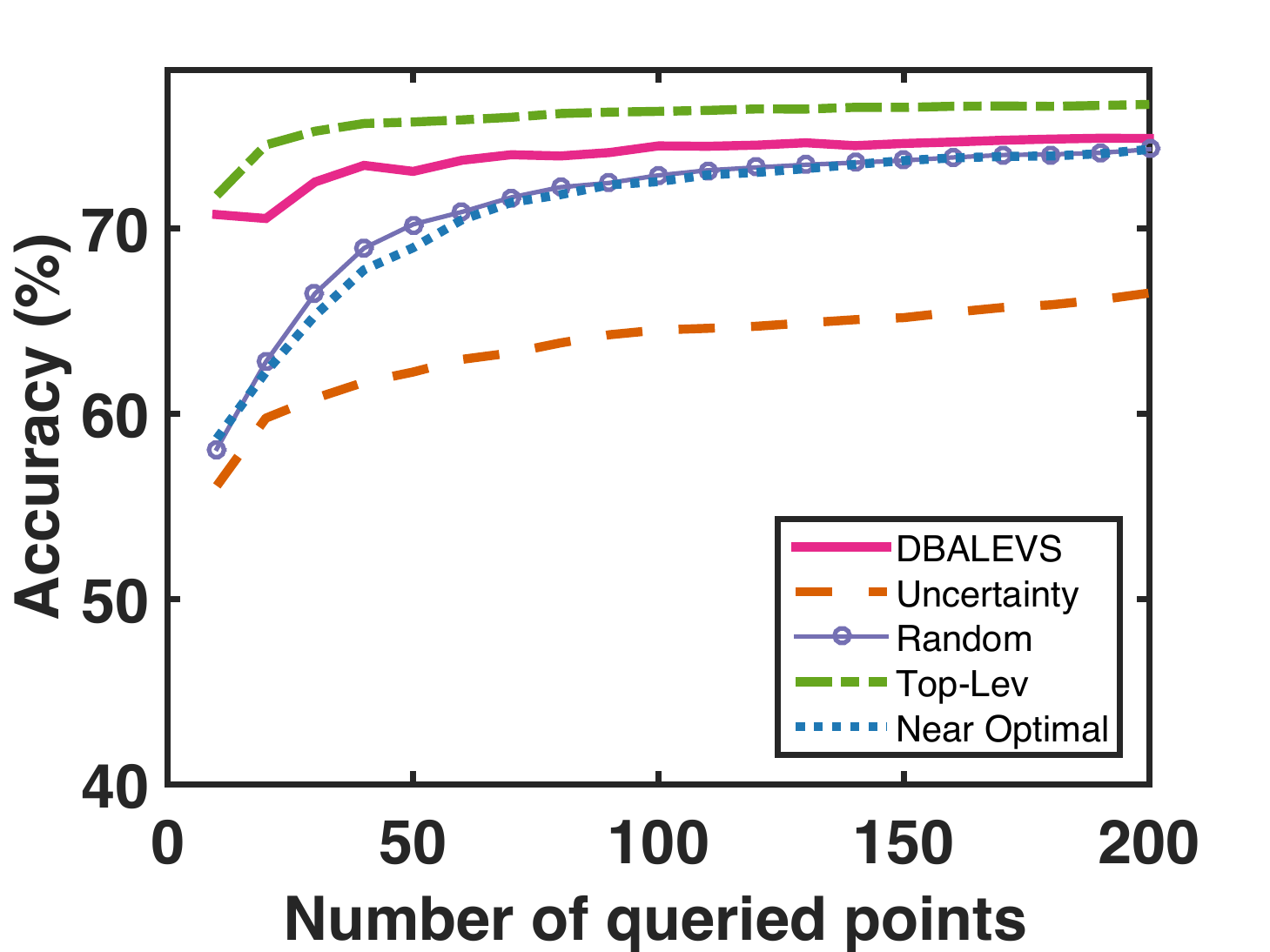}
		\label{ringnorm-batch}
	\end{subfigure}
	\vskip \newSkip
	\begin{subfigure}[b]{\newW\textwidth}
		\caption{\textit{3vs5}}
		\vspace{\newSpace}
		\includegraphics[width=\textwidth]{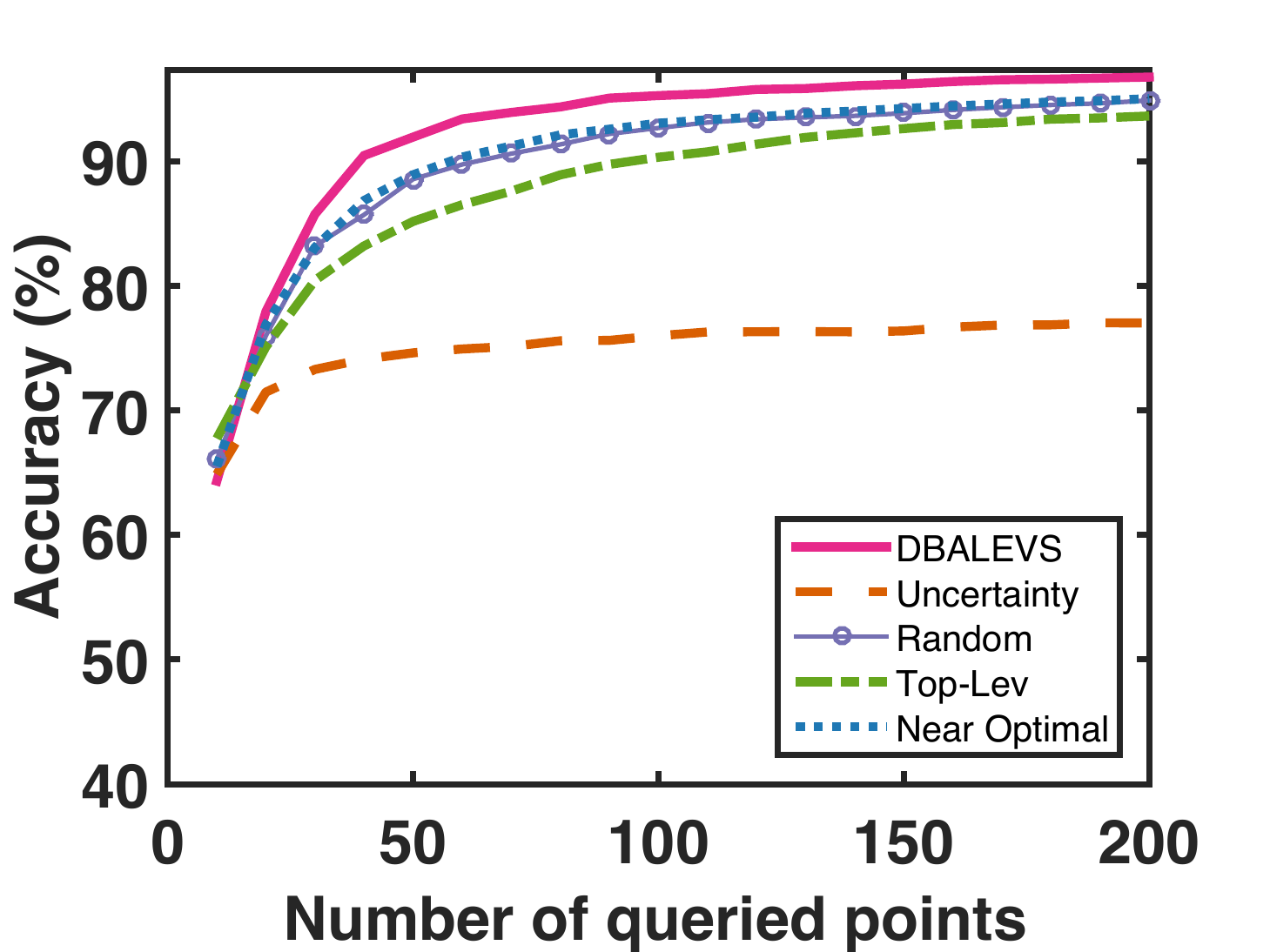}
		\label{3v5}
	\end{subfigure}
	~
	\begin{subfigure}[b]{\newW\textwidth}
		\caption{\textit{4vs9}}
		\vspace{\newSpace}
		\includegraphics[width=\textwidth]{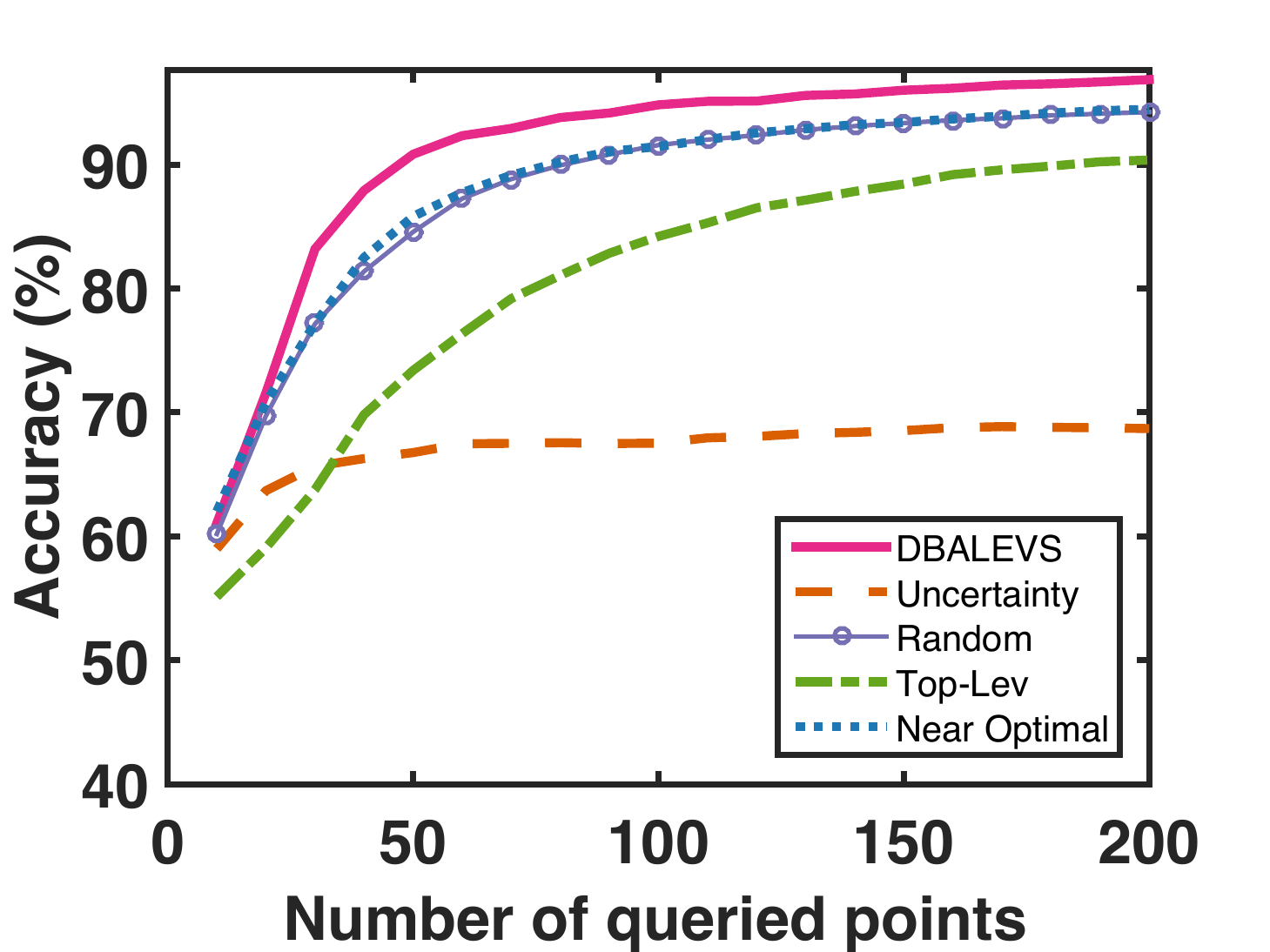}
		\label{4vs9}
	\end{subfigure}
\vspace{-4mm}
	\caption{Comparison of \texttt{DBALEVS} with other methods on classification accuracy.}
	\label{figs-batch}
\end{figure*}
\vspace{-4mm}

\texttt{DBALEVS} outperforms \texttt{NearOpt} in almost all cases. For all of the datasets, \texttt{DBALEVS} wins over \texttt{NearOpt} except for few tie cases  (See Table~\ref{wtl:batch}), and \texttt{DBALEVS} never loses against \texttt{NearOpt}. We also observe that the performances of \texttt{NearOpt} and random sampling are comparable. The results indicate that \texttt{DBALEVS} outperforms random sampling and uncertainty sampling approaches in all of the datasets. One interesting observation is, uncertainty sampling performs poorly in the batch mode setting. The main cause for this poor performance is the strong dependency of uncertainty sampling on the initial hypothesis. If the initial hypothesis formed from the initially labeled examples is not reasonable, then the query selection is affected by it because non-informative instances are selected at the subsequent iterations. One other drawback of this approach is that it fails to model the dependencies among the selected instances.

Another result is that random sampling performance is not bad and outperforms uncertainty sampling in all of the datasets. This surprising performance is also noted by others \cite{gu2014batch}. It could be attributed to the fact that datasets that are available for running these experiments are not truly random; therefore, if the batch size is large enough, the examples provide valuable information to learn the datasets. 

When compared to Top-Lev baseline, we observe that \texttt{DBALEVS} consistently other contenders, only exception being the \textit{ringnorm} dataset. One possible explanation pertinent to this data is the structure of the dataset: since \textit{ringnorm} dataset is artificially created from multivariate Gaussians, by querying points with maximal leverage scores, the learner receives labels from dense regions in different clusters without the need for diversification.
\vspace{-4mm}

\section{Conclusions and Future Work}
\label{sec:conclusion}
\vspace{-2mm}
In this study, we present a new query measure for active learning that is based on statistical leverage scores. We propose two novel algorithms based on this querying strategy: a sequential-mode algorithm \texttt{ALEVS} and a batch mode algorithm \texttt{DBALEVS}. Our experimentation on 8 different datasets shows that the  use of statistical leverage scores as an alternative strategy to find examples that are influential. \texttt{ALEVS} achieves superior performance compared to common baselines such as uncertainty sampling and random selection, an older method transductive experimental design. More importantly, \texttt{ALEVS} performs better or equally well in terms of classification accuracy when compared to a state-of-the-art approach, \texttt{QUIRE} \cite{quire}, which is documented to outperform other methods. Moreover, \texttt{ALEVS} runs much faster than \texttt{QUIRE}.We also show emprically that ALEVS is also robust to class imbalance. 

The second proposed algorithm, \texttt{DBALEVS} is designed to address the batch mode active learning setting.  We formulate a set scoring function that rewards examples with high-leverage scores and penalizes the inclusion of similar examples into the set.  We prove that this function is submodular, monotone and non-negative, which enables us to use a greedy algorithm for solving the submodular maximization problem that produces a solution that is constant factor approximate to the optimal solution. Our experiments on 6 different datasets  show that the idea of incorporating leverage scores and kernel matrix entries to find an influential and diverse batch of points is an effective strategy. \texttt{DBALEVS} performs well against common baselines random sampling and uncertainty sampling; and against \texttt{NearOpt} \cite{chen2013near}, which employs a newly introduced framework called adaptive submodularity. These results show that statistical leverage score is an effective measure for detecting which examples to query in a pool of unlabeled examples.

The work presented here can be extended in different directions. We observe \texttt{ALEVS} and \texttt{DBALEVS} are especially effective in early iterations a stage where many of the existing algorithms inadequately perform. Therefore, a future direction can be developing a hybrid strategy where the proposed method works in cooperation with other methods. For example, the framework proposed in this study does not incorporate any knowledge about the uncertainty of the class labels. 
One possible future direction would be to study the adaptability of these methods to stream-based selective sampling active learning approaches. Finally, to understand the effectiveness of the methods, we did not consider the more complex active learning scenarios such as the non-uniform cost of labels and noisy, reluctant experts. The general framework presented here can be further investigated to incorporate these alternative settings. 
\vspace{-4mm}

\begin{acknowledgements}
O.T. acknowledges support from Bilim Akademisi - The Science Academy, Turkey under the BAGEP program.
\end{acknowledgements}

\small
\bibliographystyle{abbrv}

\bibliography{ActiveLearning}   

\normalsize

%
%

\newpage
\appendix

\section*{APPENDIX}
\setcounter{section}{1}
\subsection{Notation table}
\label{app:notation}
\begin{table}[!ht]
	\centering
	\caption{Notation used throughout the article.
		\label{tbl:notations}}{
		\begin{tabular}{|l|l|}
			\hline
			{\bf Symbol} & {\bf Explanation} \\ \hline
			$\mathcal X$ & input space \\
			$\mathcal Y$ & output space \\
			$\mathcal D$ & dataset \\ \hline
			$\vec x$ & feature vector of a data point \\
			$ q$ & queried example \\
			$y$ & class label \\
			$\mathcal O$ & labeling oracle \\ 
			$h$ & classifier \\ \hline
			$\tau$ & threshold parameter for rank parameter selection \\ 
			$b$ & batch size\\
			$S_\mathrm q$ & queried batch \\
			$\alpha$ & diversity trade-off parameter \\ \hline
			$\ell$ & statistical leverage score\\
			$ \lambda$ & eigenvalue \\
			$k$ & rank parameter for calculating truncated leverage scores \\ \hline
			$\mat X$ & input feature matrix \\
			$\Phi$ & feature mapping \\
			$K(i,j)$ & kernel function evaluated on examples $i$ and $j$ \\
			$\mat K$ & kernel matrix \\
			$\mat K_{(i)}$ & $i$-th row of $\mat K$\\
			$\mat K_{ij}$ & element in $i$-th row and $j$-th column of matrix $\mat K$ \\
			$p$ & kernel parameter configuration \\ 
			$\sigma$ & scale parameter of RBF kernel \\
			$d$ & degree of polynomial kernel \\
			$c$ & coefficient of polynomial kernel \\
			\hline
	\end{tabular}}
\vspace{-10mm}
\end{table}
\vspace{-10mm}
\subsection{Proving $F$ is monotone}
\normalsize
\label{app:proof1}

\begin{proposition}[Monotonicity]
	\label{mon}
	$F$ is a monotonically non-decreasing set function when input set size is at most $M$ and for kernel values $K(i, j) \in [0, 1]$ $\forall i $ and $j$.
\end{proposition}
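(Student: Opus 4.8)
The plan is to mimic the computation already used to establish submodularity in Proposition~\ref{subm}, namely to obtain a clean closed form for the marginal gain $F(S \cup \{x\}) - F(S)$ and then argue it is non-negative under the stated hypotheses. First I would fix a set $S \subseteq V$ with $|S| \leq M - 1$ (so that $S \cup \{x\}$ still satisfies the cardinality bound $|S \cup \{x\}| \leq M$) and an element $x \in V \setminus S$. Expanding $F$ using Definition~\ref{set-score} exactly as in the submodularity proof, the sum-of-leverage-scores part contributes $\ell_x + 1$, and the pairwise-kernel part contributes $-\frac{\alpha}{M}\sum_{i \in S} K(x,i)$, so that
\begin{equation*}
F(S \cup \{x\}) - F(S) = \ell_x + 1 - \frac{\alpha}{M}\sum_{i \in S} K(x,i).
\end{equation*}
Monotonicity is then equivalent to showing this quantity is $\geq 0$ for every admissible $S$ and $x$.

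Next I would bound the negative term. Since $K(i,j) \in [0,1]$ for all $i,j$, we have $\sum_{i \in S} K(x,i) \leq |S| \leq M$, and since $\alpha \in [0,1]$ and $M > 0$, this gives $\frac{\alpha}{M}\sum_{i \in S} K(x,i) \leq \frac{\alpha}{M} \cdot M = \alpha \leq 1$. On the other hand, $\ell_x \geq 0$ because leverage scores are squared Euclidean norms of rows (Definition~\ref{def:lev}), so $\ell_x + 1 \geq 1$. Combining the two bounds,
\begin{equation*}
F(S \cup \{x\}) - F(S) = \ell_x + 1 - \frac{\alpha}{M}\sum_{i \in S} K(x,i) \geq 1 - 1 = 0,
\end{equation*}
which is exactly the monotone non-decreasing property. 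I would close by noting the hypothesis $|S| \leq M$ is precisely what makes the bound $\sum_{i\in S} K(x,i) \le M$ available; without it the diversity penalty could in principle overwhelm the $\ell_x + 1$ gain.

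I do not anticipate a serious obstacle here: the only subtlety is being careful about which set plays the role of "$S$" in the cardinality constraint — the marginal gain is evaluated as we grow from $S$ to $S\cup\{x\}$, and we need $|S| \leq M - 1$ (equivalently the enlarged set has size $\le M$), which is consistent with how \texttt{B-GreedyAlgorithm} sets $M = |A| + b$ and only ever queries up to $b$ additional points. One should also remember that in \texttt{DBALEVS} the \emph{unscaled} leverage scores are used (as the text emphasizes right after Algorithm~\ref{alg:greedy2}); this matters only for non-negativity of $F$ itself, not for monotonicity, since monotonicity only used $\ell_x \geq 0$, which holds for both the scaled and unscaled versions.
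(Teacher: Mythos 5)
Your proof is correct, and it is organized differently from the paper's. The paper proves monotonicity directly for an arbitrary nested pair $A \subseteq B \subseteq V$ with $|B| \le M$: it expands $F(A) - F(B)$, splits the extra kernel terms into the cross sum over $B\setminus A$ versus $A$ and the internal sum over $B \setminus A$, bounds these by $t|A|$ and $t^2$ with $t = |B|-|A|$, and concludes $F(A)-F(B) \le t(\tfrac{\alpha}{M}(|A|+t)-1) \le t(\alpha-1) \le 0$. You instead reduce to single-element increments, reusing the marginal-gain identity $F(S\cup\{x\})-F(S) = \ell_x + 1 - \tfrac{\alpha}{M}\sum_{i\in S}K(x,i)$ already derived in the submodularity proof, and bound the penalty by $\tfrac{\alpha}{M}|S| \le \alpha \le 1$. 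The two arguments rest on exactly the same inequalities ($\ell_i \ge 0$, $K \le 1$, $|B| \le M$, $\alpha \le 1$) and yield the same quantitative bound (your per-element gain of at least $1-\alpha$ telescopes to the paper's $t(1-\alpha)$), but yours is arguably cleaner since it avoids recomputing the difference for general nested sets; the only step you leave implicit is the telescoping itself, i.e.\ ordering the elements of $B\setminus A$ and summing the $t$ marginal gains, each evaluated at an intermediate set of size at most $M-1$ so that the cardinality hypothesis applies throughout. Your closing remarks are also accurate: the constraint $|S|\le M$ is exactly what caps the penalty at $\alpha$, and only the lower bound $\ell_x \ge 0$ (true for scaled and unscaled scores alike) is needed here, unlike in the non-negativity proof.
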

\vspace{-6mm}
\begin{proof}
	Consider two arbitrary sets, $A$ and $B$, where $A\subseteq B\subseteq V$ . And let $ t = \left\vert{B}\right\vert - \left\vert{A}\right\vert$, and $\left\vert{A}\right\vert \leq M$ and $\left\vert{B}\right\vert \leq M$. We need to show that the following inequality holds:
	\begin{eqnarray}
	& F(A) \leq F(B)
	\label{eq:mono}
	\end{eqnarray}
	Using Definition~\ref{set-score} for $F$:\\
	{\footnotesize
		\begin{align*}
		F(A) - F(B) 
		&= \bigg(\sum_{i\in A}(\ell_i+1) - \frac{\alpha}{M}\sum_{\substack{i,j \in A \\ i \neq j} } K(i,j) \bigg) 
		-
		\bigg(\sum_{i\in B}(\ell_i+1) - \frac{\alpha}{M} \sum_{\substack{i,j \in B \\ i \neq j} }K(i,j)\bigg) \\
		&=\bigg(\sum_{i\in A} \ell_i + \left\vert{A}\right\vert - \frac{\alpha}{M}\sum_{\substack{i,j \in A \\ i \neq j }}K(i,j)\bigg) 
		-
		\bigg(\sum_{i\in B}\ell_i + \left\vert{B}\right\vert - \frac{\alpha}{M}\sum_{\substack{i,j \in B \\ i \neq j}}K(i,j)\bigg) &&\\
		&=\frac{\alpha}{M}\bigg( \sum_{\substack{i,j \in B\\ i \neq j}}K(i,j) - \sum_{\substack{i,j \in A\\ i \neq j}}K(i,j)\bigg)
		-
		\sum_{i\in B}\ell_i + \sum_{i\in A}\ell_i - \left\vert{B}\right\vert + \left\vert{A}\right\vert &&\\
		&=\frac{\alpha}{M}\bigg( \sum_{\substack{i \in B\setminus A \\ j\in A}} K(i,j) + \sum_{\substack{i,j \in B\setminus A \\ i \neq j}}K(i,j) \bigg) - \sum_{ i \in B\setminus A} \ell_i - t 
		\end{align*}
	}
	The leftmost summation calculated over $t \left\vert{A}\right\vert$ terms and the second one sums over $t^2$ terms. Using the fact $K(i,j) \leq 1$, these terms can be at most $t \left\vert{A}\right\vert$ and $t^2$, respectively. Additionally, since $ 0 \leq \ell_i \leq 1$ the minimum value that $\sum_{ i \in B\setminus A} \ell_i$ can take is 0. Then the following inequality holds:
	\begin{align*}
	F(A) - F(B) &\leq \frac{\alpha}{M} \bigg( t \left\vert{A}\right\vert +t^2 \bigg) - t\\
	&\leq t \bigg( \frac{\alpha}{M} ( \left\vert{A}\right\vert + t ) - 1\bigg)\\
	&\leq t (\alpha - 1)\\
	&\leq 0
	\end{align*}
	Passing from the second line to the third line we used the fact that, $\left\vert{B}\right\vert \leq M$. As $F(A) - F(B) \leq 0$, $F$ is monotonically non-decreasing for sets with sizes smaller than or equal to $M$.
\end{proof}

\subsection{Proving $F$ is non-negative }
\label{app:proof2}
\begin{proposition}[Non-negativity]
	$F$ is a non-negative set function for sets with cardinality smaller than or equal to $M$.
\end{proposition}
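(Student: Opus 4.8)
The plan is to bound the two parts of $F$ separately: show the ``reward'' term $\sum_{i\in S}(\ell_i+1)$ is at least $|S|$, show the ``penalty'' term $\frac{\alpha}{M}\sum_{i\neq j}K(i,j)$ is at most $|S|$, and conclude that their difference is non-negative whenever $|S|\leq M$.

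First I would handle the reward term. By Definition~\ref{def:lev} the (unscaled) leverage scores satisfy $\ell_i\in[0,1]$, in particular $\ell_i\geq 0$; note that \texttt{DBALEVS} deliberately uses the unscaled scores in Definition~\ref{set-score}, which is exactly what keeps this bound valid. Hence $\sum_{i\in S}(\ell_i+1)\geq\sum_{i\in S}1=|S|$. Next I would bound the penalty term. Using $0\leq K(i,j)\leq 1$, the double sum $\sum_{\substack{i,j\in S\\ i\neq j}}K(i,j)$ is at most the number of its off-diagonal pairs, which is at most $|S|(|S|-1)$. Since by hypothesis $|S|\leq M$, we have $|S|(|S|-1)\leq |S|\,M$, so $\frac{\alpha}{M}\sum_{i\neq j}K(i,j)\leq\frac{\alpha}{M}\,|S|\,M=\alpha|S|\leq|S|$, the last step using $\alpha\in[0,1]$.

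Combining the two estimates gives $F(S)=\sum_{i\in S}(\ell_i+1)-\frac{\alpha}{M}\sum_{i\neq j}K(i,j)\geq|S|-|S|=0$, and the case $S=\emptyset$ with $F(\emptyset)=0$ is trivial. There is essentially no difficult step here; the only points requiring care are to invoke the cardinality constraint $|S|\leq M$ precisely when estimating the number of off-diagonal pairs, and to remember that the argument relies on the unscaled leverage scores (the scaled scores $\tfrac{m}{k}\|(\mat U_1)_{(i)}\|_2^2$ could exceed $1$ and break the reward bound). Together with Propositions~\ref{subm} and~\ref{mon}, this establishes that $F$ meets all the hypotheses of Theorem~\ref{thm:nemhauser} whenever the queried batch has size at most $M$, which is the regime in which Algorithm~\ref{alg:greedy2} is run.
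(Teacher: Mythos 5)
Your proof is correct and follows essentially the same approach as the paper's: bound the reward term from below using $\ell_i \geq 0$ and the penalty term from above using $0 \leq K(i,j) \leq 1$, $|S| \leq M$ and $\alpha \in [0,1]$. If anything, your bookkeeping is slightly more careful than the paper's, which replaces $\sum_{i\in S} 1 = |S|$ by $M$ in the reward term and so states an intermediate lower bound $F(S) \geq M(1-\alpha)+\alpha$ that is only literally valid when $|S| = M$; keeping $|S|$ throughout, as you do, yields the clean estimate $F(S) \geq |S| - \alpha |S| \geq 0$ uniformly over all $|S| \leq M$, and your remark that the argument requires the \emph{unscaled} leverage scores is exactly the point the paper makes when describing \texttt{DBALEVS}.
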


\begin{proof}
	For $F$ to be non-negative, the following statement should hold for sets with cardinality at most $M$:
	\begin{eqnarray} 
	& \forall S\subseteq V, F(S) \geq 0\nonumber
	\end{eqnarray}
	$F(S)$ is defined as follows:
	\begin{align*}
	F(S) & = \sum_{i\in S}(\ell_i+1) - \frac{\alpha}{M}\sum_{\substack{i,j \in S \\ i \neq j}}K(i,j)  \\ 
	& =\sum_{i\in S}\ell_i + M - \frac{\alpha}{M}\sum_{\substack{i,j \in S \\ i \neq j}}K(i,j) \\
	& \geq 0 + M - \frac{\alpha}{M}( M^2 - M)\\
	& \geq 0 + M - \alpha(M-1)\\
	& \geq M (1 - {\alpha} ) + \alpha
	\end{align*}
	Moving from equality to inequality (line 2 to 3), we use the facts that the minimum value  $l_i$ can take is zero, the maximum value of $K(i,j)$  is 1, and $|S| \le M$. Thus, the summation of the leverage scores is minimum 0 and  kernel terms can be at most $M^2 - M$, which is the number of elements in the matrix excluding the diagonals. 
	Since $\alpha \in [0,1]$, $(1 - {\alpha} ) \geq 0$; thereby, $F(S) \geq 0 $. This completes the proof that $F(S)$ is non-negative when the chosen set size is bounded with $M$.
\end{proof}

\subsection{Datasets}
\label{app:datasets}
The following datasetsets are used in the ALEVS experiments.
The \textit{digit1}, \textit{g241c}, \textit{USPS} datasets are from\footnote{\url{http://olivier.chapelle.cc/ssl-book/benchmarks.html}} \cite{chapelle2010semi}. The \textit{spambase} and \textit{letter} datasets are obtained from \cite{Lichman:2013}. The \textit{letter} dataset is a multi-class dataset; we select a letter pair that are difficult to distinguish: \textit{UvsV}. Similarly, we sample 3 and 5 digits from the \textit{MNIST} dataset as \textit{3vs5}, since they are one of the most confused pairs in the \textit{MNIST} dataset \cite{lecun1998gradient}, obtained from\footnote{\url{http://yann.lecun.com/exdb/mnist/}}. Finally, \textit{twonorm} and \textit{ringnorm} are culled from\footnote{\url{http://www.cs.toronto.edu/~delve/data/twonorm/desc.html}} and\footnote{\url{http://www.cs.toronto.edu/~delve/data/ringnorm/desc.html}} which are implementations of \cite{breiman1996bias}. We use a random subsample of 2000 examples for \textit{ringnorm}, \textit{twonorm}, \textit{spambase}, and \textit{3vs5} because the running time for \texttt{QUIRE} is prohibitively long. The description of these datasets are given in Table~\ref{tbl:datasets}.

The following datasetsets are used in the DBALEVS experiments. We used the \textit{autos}, \textit{hardware} and \textit{sport} tasks in the 20-newsgroups dataset\footnote{\url{http://qwone.com/~jason/20Newsgroups/}}. These subtopics that are picked because they are harder to differentiate. \textit{autos} involves classification of \textit{rec.autos} and \textit{rec.motorcycles} topics. \textit{hardware} involves classifying \textit{comp.sys.ibm.pc.hardware} and \textit{comp.sys.m\\ac.hardware} topics and lastly the \textit{sport} dataset involves classification of \textit{rec.spor\\t.baseball} and \textit{rec.sport.hockey} topics. We use a bag-of-words representation for features in these datasets. Similarly, 3-5 and 4-9 digit pairs from the \textit{MNIST} dataset \cite{lecun1998gradient} are sampled to create the \textit{3vs5} and \textit{4vs9} classification tasks. Finally, \textit{ringnorm} is culled from\footnote{\url{http://www.cs.toronto.edu/~delve/data/ringnorm/desc.html}} which is an implementation of \cite{breiman1996bias}. The description of these datasets and the parameters chosen for each of the dataset are listed in Table~\ref{tbl:datasets-batch}. 

\subsection{Effect of target rank k}

\newcommand{\newWk}{0.48}
\begin{figure*}[!h]
	\centering
	\begin{subfigure}[b]{\newWk\textwidth}
		\caption{\textit{digit1}}
		\vspace{\newSpace}
		\includegraphics[width=\textwidth]{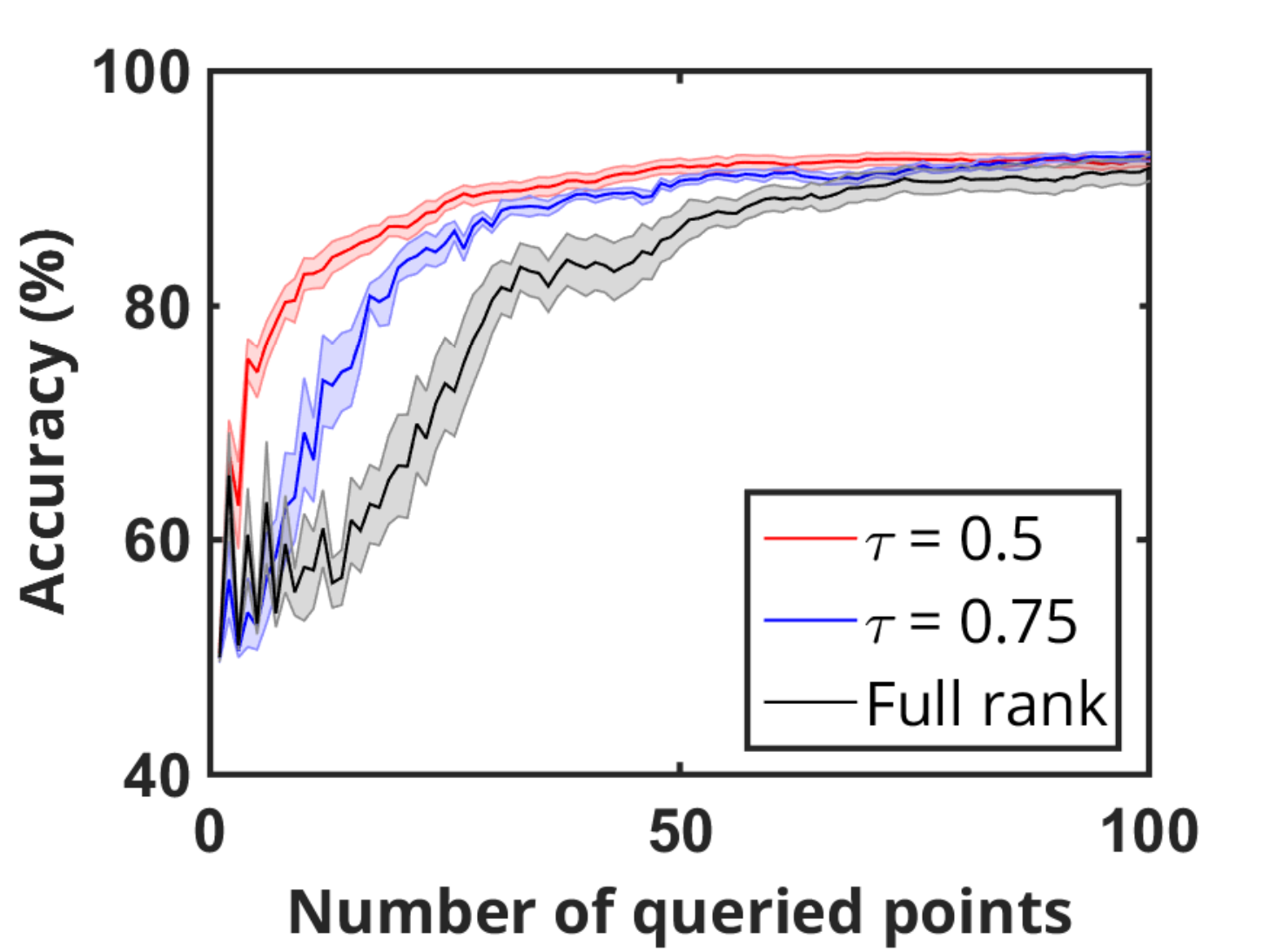}
		\label{digit1_k}
	\end{subfigure}
	~
	\begin{subfigure}[b]{\newWk\textwidth}
		\caption{\textit{twonorm}}
		\vspace{\newSpace}
		\includegraphics[width=\textwidth]{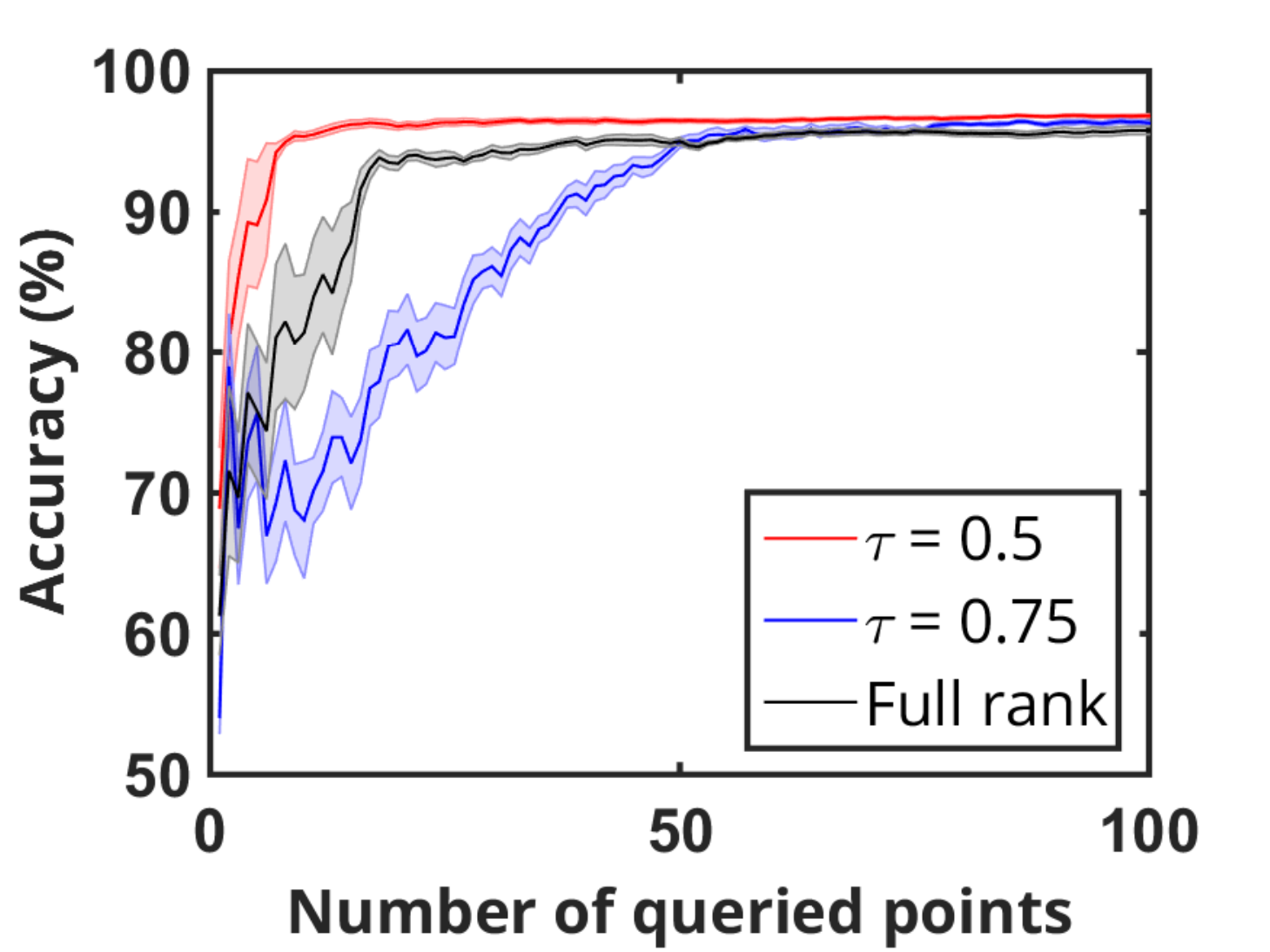}
		\label{twonorm_k}
	\end{subfigure}
	\begin{subfigure}[b]{\newWk\textwidth}
		\caption{\textit{ringnorm}}
		\vspace{\newSpace}
		\includegraphics[width=\textwidth]{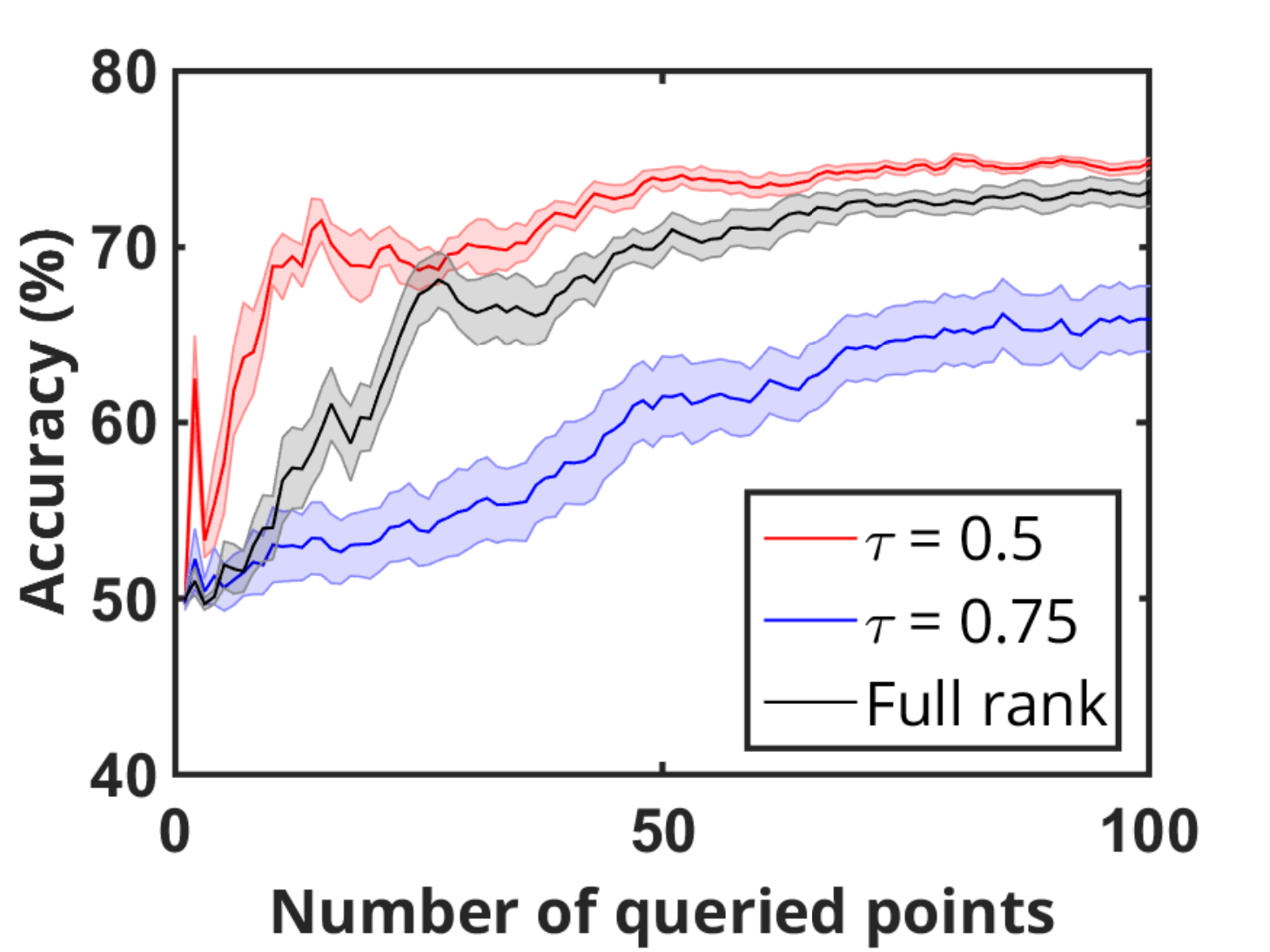}
		\label{ringnorm_k}
	\end{subfigure}
	~ 
	\begin{subfigure}[b]{\newWk\textwidth}
		\caption{\textit{3vs5}}
		\vspace{\newSpace}
		\includegraphics[width=\textwidth]{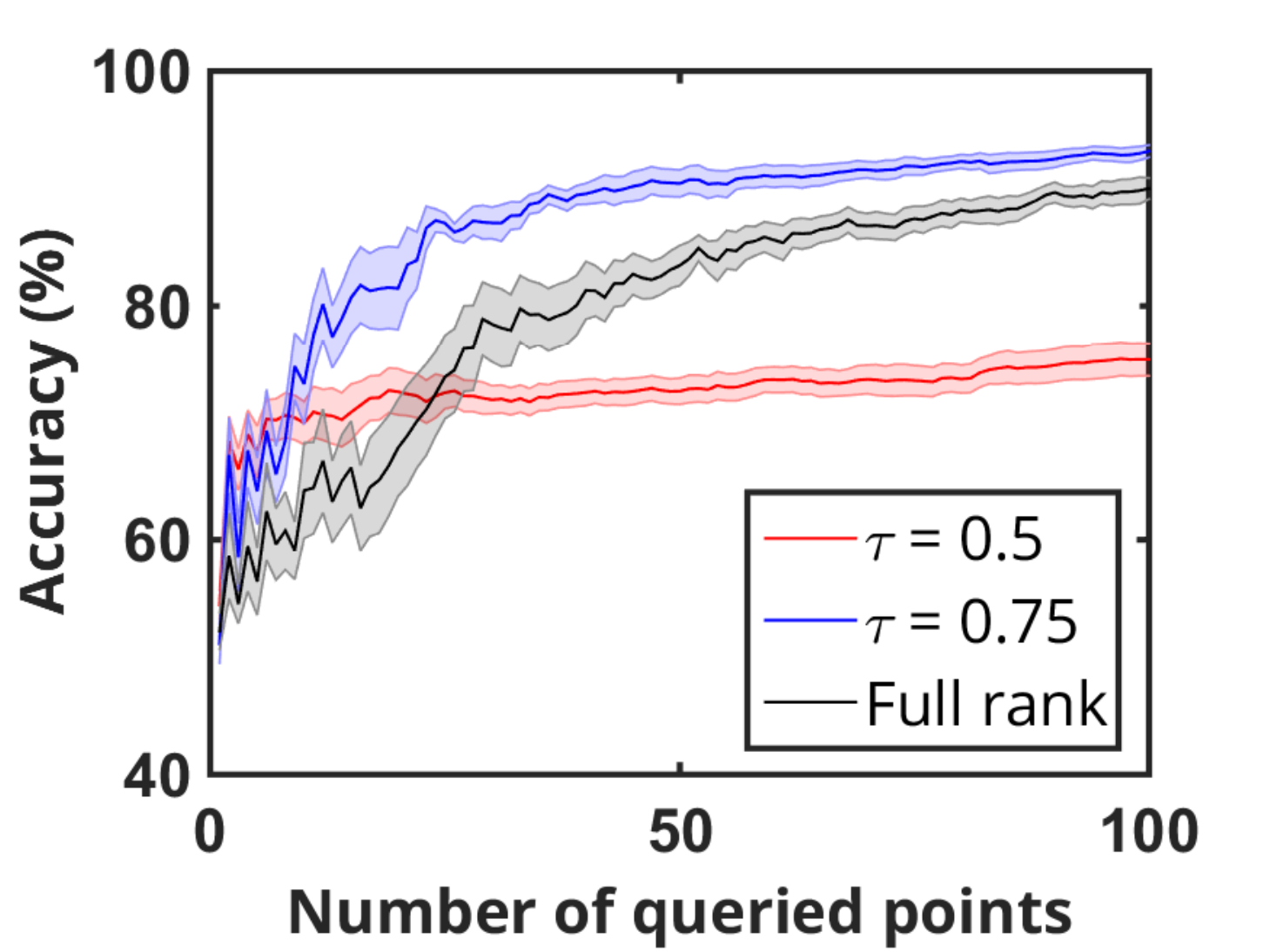}
		\label{MNIST_k}
	\end{subfigure}
	\caption{Effect of target rank $k$ selected by threshold $\tau$ on test set accuracy for \texttt{ALEVS} (Sequential-mode).}
	\label{figs:k}

\end{figure*}

One parameter that has a large impact on the performance of \texttt{ALEVS} is the target low-rank parameter $k$. In this work, we adaptively select the value of $k$ for negative and positive kernel matrices at each iteration by setting a threshold on the variance for top-$k$ dimensional eigenspace as described in \texttt{RankSelector} algorithm. We further analyze the effect of $\tau$ by varying these thresholds; experimented on four datasets with three different $\tau$ values. Accuracies shown in Fig.~\ref{figs:k} are averages computed over 10 random experiments. Selecting the full rank option for computing leverage scores does not necessarily provide the best performance. The low-rank representation acts as a regularizer and focuses on the core dimensions that matter in the datasets. For the datasets \textit{digit1}, \textit{twonorm}, \textit{ringnorm}, $\tau=0.5$ works best, whereas for \textit{3vs5} threshold value 0.75 is a better choice and $\tau=0.5$ is the worst choice. This difference is expected, as the eigenvalue spectra of the matrices are different.

\subsection{ALEVS runtime performance}


\begin{figure*}[p]
	\centering
	\begin{subfigure}[b]{\newW\textwidth}
		\caption{\textit{digit1}}
		\vspace{\newSpace}
		\includegraphics[width=\textwidth]{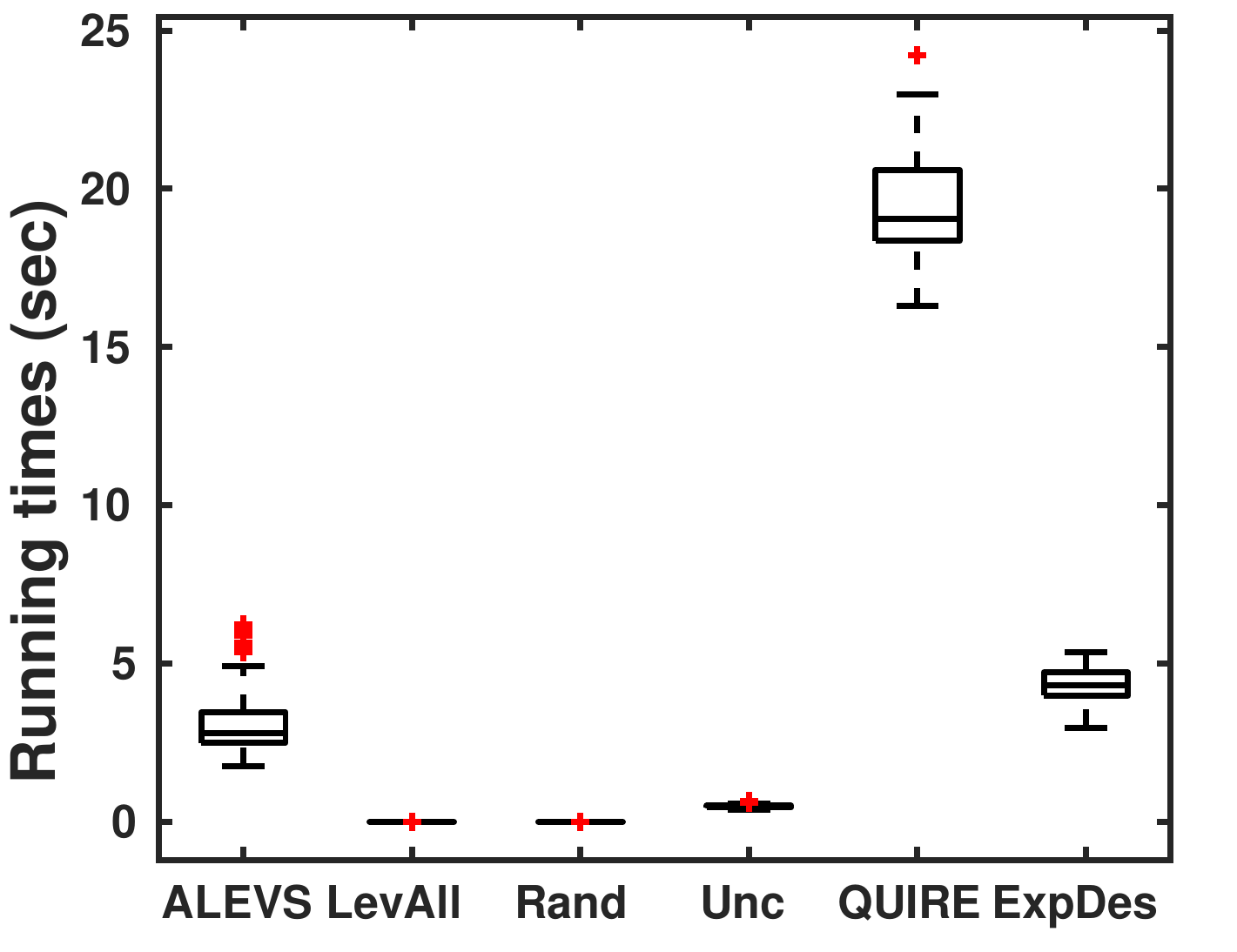}
		\label{digit1_timebox}
	\end{subfigure}
	~
	\begin{subfigure}[b]{\newW\textwidth}
		\caption{\textit{3vs5}}
		\vspace{\newSpace}
		\includegraphics[width=\textwidth]{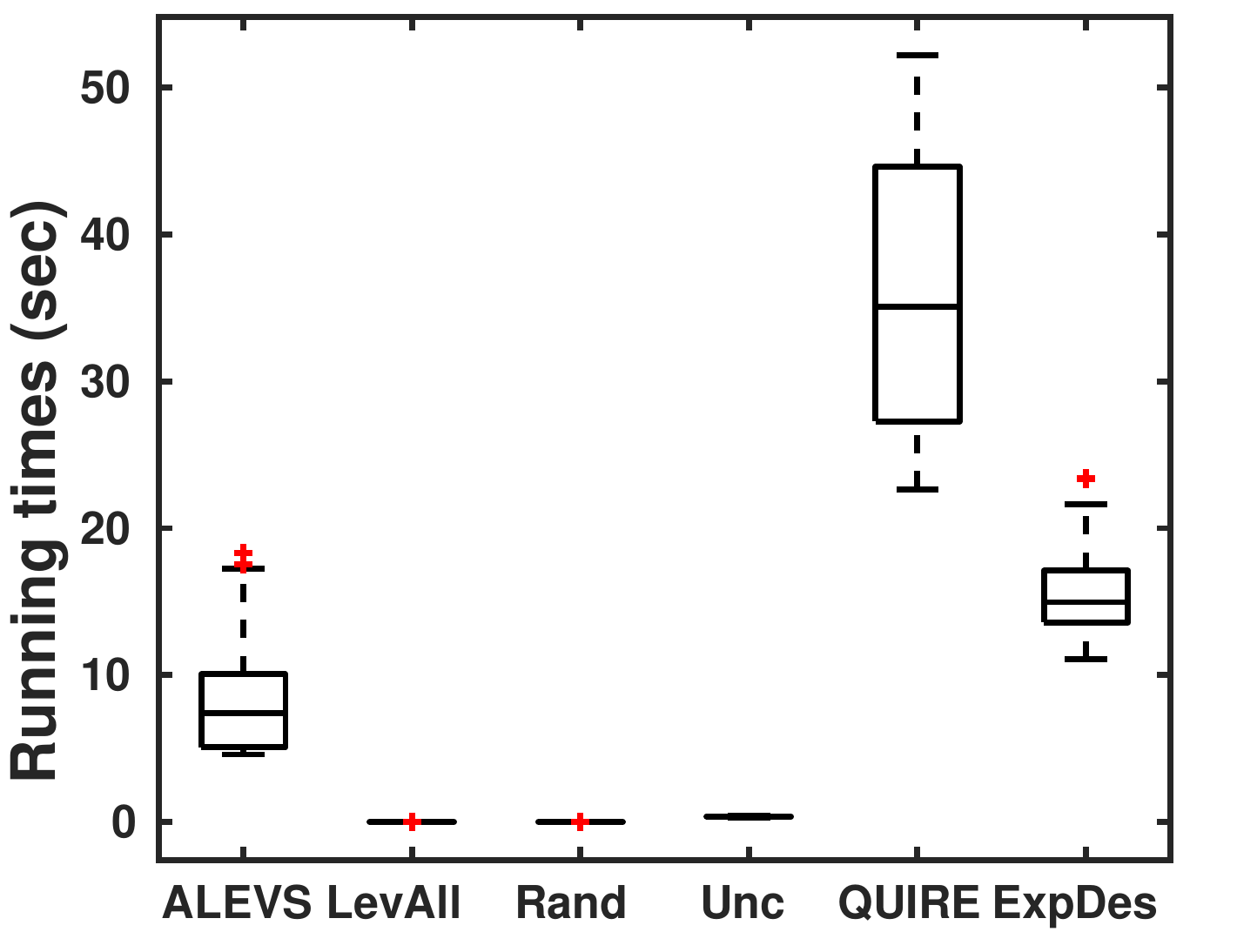}
		\label{MNIST_timebox}
	\end{subfigure}
	\vskip \newSkip
	\begin{subfigure}[b]{\newW\textwidth}
		\caption{\textit{g241c}}
		\vspace{\newSpace}
		\includegraphics[width=\textwidth]{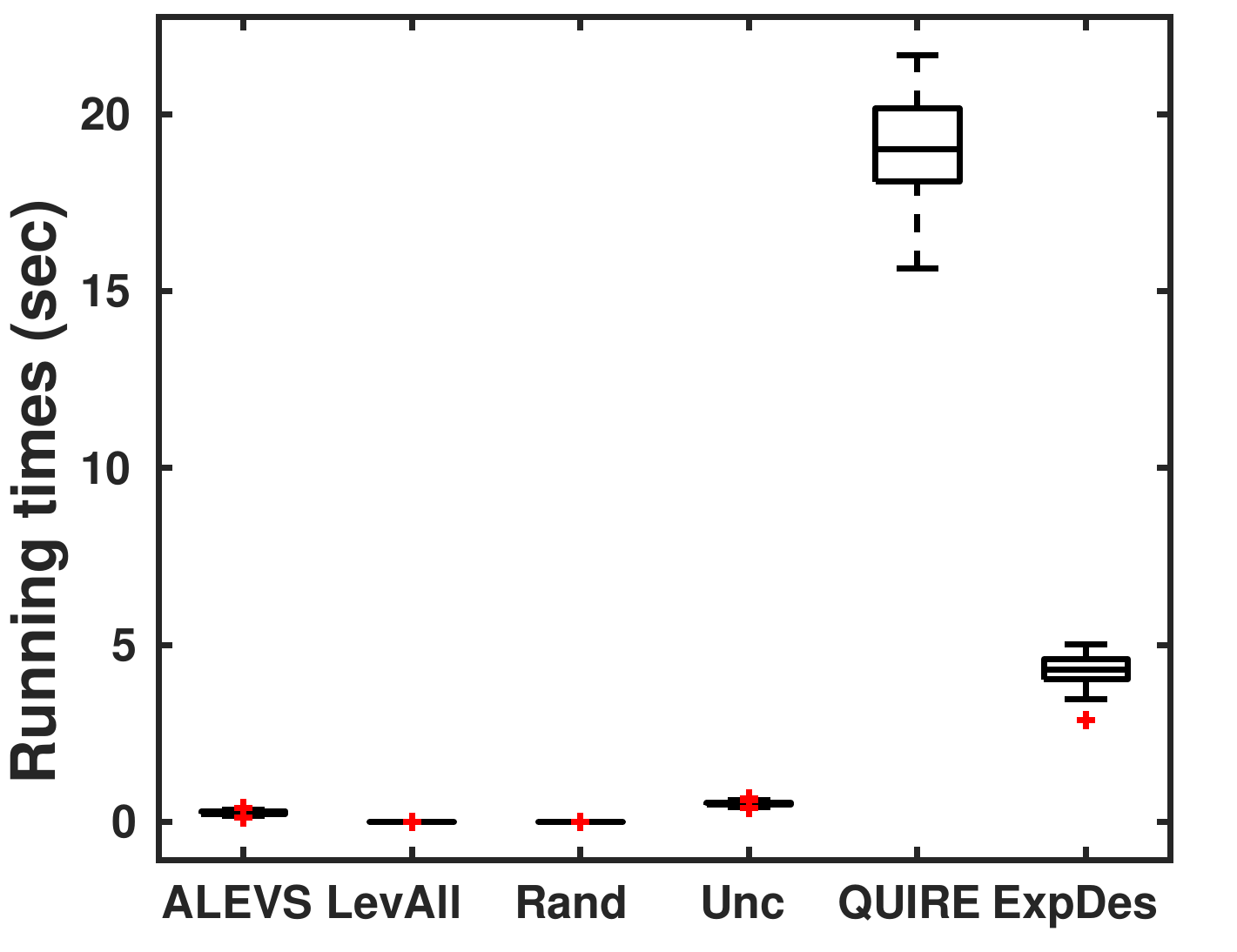}
		\label{g241c_timebox}
	\end{subfigure}
	~
	\begin{subfigure}[b]{\newW\textwidth}
		\caption{\textit{UvsV}}
		\vspace{\newSpace}
		\includegraphics[width=\textwidth]{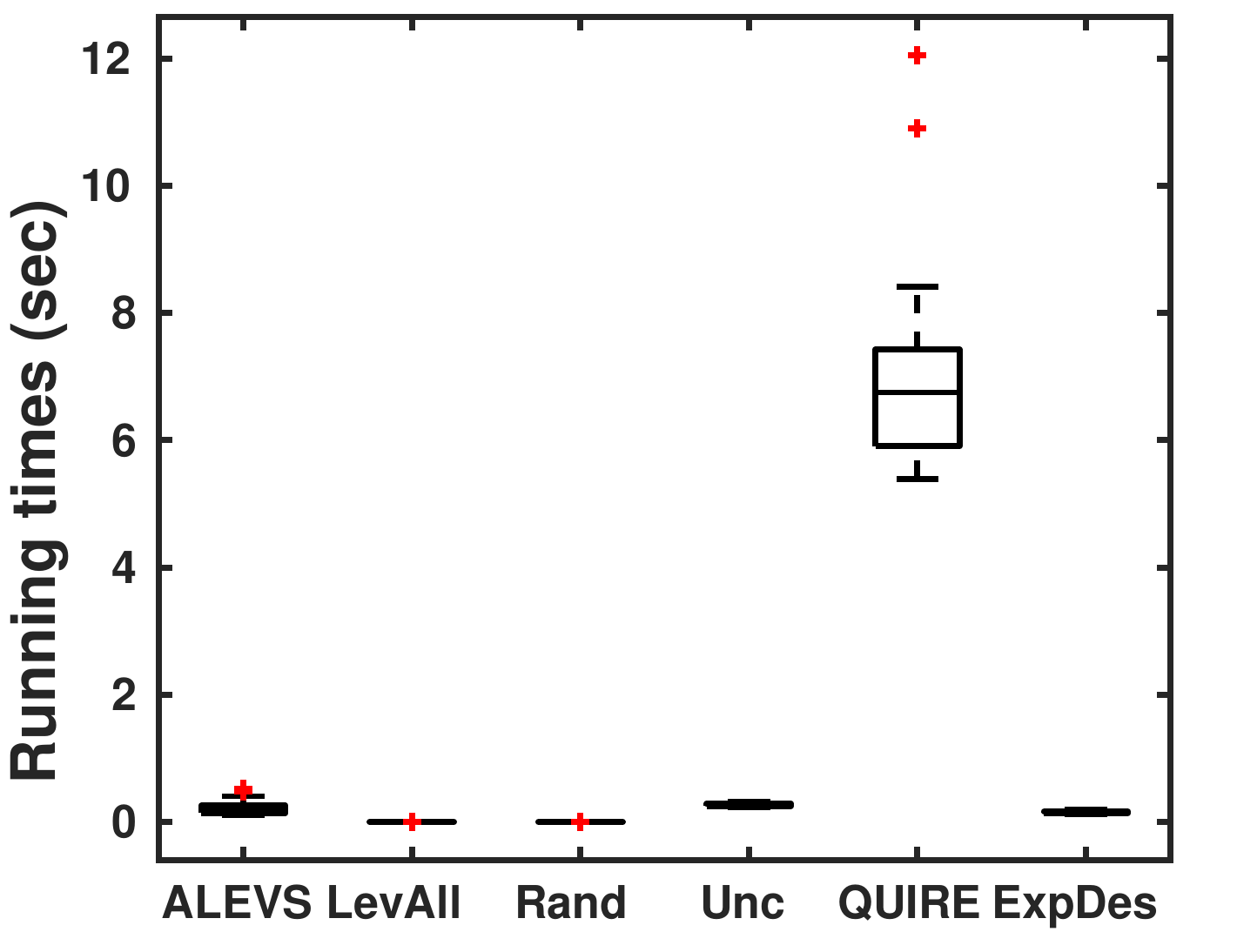}
		\label{letter2_timebox}
	\end{subfigure}
	\vskip \newSkip
	\begin{subfigure}[b]{\newW\textwidth}
		\caption{\textit{USPS}}
		\vspace{\newSpace}
		\includegraphics[width=\textwidth]{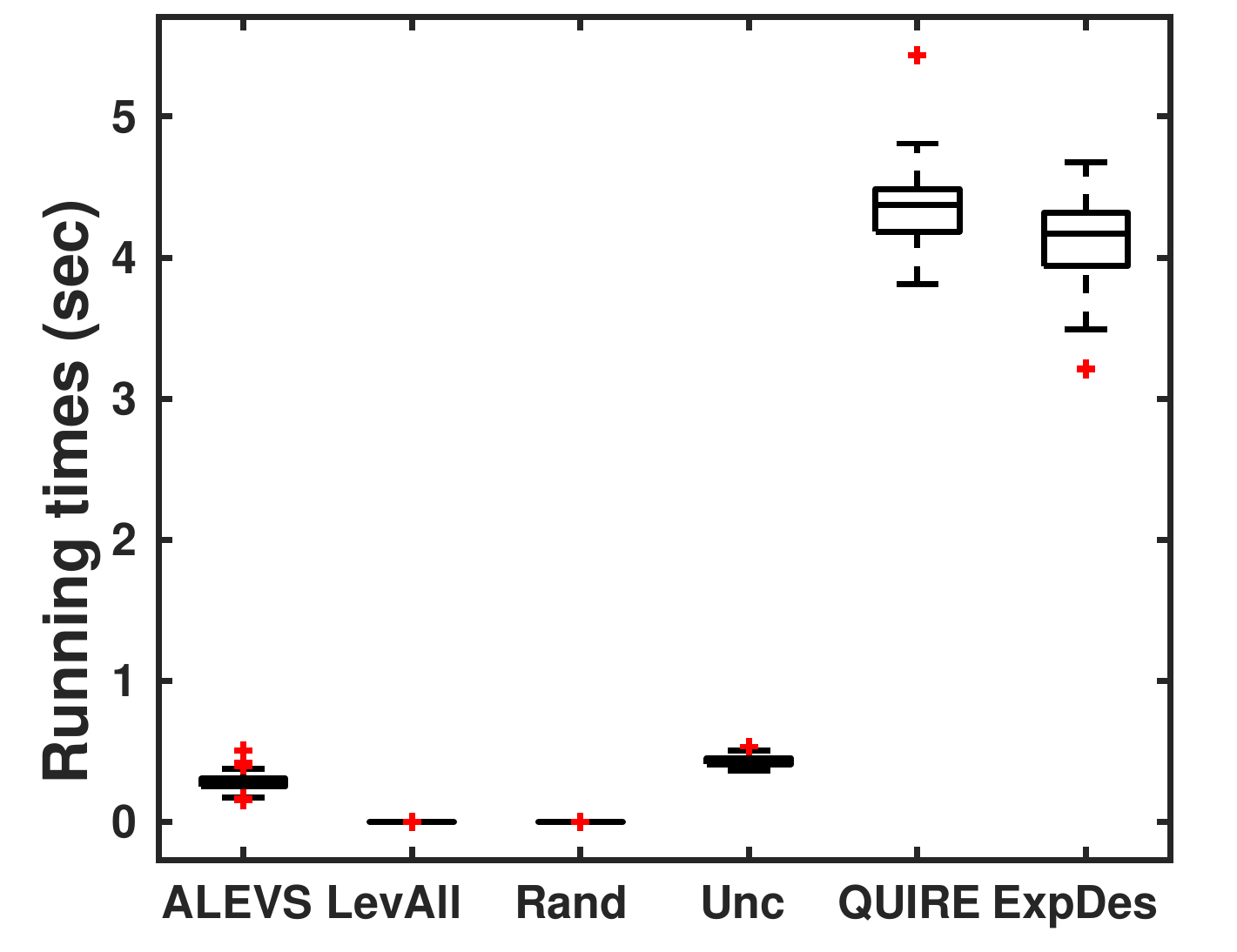}
		\label{USPS_timebox}
	\end{subfigure}
	~
	\begin{subfigure}[b]{\newW\textwidth}
		\caption{\textit{twonorm}}
		\vspace{\newSpace}
		\includegraphics[width=\textwidth]{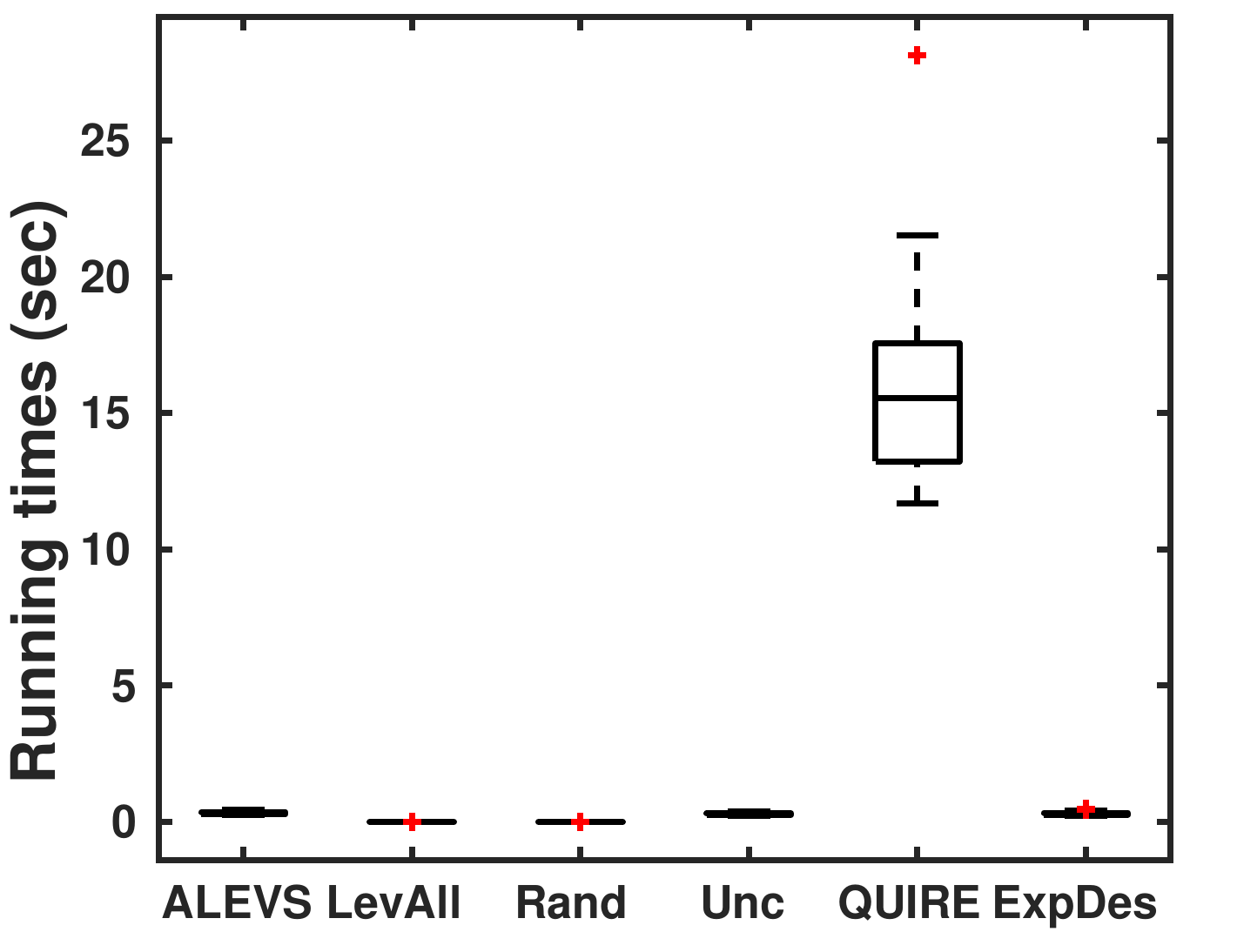}
		\label{twonorm_timebox}
	\end{subfigure}
	\vskip \newSkip
	\begin{subfigure}[b]{\newW\textwidth}
		\caption{\textit{ringnorm}}
		\vspace{\newSpace}
		\includegraphics[width=\textwidth]{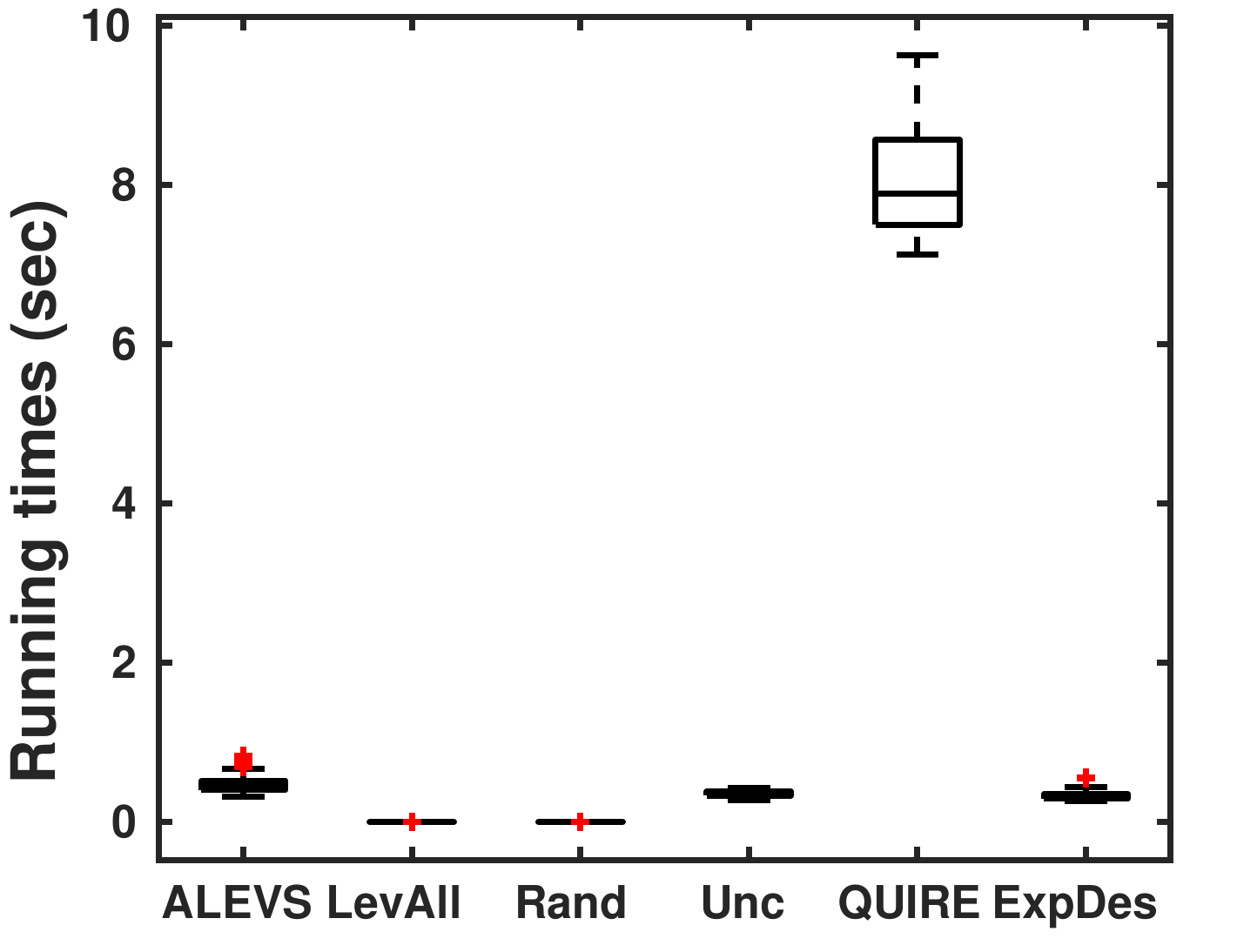}
		\label{ringnorm_timebox}
	\end{subfigure}
	~
	\begin{subfigure}[b]{\newW\textwidth}
		\caption{\textit{spambase}}
		\vspace{\newSpace}
		\includegraphics[width=\textwidth]{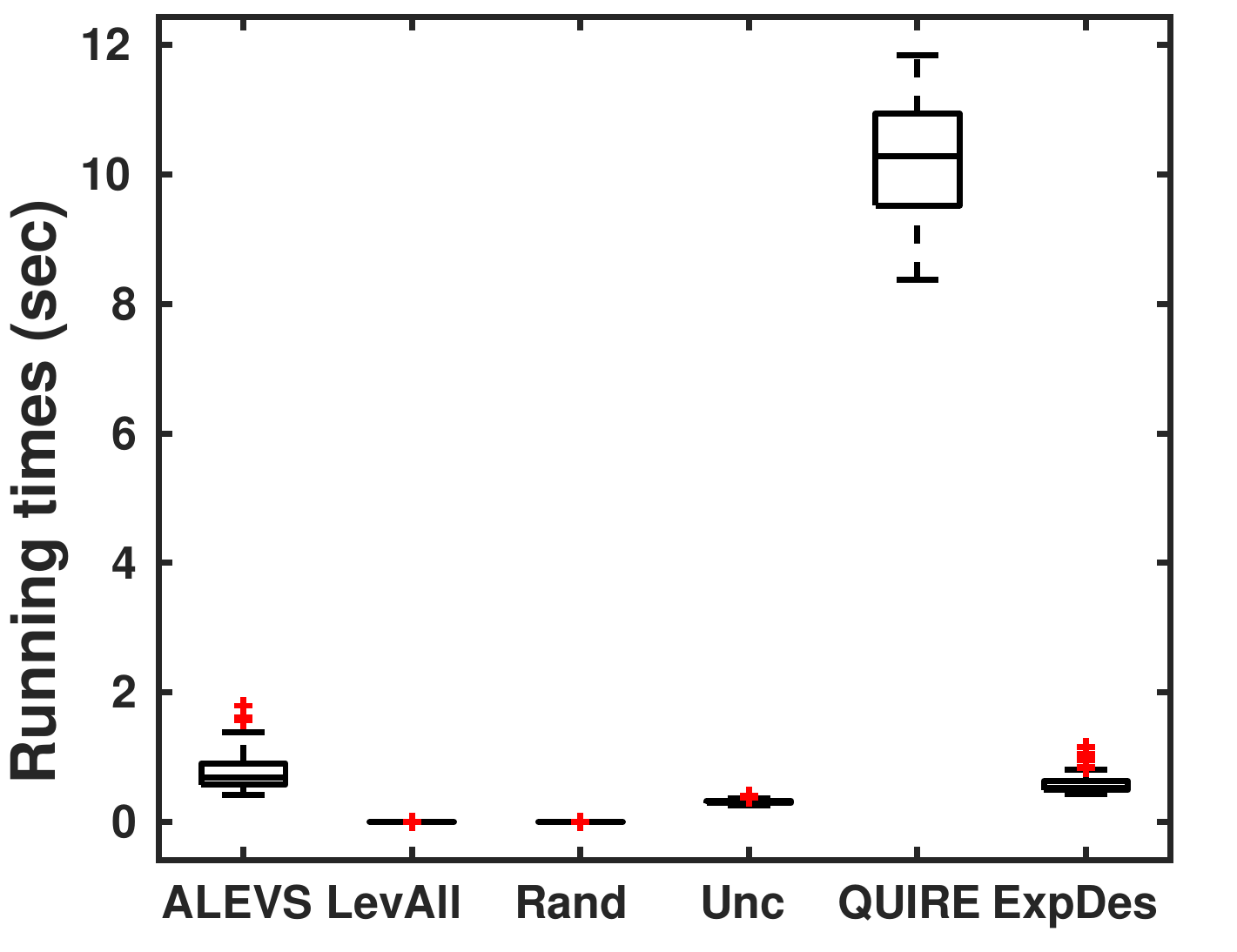}
		\label{spambase_timebox}
	\end{subfigure}
	\caption{Comparison of \texttt{ALEVS} with other methods on running times (Sequential-mode).}
	\label{figs:time-seq}
\end{figure*}
We performed the experiments in Matlab on a computer with 2.6 GHz CPU (24-core) and 64 GB of memory running Ubuntu 14.04 LTS operating system. ALEVS is as fast as almost uncertainity sampling.

\subsection{DBALEVS runtime performance}
\begin{figure*}[p]
	\centering
	\vskip \newSkip
	\begin{subfigure}[b]{\newW\textwidth}
		\caption{\textit{autos}}
		\vspace{\newSpace}
		\includegraphics[width=\textwidth]{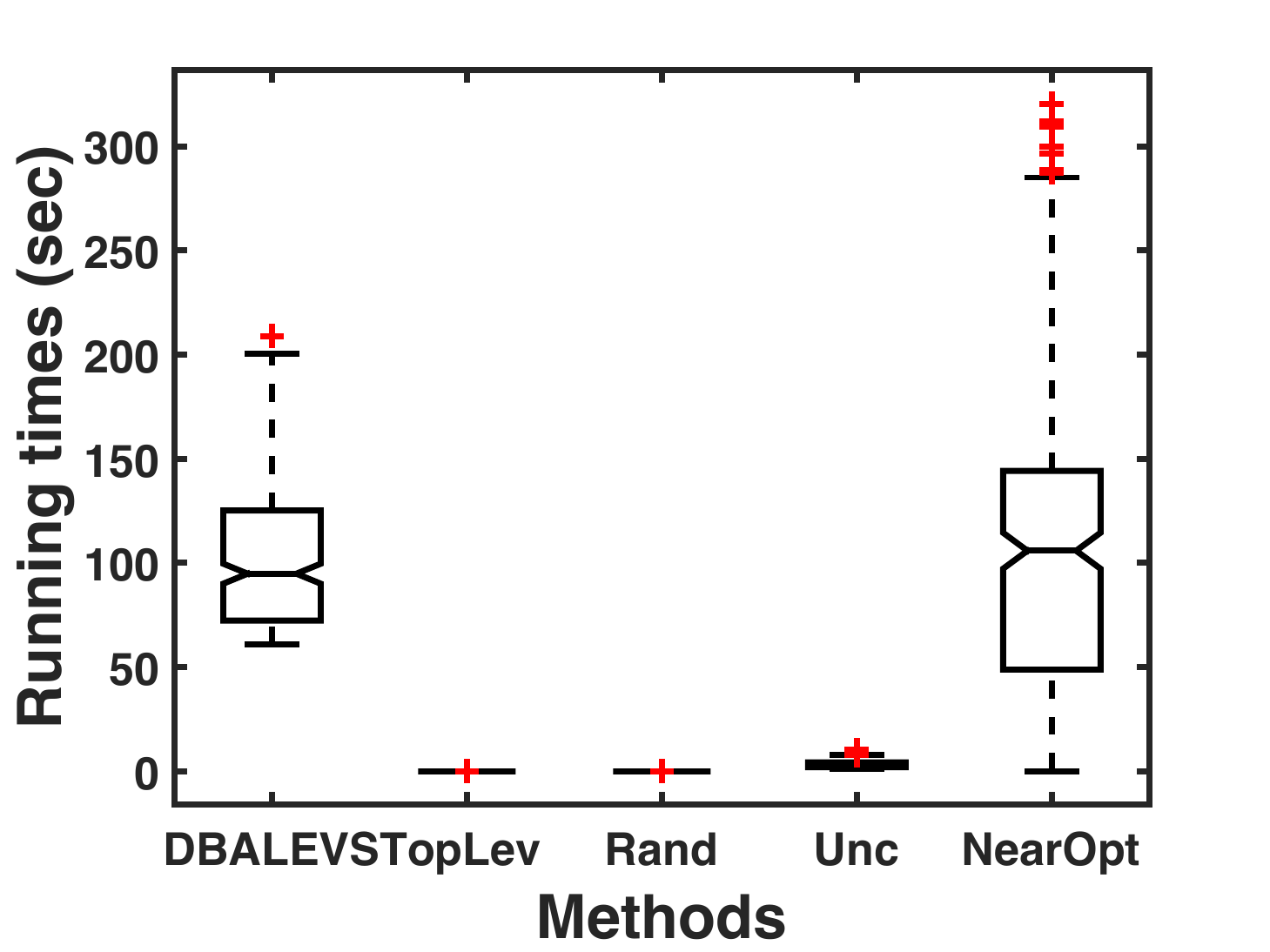}
		\label{autos_time}
	\end{subfigure}
	~
	\begin{subfigure}[b]{\newW\textwidth}
		\caption{\textit{hardware}}
		\vspace{\newSpace}
		\includegraphics[width=\textwidth]{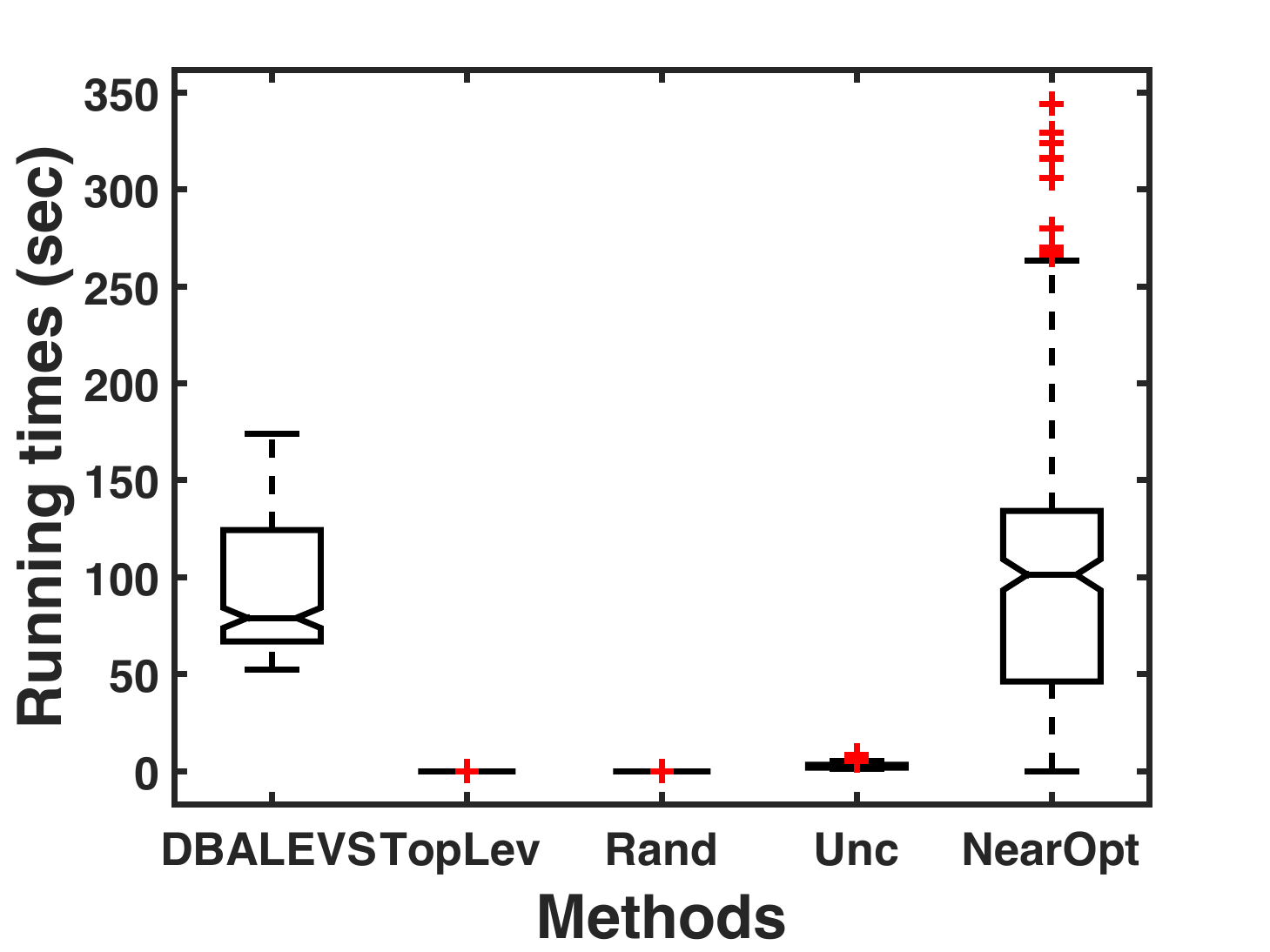}
		\label{hardware_time}
	\end{subfigure}
	\vskip \newSkip
	\begin{subfigure}[b]{\newW\textwidth}
		\caption{\textit{sport}}
		\vspace{\newSpace}
		\includegraphics[width=\textwidth]{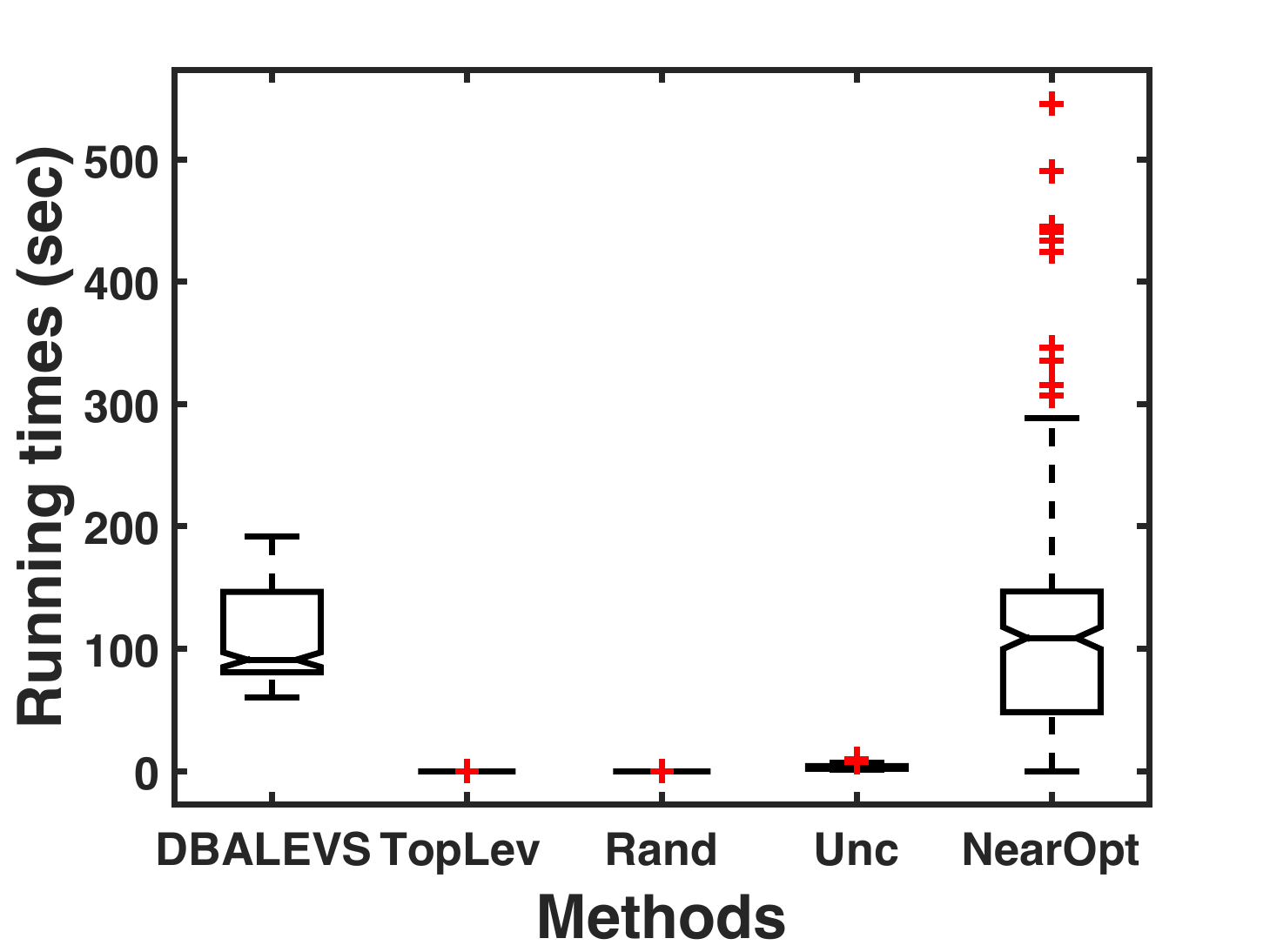}
		\label{sport_time}
	\end{subfigure}
	~
	\begin{subfigure}[b]{\newW\textwidth}
		\caption{\textit{ringnorm}}
		\vspace{\newSpace}
		\includegraphics[width=\textwidth]{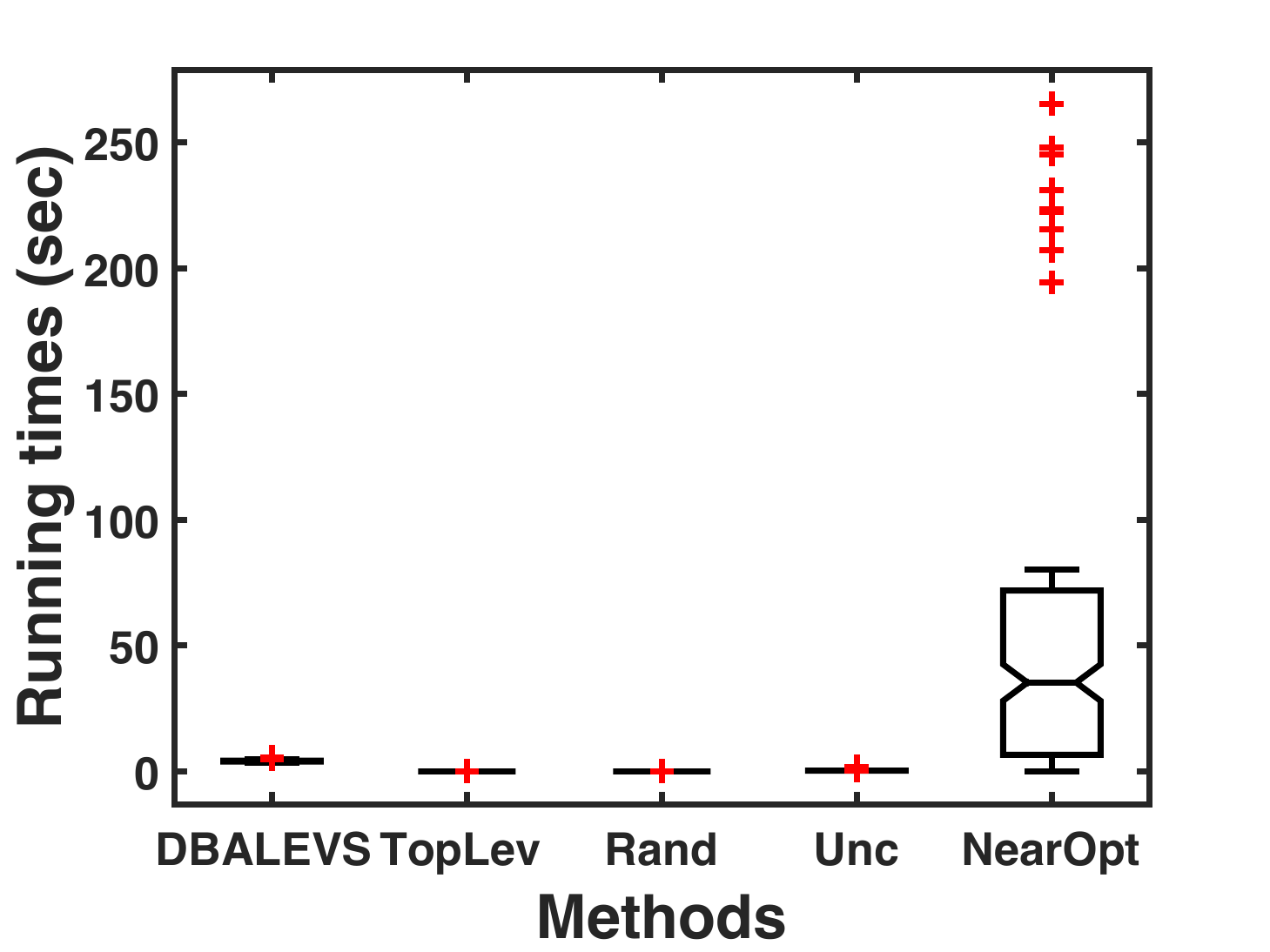}
		\label{rng_time}
	\end{subfigure}
	\vskip \newSkip
	\begin{subfigure}[b]{\newW\textwidth}
		\caption{\textit{3vs5}}
		\vspace{\newSpace}
		\includegraphics[width=\textwidth]{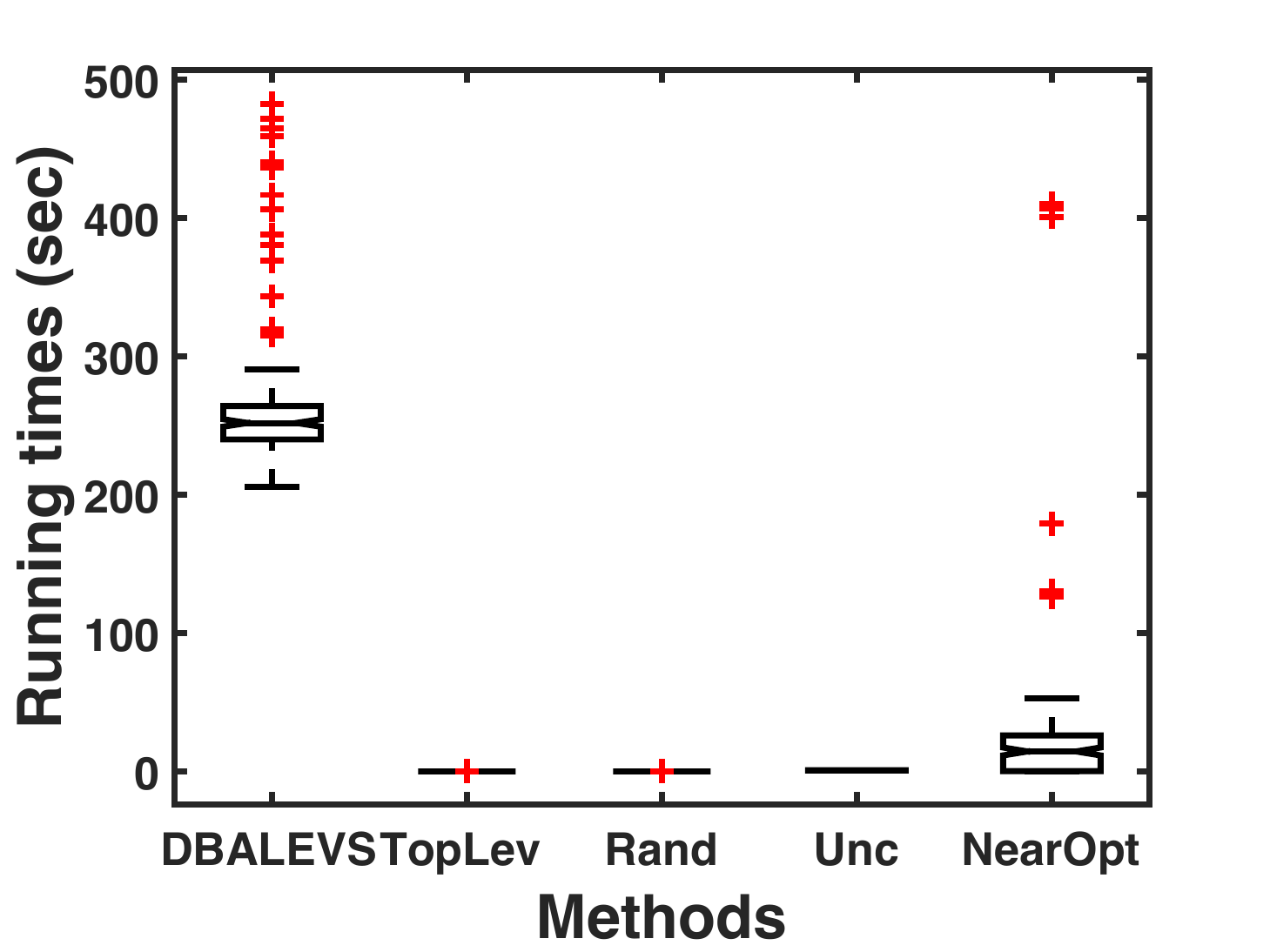}
		\label{3vs5_time}
	\end{subfigure}
	~
	\begin{subfigure}[b]{\newW\textwidth}
		\caption{\textit{4vs9}}
		\vspace{\newSpace}
		\includegraphics[width=\textwidth]{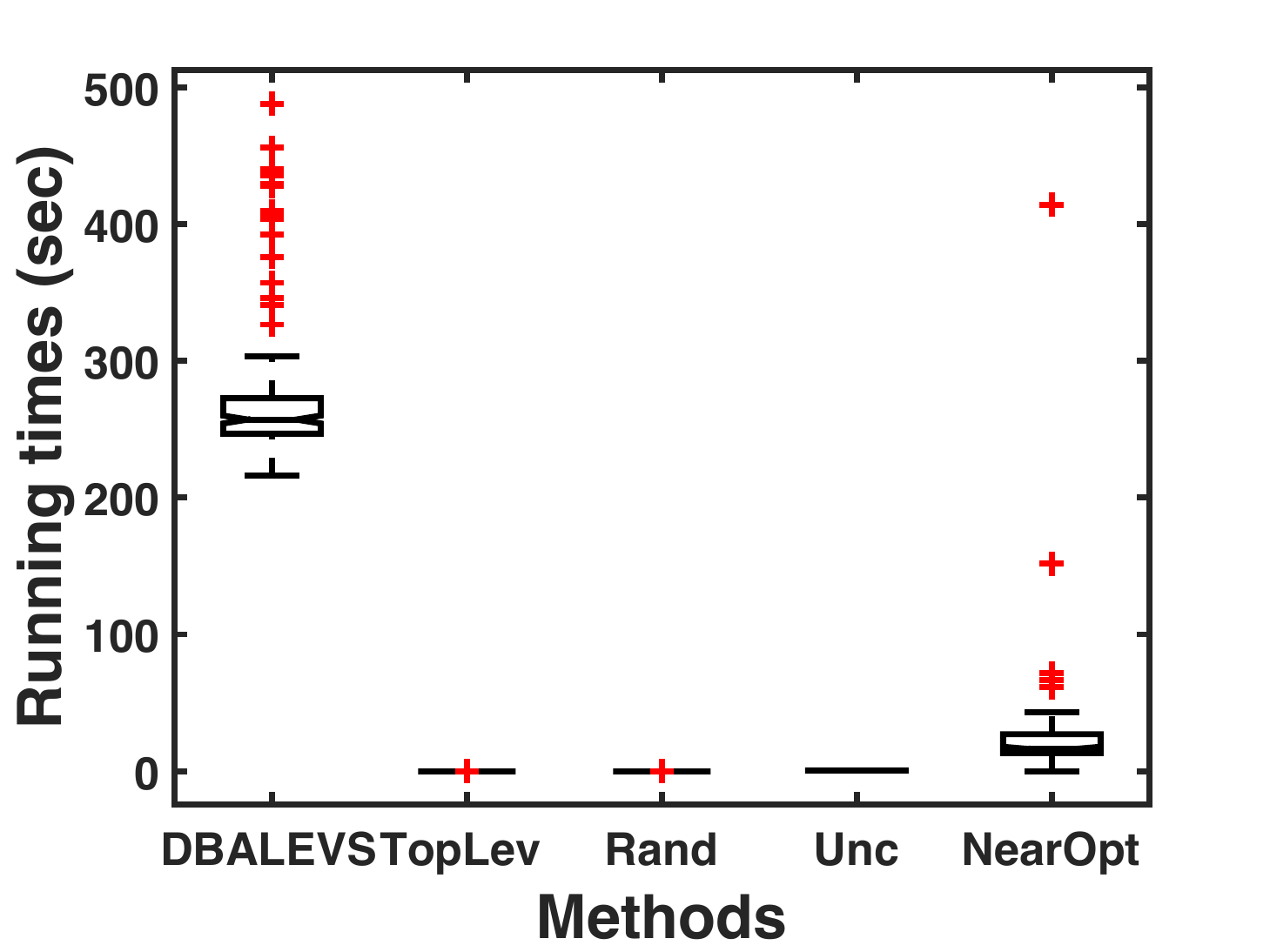}
		\label{4vs9_time}
	\end{subfigure}
	\caption{Comparison of \texttt{DBALEVS} with other methods on runtimes (Batch mode).}
	\label{figs:time-batch}
\end{figure*}

The querying step of \texttt{DBALEVS} involves the calculation of eigenvalue decomposition of the kernel matrices and the greedy maximization procedure.  Fig.~\ref{figs:time-batch} displays the average CPU times for selecting a batch in a single iteration from the unlabeled data pool. DBALEVS have comparable runtimes with the Near-Opt method. Experiments are conducted in Matlab on a computer with 2.6 GHz CPU (24-core) and 64 GB of memory running Ubuntu 14.04 LTS operating system.


\end{document}